\documentclass[11pt]{article}

\usepackage[a4paper,margin=1in]{geometry}
\usepackage[numbers,sort&compress]{natbib}
\usepackage{microtype}
\usepackage{amsmath,amssymb,amsfonts,amsthm,mathrsfs,mathtools}
\usepackage{algorithm}
\usepackage{algpseudocode}
\usepackage{booktabs}
\usepackage{multirow}
\usepackage{array}
\usepackage{longtable}
\usepackage{graphicx}
\usepackage{subcaption}
\usepackage[svgnames]{xcolor}
\usepackage{enumitem}
\usepackage[section]{placeins}
\usepackage{needspace}
\usepackage{siunitx}
\usepackage{float}
\usepackage[hypertexnames=false]{hyperref}

\usepackage[capitalize,nameinlink]{cleveref}

\graphicspath{{figures/}}
\allowdisplaybreaks[4]
\numberwithin{equation}{section}
\newtheorem{theorem}{Theorem}[section]

\newtheorem{proposition}[theorem]{Proposition}
\newtheorem{lemma}[theorem]{Lemma}
\newtheorem{corollary}[theorem]{Corollary}
\theoremstyle{definition}

\theoremstyle{remark}
\newtheorem{remark}[theorem]{Remark}

\newcolumntype{L}[1]{>{\raggedright\arraybackslash}p{#1}}
\newcommand{\R}{{\mathbb R}}

\renewcommand{\leq}{\leqslant}
\renewcommand{\geq}{\geqslant}
\def\epsilon{\varepsilon}
\def\phi{\varphi}
\def\supp{\operatorname{supp}}

\Crefname{figure}{Figure}{Figures}
\crefformat{equation}{\textup{#2(#1)#3}}
\crefdefaultlabelformat{#2\textup{#1}#3}
\setlist[enumerate]{leftmargin=.35in}
\setlist[itemize]{leftmargin=.35in}
\title{Softmax Routing Dynamics in Mixture-of-Experts Models:\\Collapse and Recovery}
\author{
Shengze Xu$^{1}$ \quad
Maolin Deng$^{1}$ \quad
Tieyong Zeng$^{2,3}$\thanks{Corresponding author: \texttt{tieyongzeng@bnbu.edu.cn}}\\[0.5em]
$^{1}$The Chinese University of Hong Kong\\
$^{2}$Beijing Normal-Hong Kong Baptist University \quad
$^{3}$Guangzhou Nanfang College
}
\date{}
\hypersetup{
  colorlinks=true,
  citecolor=DarkGreen,
  linkcolor=DarkRed,
  urlcolor=DarkBlue,
  pdftitle={Softmax Routing Dynamics in Mixture-of-Experts Models: Collapse and Recovery},
  pdfauthor={Shengze Xu; Maolin Deng; Tieyong Zeng},
  pdfsubject={Geometry, time scales, and recovery mechanisms of softmax routing in mixture-of-experts models}
}

\begin{document}
\maketitle

\begin{abstract}
Softmax routing in mixture-of-experts models can concentrate probability on a subset of experts, suppressing access to the others. We study how this concentration forms, why it persists, and how suppressed probabilities recover. For fixed experts, a stationary-loss barrier identifies the face supporting the optimal routed output. In a common-target square-loss model with freely trainable expert outputs, slow adaptation tracks fixed-expert dynamics on finite horizons, while every fixed positive adaptation rate yields an interior routing limit. An exact normal-velocity decomposition separates complementary-expert covariance from relative gradient advantage. Under positive two-sided power bounds along recovery paths, uniform over initial masses, the resulting coefficient's order of vanishing gives logarithmic or power-law recovery times. Integrations of constructed fields reproduce these orders. An exact discrete log-odds identity relates neural updates to threshold crossing. In paired Transformer-MoE training, removing a class-conditional router-gradient contrast delays confirmed recovery and reduces accumulated log-odds on fixed tokens, both with and without capacity constraints.

\end{abstract}

\noindent\textbf{Keywords:} mixture of experts, softmax routing, routing collapse, gradient flow, routing recovery

\section{Introduction}

Sparse mixture-of-experts (MoE) models increase capacity by activating only a subset of expert subnetworks for each input, enabling conditional computation at large scale
\cite{shazeer2017outrageously,lepikhin2020gshard,fedus2022switch,du2022glam}. A learned router assigns probabilities to experts, while top-$k$ selection and capacity constraints determine the executed assignments. During training, self-reinforcing routing preferences can concentrate assignments on a few experts, a form of routing collapse
\cite{shazeer2017outrageously,cheng2026emoe}.

We study \emph{routing concentration}: small pre-dispatch probability mass outside a proper coordinate face. Asymptotic concentration means that this mass tends to zero. Severe concentration suppresses access to some experts for a given input and constitutes the probability-space aspect of \emph{routing collapse}. Finite logits keep probabilities positive, so finite-time collapse appears as near-boundary concentration. We call the return of a suppressed probability to a prescribed positive threshold \emph{recovery}.

Load-balancing losses encourage balanced aggregate expert use
\cite{shazeer2017outrageously,fedus2022switch,zoph2022st}. Yet individual inputs can concentrate on different experts while maintaining balanced aggregate loads. Routing probabilities also evolve jointly with expert parameters and input representations. Persistent concentration and subsequent reactivation observed during MoE training
\cite{sha2026sparse}
therefore raise three questions: which simplex face is selected as routing concentrates? Why can concentration persist when experts are trainable? What determines the time required for a suppressed expert to recover?

Near a simplex boundary, a small probability can respond slowly even to a favorable gradient signal. Continuous-time gradient flow on softmax logits makes this effect explicit. For a routing vector $p$, let $J(p)=\operatorname{diag}(p)-pp^\top$ be the softmax Jacobian and $g$ the probability gradient of the half-loss. Euclidean logit-gradient flow and its probability-space form are
\[
\dot q=-J(p)g,\qquad \dot p=-J(p)^2g.
\]
At $p_j=0$, the $j$th row of $J(p)$ vanishes. Under bounded gradient signals, normal probability motion therefore vanishes as $p_j\downarrow0$; we call this \emph{boundary degeneracy}. It is a property of the vector field, whereas concentration describes the routing state. The direction and order of normal motion depend on the complementary routing state and gradient field. Related Jacobian structures appear in value--softmax dynamics
\cite{varre2026gradient}; here we characterize how this geometry governs routing concentration, persistence, and recovery.

\paragraph{Main results}
For fixed experts, the optimal routed output is the target's projection onto the expert convex hull. Below the smallest suboptimal stationary loss, every limit point represents this projection and is supported on experts in the minimal hull face containing it. Geometric separation gives bounds on outside-face mass and neighborhood entry times (\Cref{sec:theory}).

Expert adaptation introduces a second time scale. As the expert-update rate tends to zero, the joint flow approaches the fixed-expert trajectory on each fixed finite horizon. For every fixed positive rate in the common-target square-loss model with freely trainable outputs, however, residual decay makes the logits converge to finite limits and routing to an interior point. Thus concentration can persist over finite horizons even though the long-time limit is interior (\Cref{sec:coupled-main}).

Our main recovery result separates complementary-expert covariance from relative gradient advantage in an exact normal-velocity decomposition (\Cref{sec:boundary-recovery}). For a suppressed probability $x=p_j$, it gives $\dot x=x(1-x)^2\Gamma_j$. The order of vanishing of $\Gamma_j$ governs the recovery time $T_\theta(\epsilon)$ from initial mass $\epsilon$ to a fixed threshold $\theta$. For fixed $\alpha\geq0$, positive two-sided comparisons along the entire recovery path, with constants independent of $\epsilon$, yield
\begin{equation*}
\Gamma_j\asymp x^\alpha
\quad\Longrightarrow\quad
T_\theta(\epsilon)\asymp
\begin{cases}
\log(1/\epsilon),&\alpha=0,\\
\epsilon^{-\alpha},&\alpha>0,
\end{cases}
\qquad \epsilon\downarrow0.
\end{equation*}
For two experts under the same advantage signal bounded above and away from zero, squared-Jacobian, single-Jacobian, and Euclidean projected flows yield inverse-probability, logarithmic, and bounded recovery times, respectively.

For input-conditioned routers, an affine separation criterion describes initial routing differences, and an exact discrete log-odds identity relates threshold crossing to individual neural updates (\Cref{sec:neural-bridge}). Paired Transformer-MoE interventions show that removing a class-conditional router-gradient contrast delays confirmed recovery and reduces accumulated log-odds on fixed tokens, both with and without capacity constraints (\Cref{sec:experiments}).

Code is available at \url{https://github.com/shengze-xu/softmax-routing-dynamics}.

\Needspace{5\baselineskip}

\subsection{Related work}
\label{sec:related-work}

\paragraph{Routing, allocation, and statistical recovery}
Learned gating originated in adaptive and hierarchical mixtures
\cite{jacobs1991adaptive,jordan1994hierarchical}. Modern methods study expert assignment, balanced utilization, expert choice, and differentiable dispatch
\cite{lewis2021base,zhou2022mixture,hazimeh2021dselect,puigcerver2024sparse,wang2025remoe}.
Expert-wise biases control aggregate loads
\cite{wang2024auxiliary}, and assignment dynamics admit primal--dual formulations
\cite{han2025theoretical}. Load balance, input specialization, and token-level route quality are distinct allocation properties
\cite{guo2026advancing,wang2026myth,yoon2026experts}; assignments can become established early in pre-training
\cite{xue2024openmoe}. Statistical analyses address identifiability and estimation
\cite{chen1995optimal,allman2009identifiability,nguyen2024temperature,nguyen2024least}
or recovery of latent clusters and tasks
\cite{chen2022towards,dikkala2023benefits,kawata2025mixture,liao2025guided}.
Stochastic-optimization analyses quantify how routing sparsity affects overall convergence
\cite{zhurethinking}. We study the time a suppressed routing probability takes to regain a prescribed threshold during optimization.

\paragraph{Concentration observables and softmax dynamics}
Representation clustering around expert centroids
\cite{chi2022representation}, concentration of expert-selection paths across layers
\cite{gu2026path}, and the effective rank of routing-similarity graphs across reasoning rollouts
\cite{chen2026concentration} describe observables distinct from our outside-face mass for an individual probability vector.
Optimization geometry determines how gradients move that vector: replicator dynamics and entropic mirror descent induce a single softmax Jacobian in probability space
\cite{hofbauer1998evolutionary,sandholm2010population,beck2003mirror}, whereas Euclidean logit-gradient flow induces its square.
Tabular softmax policy-optimization rates depend on initialization and bounds on optimal-action probabilities
\cite{mei2020global}. Face selection also relates to active-set identification
\cite{wright1993identifiable}. Our stationary-loss barrier combines convex projection and loss dissipation to identify the hull face supporting the limiting output; exposing directions then control outside-face mass.

\paragraph{Trainable value--softmax dynamics}
Varre, Rofin, and Flammarion \cite{varre2026gradient} study joint value--softmax flow. Their square-loss analysis gives exponential residual decay and finite cumulative polarization; the latter is derived under a zero-initialized value matrix. We use the same dissipation mechanism to establish convergence of logits and expert outputs for arbitrary finite initialization and every positive expert-update rate, with explicit tail and probability-floor bounds (\Cref{thm:jointgradientinteriorfreeze}). Combining this result with the slow-expert approximation separates finite-horizon concentration from the interior long-time limit. Recovery is then governed by the normal coefficient in \Cref{sec:boundary-recovery}.

\paragraph{Router--expert coupling}
Joint training couples routing preferences to expert adaptation. Kiselev
\cite{kiselev2026minimal} derives symmetry-breaking bifurcations in a two-expert reinforcement-and-decay model.
Ahrac, Hochwald, and Geva
\cite{ahrac2026routers} analyze shared token directions in router and expert gradients and the effects of load balancing on their alignment.
We study how a class-conditional target--comparator gradient contrast affects subsequent recovery. Matched neural branches estimate the response to removing this contrast, and the discrete log-odds identity relates the observed routing changes to individual threshold crossing.

\Needspace{4\baselineskip}
\section{Problem setup and preliminaries}
\label{sec:model}

\subsection{Softmax routing as gradient flow}

For \(n\) experts, let \(p\in\Delta^{n-1}\) denote the routing vector, where
\(\Delta^{n-1}=\{p\in\R_+^n:\mathbf 1^\top p=1\}\) and \(\mathbf1\) is the
all-ones vector; \(\R_+=[0,\infty)\) and \(\top\) denotes transpose.
The relative interior \(\operatorname{int}(\Delta^{n-1})\) consists of
vectors with every coordinate positive.

\Needspace{5\baselineskip}
To quantify concentration near a coordinate face, fix an expert index set
\(S\subseteq\{1,\ldots,n\}\) and define its outside-face mass by
\(D_S(p):=\sum_{i\notin S}p_i\).
Small \(D_S(p)\) measures finite-time concentration near the coordinate face
supported on \(S\); \(D_S(p(t))\to0\) means asymptotic concentration on that
face.

Stack the expert outputs \(E_i\in\R^d\) as the rows of
\(E\in\R^{n\times d}\). For a common target \(y\in\R^d\), the routed output is
\(E^\top p\) and the corresponding square loss is
\begin{equation*}
  \mathcal L(p,E)=\|E^\top p-y\|^2.
\end{equation*}
The gradient flows below optimize the half-loss
\(\mathcal H(p,E):=\mathcal L(p,E)/2\); we retain \(\mathcal L\) for loss
levels and dissipation statements. Write \(R:=E^\top p-y\).

We write \(\|\cdot\|\) or \(\|\cdot\|_2\) for the Euclidean vector
norm, \(\|\cdot\|_1\) for the sum of absolute coordinates,
and \(\|\cdot\|_{\rm op}\) and \(\|\cdot\|_{\rm F}\) for the operator
and Frobenius matrix norms. An unsubscripted matrix norm is the operator norm;
\(\langle\cdot,\cdot\rangle\) is the Euclidean inner product.
We use \(O(\cdot)\) for upper bounds up to a constant and
\(\Theta(\cdot)\) or \(\asymp\) for two-sided bounds by positive constants;
the relevant limit and fixed parameters accompany each result.

For logits \(q\in\R^n\), the softmax map is
\(p_i=e^{q_i}/\sum_{k=1}^n e^{q_k}\). Its Jacobian is
\begin{equation*}
  J(p)=\operatorname{diag}(p)-pp^\top .
\end{equation*}
Here \(\operatorname{diag}(p)\) has diagonal entries \(p_1,\ldots,p_n\).
Write \(e_j\) for the \(j\)th coordinate vector. All logarithms are natural. The half-loss gradients are
\[
 g(p,E,y):=\nabla_p\mathcal H=ER,\qquad
 s(p,E,y):=\nabla_q\mathcal H=J(p)g,\qquad
 \nabla_{E_i}\mathcal H=p_iR.
\]

Euclidean gradient flow in logit space gives
\(\dot q=-s\). Transporting the logit velocity through the softmax map gives
\(\dot p=J(p)\dot q\), and hence the fixed-expert routing dynamics
\begin{equation}
\label{eq:pflow}
  \dot p=-M(p)E(E^\top p-y),\qquad M(p):=J(p)^2 .
\end{equation}
One Jacobian factor differentiates the loss with respect to logits; the other transports logit velocity to probability space.

With expert-to-router learning-rate ratio \(\eta\geq0\), training the experts
on the same half-loss gives the coupled dynamics
\begin{equation}
\label{eq:jointgradientflow}
\begin{cases}
\dot p=-M(p)E(E^\top p-y),\\
\dot E_i=-\eta p_i(E^\top p-y),\qquad i=1,\ldots,n .
\end{cases}
\end{equation}
Each \(E_i\) is a freely trainable output vector, and \(\eta\) scales its
update relative to the router. Optimizing the full loss instead of the
half-loss rescales time by a factor of two.

The softmax log-ratio identity is
\begin{equation*}
  \frac{d}{dt}\log\frac{p_i}{p_j}=-(s_i-s_j)
\end{equation*}
for every pair of positive routing coordinates. It relates routing-coordinate evolution to relative logit motion.

\subsection{Loss dissipation and equilibria on simplex faces}

The simplex structure is preserved by the routing dynamics. Indeed,
\(J(p)\mathbf1=0\), and any zero coordinate annihilates the corresponding row
and column of \(J(p)\). Therefore, the simplex and each of its faces are
invariant under the fixed-expert flow. Moreover, for fixed \(E\),
\begin{equation}
\label{eq:fixedflowdissipation}
\frac{d}{dt}\mathcal L(p(t),E)
=-2\|J(p)E(E^\top p-y)\|^2\leq0 .
\end{equation}

The static minimizers represent the projection of \(y\) onto
\(\mathcal C(E):=\operatorname{conv}\{E_1,\ldots,E_n\}\)
\cite{rockafellar1997convex}.
Boundary equilibria can instead represent suboptimal restricted-hull projections.

A point with support \(S=\operatorname{supp}(p):=\{i:p_i>0\}\) is stationary
exactly when \(g_i=E_i^\top(E^\top p-y)\) is constant over \(i\in S\). This is
the first-order stationarity condition for the convex loss restricted to the
corresponding simplex face. Hence every interior stationary point solves the
full projection problem, whereas a boundary stationary point may violate the
inactive-coordinate Karush--Kuhn--Tucker (KKT) inequalities. The next section uses stationary loss levels to identify the selected face.

\Needspace{6\baselineskip}

\section{Face selection with fixed experts}
\label{sec:theory}

For fixed experts, convex projection identifies the optimal routed output.
We use the gap between stationary loss levels to determine the hull face
supporting every limiting representation. The geometric bounds below then
quantify concentration near that face.

For a nonempty closed convex set \(C\), write
\(\operatorname{Proj}_C y\) for Euclidean projection. For a point \(p\)
and a set \(C\) in the same space, write
\(\operatorname{dist}(p,C):=\inf_{v\in C}\|p-v\|\).

\subsection{Stationary-loss barrier and face selection}

For every nonempty support set
\(S\subseteq\{1,\ldots,n\}\), define the convex hull of the expert outputs
indexed by \(S\), its projection of the target, and its corresponding loss
level by
\[
 \mathcal C_S:=\operatorname{conv}\{E_i:i\in S\},\qquad
 z_S:=\operatorname{Proj}_{\mathcal C_S}y,\qquad
 \ell_S:=\|z_S-y\|^2 .
\]

The closed simplex face indexed by \(S\) is
\(\Delta_S:=\{p\in\Delta^{n-1}:p_i=0\text{ for }i\notin S\}\),
with relative interior
\(\Delta_S^\circ:=\{p\in\Delta^{n-1}:\supp(p)=S\}\). Define the
finite set of stationary supports by
\[
 \mathfrak S_{\rm stat}
 :=\{S\subseteq\{1,\ldots,n\}:S\neq\varnothing,\quad
       z_S=E^\top p\text{ for some }p\in\Delta_S^\circ\}.
\]

\Needspace{12\baselineskip}

\begin{theorem}[Face selection below the stationary-loss barrier]
\label{thm:generalfaceselection}
Let \(E\in\R^{n\times d}\), \(n\geq2\), \(y\in\R^d\), and
\(z^\ast:=\operatorname{Proj}_{\mathcal C(E)}y\).
Let \(G^\ast\) be the smallest face of \(\mathcal C(E)\) containing
\(z^\ast\). Define its expert index set \(F^\ast\) and the set
\(\mathcal P^\ast\) of routing vectors representing \(z^\ast\) by
\[
 F^\ast:=\{i:E_i\in G^\ast\},\qquad
 \mathcal P^\ast:=\{p\in\Delta^{n-1}:E^\top p=z^\ast\},
\]
and let \(p(t)\) solve
\eqref{eq:pflow} from an interior initial point. Define the suboptimal
stationary supports and their lowest loss by
\[
\mathfrak S_{\rm comp}
:=\{S\in\mathfrak S_{\rm stat}:z_S\neq z^\ast\},\qquad
\ell_{\rm comp}
:=\min_{S\in\mathfrak S_{\rm comp}}\ell_S,
\]
with the convention \(\ell_{\rm comp}=+\infty\) when
\(\mathfrak S_{\rm comp}=\varnothing\). Assume that the trajectory enters
the strict sublevel below this level: for some finite \(t_{\rm ent}\geq0\),
\begin{equation}
\label{eq:faceenergyentry}
  \mathcal L(p(t_{\rm ent}),E)<\ell_{\rm comp}.
\end{equation}
Then, as \(t\to\infty\),
\[
 \operatorname{dist}\bigl(p(t),\mathcal P^\ast\bigr)\to0,\qquad
 E^\top p(t)\to z^\ast .
\]
These conclusions include \(G^\ast=\mathcal C(E)\). If \(G^\ast\) is proper,
then the mass outside its expert index set satisfies
\[
 D_{F^\ast}(p(t))\to0 .
\]
For the full face, \(D_{F^\ast}\equiv0\).
If \(\mathcal P^\ast=\{p^\ast\}\), then \(p(t)\to p^\ast\).
\end{theorem}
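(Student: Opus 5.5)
The plan is a LaSalle-type argument built on the dissipation identity \eqref{eq:fixedflowdissipation}, with the stationary-loss barrier used to exclude limit points at suboptimal boundary equilibria.

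\emph{Step 1: the omega-limit set consists of stationary points.} Let \(\Omega\) be the omega-limit set of \(p(t)\). The trajectory stays in the compact simplex, so \(\Omega\) is nonempty, compact, connected and invariant. \(\mathcal L(p(t),E)\) is nonincreasing and bounded below, so it converges to some \(\ell_\infty\). Moreover \(\ell_\infty\le\mathcal L(p(t_{\rm ent}),E)<\ell_{\rm comp}\). Integrating \eqref{eq:fixedflowdissipation} gives \(\int_0^\infty\|J(p)g\|^2\,dt<\infty\). The integrand has a bounded time derivative, since the vector field is smooth on the compact simplex. Hence \(\|J(p(t))g(p(t))\|\to0\), and by continuity every \(\bar p\in\Omega\) satisfies \(J(\bar p)g(\bar p)=0\). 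Now \((J(\bar p)g)_i=\bar p_i(g_i-\bar p^\top g)\), so vanishing means \(g_i\) is constant on \(S:=\supp(\bar p)\). Thus \(\bar p\) is a stationary point of the loss restricted to \(\Delta_S\). By convexity it minimizes \(\mathcal L\) over \(\Delta_S\), so \(E^\top\bar p=z_S\) with \(\bar p\in\Delta_S^\circ\), i.e.\ \(S\in\mathfrak S_{\rm stat}\). Its loss is \(\ell_S=\ell_\infty\), because \(\mathcal L\) is constant on \(\Omega\).

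\emph{Step 2: the barrier.} If \(z_S\ne z^\ast\), then \(S\in\mathfrak S_{\rm comp}\). This would give \(\ell_\infty=\ell_S\ge\ell_{\rm comp}\), contradicting \(\ell_\infty<\ell_{\rm comp}\). Hence \(E^\top\bar p=z^\ast\) for every \(\bar p\in\Omega\), that is, \(\Omega\subseteq\mathcal P^\ast\). Compactness then gives \(\operatorname{dist}(p(t),\mathcal P^\ast)\to0\): otherwise some subsequence stays a fixed distance from \(\mathcal P^\ast\) and has a limit point outside it. Continuity gives \(E^\top p(t)\to z^\ast\). If \(\mathcal P^\ast=\{p^\ast\}\), this is exactly \(p(t)\to p^\ast\). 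The interior initial condition plays no role here beyond giving well-posedness; the barrier alone rules out bad faces.

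\emph{Step 3: support of \(\mathcal P^\ast\).} This is the geometric step and the one needing some care. I need every \(p\in\mathcal P^\ast\) to vanish off \(F^\ast\). Suppose \(p\in\mathcal P^\ast\) and \(p_k>0\) for some \(k\notin F^\ast\). Then \(z^\ast=\sum_i p_iE_i\) is a convex combination with positive weight on \(E_k\). Since \(G^\ast\) is a face of \(\mathcal C(E)\) containing \(z^\ast\), the face property (extremality) forces every point with positive weight in that combination to lie in \(G^\ast\), so \(E_k\in G^\ast\), a contradiction. Hence \(D_{F^\ast}\) vanishes on \(\mathcal P^\ast\). Since \(D_{F^\ast}\) is linear, hence Lipschitz with \(|D_{F^\ast}(p)-D_{F^\ast}(v)|\le\|p-v\|_1\le\sqrt n\|p-v\|\), Step 2 gives \(D_{F^\ast}(p(t))\le\sqrt n\operatorname{dist}(p(t),\mathcal P^\ast)\to0\). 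When \(G^\ast=\mathcal C(E)\), \(F^\ast\) is the full index set and \(D_{F^\ast}\equiv0\) trivially.

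The main obstacle I expect is the rigor of Step 1: upgrading integrability of the dissipation to pointwise convergence of \(\|J(p)g\|\) (via a Barbalat-type uniform-continuity argument), and matching vanishing of \(J(\bar p)g(\bar p)\) exactly to the definition of \(\mathfrak S_{\rm stat}\), including the identification \(E^\top\bar p=z_S\) through convexity on the face.
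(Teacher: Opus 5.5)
Your proposal is correct and follows essentially the same route as the paper. Loss dissipation places the \(\omega\)-limit set among equilibria, convexity on each face identifies an equilibrium's output as \(z_S\) with \(S\in\mathfrak S_{\rm stat}\), the barrier rules out \(z_S\neq z^\ast\), and the face property forces zero weight outside \(F^\ast\). The only difference is minor: you obtain the LaSalle conclusion through a Barbalat-type argument on the integrable dissipation, whereas the paper cites the invariance principle on the compact positively invariant sublevel set \(K_c\).
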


Routing approaches the coordinate face \(\Delta_{F^\ast}\), but the
representation set \(\mathcal P^\ast\) may contain multiple vectors.
Selection therefore need not give positive limiting mass to every expert
in \(F^\ast\).

\begin{proof}[Proof of \Cref{thm:generalfaceselection}]
Set \(c=\mathcal L(p(t_{\rm ent}),E)<\ell_{\rm comp}\).
The closed simplex sublevel set
\(K_c=\{p\in\Delta^{n-1}:\mathcal L(p,E)\leq c\}\)
is compact and positively invariant. By
\eqref{eq:fixedflowdissipation}, zero dissipation is equivalent to
\(J(p)E(E^\top p-y)=0\),
which also makes the vector field vanish. LaSalle's principle
\cite{hofbauer1998evolutionary,sandholm2010population}
therefore implies that the \(\omega\)-limit set is contained in the set of
equilibria.

For an equilibrium with support \(S\), the active components \(g_i\),
\(i\in S\), are identical. The convexity of the square loss restricted to the
face \(\Delta_S\) then identifies the corresponding output as \(z_S\). Hence
\(S\in\mathfrak S_{\rm stat}\), and membership in \(K_c\) excludes every
stationary output satisfying \(z_S\neq z^\ast\). Consequently, every limit
point belongs to \(\mathcal P^\ast\). Compactness then yields convergence in
distance to \(\mathcal P^\ast\) and convergence of the routed output to
\(z^\ast\), including the case where the projection lies in the relative
interior of the full convex hull.

When \(G^\ast\) is proper, the face property implies that every convex
representation of \(z^\ast\) assigns zero weight to experts outside
\(F^\ast\). Continuity of the trajectory therefore gives
\(D_{F^\ast}(p(t))\to0\). If the representation set is a singleton,
the distance convergence reduces to pointwise convergence.
\end{proof}

\subsection{Quantitative bounds on face concentration}

\begin{corollary}[A neighborhood below the stationary-loss barrier]
\label{cor:face-neighborhood}
Retain the notation of \Cref{thm:generalfaceselection} and suppose
\(\mathfrak S_{\rm comp}\neq\varnothing\). Set
\[
 r_\ast:=\|z^\ast-y\|,
 \qquad \Delta_{\rm face}:=\ell_{\rm comp}-r_\ast^2>0,
 \qquad d_\ast(p):=\operatorname{dist}(p,\mathcal P^\ast).
\]
Every \(p\in\Delta^{n-1}\) satisfying
\begin{equation*}
 \|E\|_{\rm op}d_\ast(p)
 <\sqrt{r_\ast^2+\Delta_{\rm face}}-r_\ast
\end{equation*}
lies below the lowest suboptimal stationary loss. Thus every interior
initialization in this neighborhood satisfies
\eqref{eq:faceenergyentry} with \(t_{\rm ent}=0\).
\end{corollary}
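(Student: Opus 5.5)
The plan is to bound the loss at \(p\) by comparing it with the optimal loss at a nearby representation of \(z^\ast\), and then to show that the resulting bound lies strictly below \(\ell_{\rm comp}\).

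\textbf{Step 1: pick a nearby representation.} The set \(\mathcal P^\ast\) is nonempty, because \(z^\ast\in\mathcal C(E)\) is a convex combination of the rows \(E_i\). It is also closed and convex, as the intersection of the simplex with an affine set. So the infimum defining \(d_\ast(p)\) is attained at some \(p^\ast\in\mathcal P^\ast\) with \(\|p-p^\ast\|=d_\ast(p)\). Only the distance is used below; the choice of a particular minimizer plays no role.

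\textbf{Step 2: bound the loss by the triangle inequality.} Since \(E^\top p^\ast=z^\ast\), we have
\[
\sqrt{\mathcal L(p,E)}=\|E^\top p-y\|\leq\|E^\top(p-p^\ast)\|+\|z^\ast-y\|\leq\|E\|_{\rm op}\,d_\ast(p)+r_\ast .
\]
Here \(\|E^\top\|_{\rm op}=\|E\|_{\rm op}\). By hypothesis the right-hand side is strictly less than \(\sqrt{r_\ast^2+\Delta_{\rm face}}\). Both sides are nonnegative, so squaring gives \(\mathcal L(p,E)<r_\ast^2+\Delta_{\rm face}=\ell_{\rm comp}\). This is the claim.

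\textbf{Step 3: the well-posedness point.} The only step needing justification is \(\Delta_{\rm face}>0\), which makes the admissible radius positive. Take any \(S\in\mathfrak S_{\rm comp}\). Then \(z_S\in\mathcal C_S\subseteq\mathcal C(E)\) and \(z_S\neq z^\ast\). Since \(z^\ast\) is the unique minimizer of \(\|v-y\|^2\) over the closed convex set \(\mathcal C(E)\), it follows that \(\ell_S>r_\ast^2\). The family \(\mathfrak S_{\rm comp}\) is finite and nonempty by assumption, so the minimum \(\ell_{\rm comp}\) is attained and \(\ell_{\rm comp}>r_\ast^2\).

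\textbf{Step 4: the final sentence.} For an interior initialization \(p(0)\) in this neighborhood, Step 2 gives \(\mathcal L(p(0),E)<\ell_{\rm comp}\). This is exactly \eqref{eq:faceenergyentry} with \(t_{\rm ent}=0\), so \Cref{thm:generalfaceselection} applies.

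Nothing here is a genuine obstacle; the argument is a Lipschitz estimate of \(\sqrt{\mathcal L}\) together with strict optimality of the projection.
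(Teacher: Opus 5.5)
Your proof is correct and takes essentially the same route as the paper. Both choose a nearest representation \(p^\ast\in\mathcal P^\ast\), use \(\|E^\top(p-p^\ast)\|\le\|E\|_{\rm op}d_\ast(p)\), and apply the triangle inequality; the paper's bound \(\mathcal L(p,E)-r_\ast^2\le 2r_\ast u+u^2\) is your squared bound \((r_\ast+u)^2\), and your check that \(\Delta_{\rm face}>0\) via uniqueness of the projection is a valid detail the paper leaves implicit in the statement.
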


\begin{proof}
Let \(p^\ast\) attain \(d_\ast(p)\) and define
\(u=\|E^\top(p-p^\ast)\|\leq \|E\|_{\rm op}d_\ast(p)\).
The triangle inequality gives
\[
\mathcal L(p,E)-r_\ast^2\leq2r_\ast u+u^2,
\]
which proves the claim.
\end{proof}

\begin{corollary}[Loss excess controls mass outside the optimal face]
\label{cor:exposed-face-mass}
Retain the notation of \Cref{thm:generalfaceselection} and assume
\(G^\ast\) is proper. Choose an exposing vector \(v\in\R^d\) and level
\(b\in\R\) for the polytope face \(G^\ast\), oriented so that
\[
 v^\top z=b\quad(z\in G^\ast),\qquad
 v^\top E_i>b\quad(i\notin F^\ast),
\]
and set
\(\kappa_F:=\min_{i\notin F^\ast}(v^\top E_i-b)>0\).
Then every \(p\in\Delta^{n-1}\) satisfies
\begin{equation*}
 D_{F^\ast}(p)
 \leq \frac{\|v\|}{\kappa_F}\|E^\top p-z^\ast\|
 \leq \frac{\|v\|}{\kappa_F}
 \sqrt{\mathcal L(p,E)-\|z^\ast-y\|^2}.
\end{equation*}
The factor \(\|v\|/\kappa_F\) measures face separation in the chosen direction and is invariant under positive rescaling of \((v,b)\).
\end{corollary}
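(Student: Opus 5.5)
The plan is to prove the two inequalities separately. The first uses only the exposing-face property; the second is a standard projection inequality for the square loss.

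\emph{First inequality.} Apply the affine functional \(z\mapsto v^\top z-b\) to the routed output \(E^\top p=\sum_i p_iE_i\). Since \(\mathbf 1^\top p=1\),
\[
 v^\top E^\top p-b=\sum_{i}p_i\,(v^\top E_i-b).
\]
For \(i\in F^\ast\) we have \(E_i\in G^\ast\), so \(v^\top E_i-b=0\). For \(i\notin F^\ast\) each term is at least \(\kappa_F p_i\), which gives
\[
 v^\top E^\top p-b\geq\kappa_F D_{F^\ast}(p).
\]
Because \(z^\ast\in G^\ast\), we have \(b=v^\top z^\ast\), so the left side equals \(v^\top(E^\top p-z^\ast)\). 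Cauchy--Schwarz bounds this by \(\|v\|\,\|E^\top p-z^\ast\|\). Dividing by \(\kappa_F>0\) yields the first inequality.

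\emph{Second inequality.} It suffices to show
\[
 \|E^\top p-z^\ast\|^2\leq\mathcal L(p,E)-\|z^\ast-y\|^2 .
\]
Write \(w=E^\top p\in\mathcal C(E)\) and expand
\[
 \|w-y\|^2=\|w-z^\ast\|^2+\|z^\ast-y\|^2+2\langle w-z^\ast,z^\ast-y\rangle .
\]
The variational characterization of the Euclidean projection \(z^\ast=\operatorname{Proj}_{\mathcal C(E)}y\) gives
\[
 \langle y-z^\ast,w-z^\ast\rangle\leq0
\]
for every \(w\in\mathcal C(E)\), so the cross term is nonnegative. Dropping it gives the claim. Taking square roots is legitimate because the right side is then nonnegative, and substituting into the first inequality yields the stated chain.

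\emph{Scale invariance and where care is needed.} Replacing \((v,b)\) by \((\lambda v,\lambda b)\) with \(\lambda>0\) scales both \(\|v\|\) and \(\kappa_F\) by \(\lambda\), so their ratio is unchanged. None of the steps is a real obstacle. The two places that need attention are the sign convention in the projection inequality and the use of \(\sum_ip_i=1\) to absorb the level \(b\). The existence of an exposing pair \((v,b)\) for a proper face is a standard polytope fact, and the statement already assumes one is chosen.
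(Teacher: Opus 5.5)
Your proposal is correct and follows the paper's own proof: you bound \(\kappa_F D_{F^\ast}(p)\) by \(v^\top(E^\top p-z^\ast)\) via the exposing functional and Cauchy--Schwarz, then use the projection inequality \(\langle y-z^\ast,E^\top p-z^\ast\rangle\leq0\) to control \(\|E^\top p-z^\ast\|^2\) by the loss excess. You also spell out the use of \(\mathbf 1^\top p=1\), the identity \(v^\top z^\ast=b\), and the rescaling invariance, which the paper leaves implicit.
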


\begin{proof}
For \(z=E^\top p\), exposedness gives
\[
\kappa_FD_{F^\ast}(p)
\leq v^\top(z-z^\ast)
\leq\|v\|\|z-z^\ast\|.
\]
The projection inequality
\(\langle y-z^\ast,z-z^\ast\rangle\leq0\) yields
\[
\mathcal L(p,E)-\|z^\ast-y\|^2\geq\|z-z^\ast\|^2,
\]
which proves the claim.
\end{proof}

The bound in \Cref{cor:exposed-face-mass} uses loss excess and an exposing direction to control outside-face mass. The neighborhood in \Cref{cor:face-neighborhood} is determined by the stationary-loss gap.

\begin{proposition}[Finite-time entry into an optimal-face neighborhood]
\label{prop:outside-face-hitting}
Retain the notation of
\Cref{thm:generalfaceselection,cor:face-neighborhood}, assume
\(G^\ast\) is proper, and choose
\(r_\ast^2<c<\ell_{\rm comp}\) and \(\delta>0\). Define
\[
 \mathcal K_{c,\delta}
 :=\{p\in\Delta^{n-1}:\mathcal L(p,E)\leq c,
          \ D_{F^\ast}(p)\geq\delta\},
\]
and
\[
 \mu(c,\delta)
 :=\min_{p\in\mathcal K_{c,\delta}}
 \|J(p)E(E^\top p-y)\|^2 ,
\]
whenever \(\mathcal K_{c,\delta}\neq\varnothing\). Then
\(\mu(c,\delta)>0\). If a fixed-expert trajectory satisfies
\(\mathcal L(p(t_0),E)\leq c\) at a time \(t_0\geq0\) and
\(D_{F^\ast}(p(t_0))\geq\delta\), its first entry time
\(\tau_\delta:=\inf\{t\geq t_0:D_{F^\ast}(p(t))<\delta\}\)
obeys
\begin{equation}
\label{eq:outside-face-hitting-time}
 \tau_\delta-t_0
 \leq
 \frac{\mathcal L(p(t_0),E)-r_\ast^2}
 {2\mu(c,\delta)} .
\end{equation}
\end{proposition}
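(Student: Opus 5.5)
We first show \(\mu(c,\delta)>0\) and then integrate the dissipation identity \eqref{eq:fixedflowdissipation} up to the entry time.

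\textbf{Positivity of \(\mu\).} The set \(\mathcal K_{c,\delta}\) is compact: it is the intersection of the compact simplex with the closed sublevel set of the continuous loss and the closed superlevel set of the linear functional \(D_{F^\ast}\). The map \(p\mapsto\|J(p)E(E^\top p-y)\|^2\) is continuous, so the minimum is attained at some \(\bar p\). Suppose the minimum were zero. Then \(\bar p\) would be an equilibrium of \eqref{eq:pflow}, with support \(S\) on which the \(g_i\) coincide. As in the proof of \Cref{thm:generalfaceselection}, convexity of the loss restricted to \(\Delta_S\) gives \(E^\top\bar p=z_S\) with \(S\in\mathfrak S_{\rm stat}\). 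There are two cases.
\begin{itemize}
\item If \(z_S\neq z^\ast\), then \(\ell_S\geq\ell_{\rm comp}>c\geq\mathcal L(\bar p,E)=\ell_S\), a contradiction.
\item If \(z_S=z^\ast\), then \(\bar p\in\mathcal P^\ast\). Since \(G^\ast\) is proper, the face property (or \Cref{cor:exposed-face-mass} with \(E^\top\bar p=z^\ast\)) forces \(D_{F^\ast}(\bar p)=0<\delta\), again a contradiction.
\end{itemize}
Hence \(\mu(c,\delta)>0\). We should also check the boundary case \(\bar p\) not interior: the equilibrium characterization via constant active gradients holds on any face, so nothing changes there.

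\textbf{Hitting-time bound.} On \([t_0,\tau_\delta)\), the sublevel set is positively invariant because \(\mathcal L\) is nonincreasing, so \(\mathcal L(p(t),E)\leq c\). By the definition of \(\tau_\delta\), also \(D_{F^\ast}(p(t))\geq\delta\). Thus \(p(t)\in\mathcal K_{c,\delta}\), and \eqref{eq:fixedflowdissipation} gives \(\frac{d}{dt}\mathcal L\leq-2\mu(c,\delta)\). Integrating from \(t_0\) to any \(t<\tau_\delta\) yields
\[
r_\ast^2\leq\mathcal L(p(t),E)\leq\mathcal L(p(t_0),E)-2\mu(c,\delta)(t-t_0),
\]
using that \(r_\ast^2\) is the global minimum of \(\mathcal L\) over the simplex. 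Rearranging bounds \(t-t_0\) for all \(t<\tau_\delta\), which gives \eqref{eq:outside-face-hitting-time}. In particular \(\tau_\delta<\infty\).

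\textbf{Degenerate case and main obstacle.} If \(\mathcal K_{c,\delta}\) were empty, the hypotheses could not hold at \(t_0\), so the statement is vacuous. The main obstacle is the positivity step, namely excluding optimal-output equilibria with outside-face mass. This is exactly where properness of \(G^\ast\) and the minimality of the face enter: every representation of \(z^\ast\) lies in \(\Delta_{F^\ast}\).
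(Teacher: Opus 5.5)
Your proof is correct and follows the paper's argument. Compactness of \(\mathcal K_{c,\delta}\) gives the minimum; a zero minimum would be a stationary point below \(\ell_{\rm comp}\), hence a representation of \(z^\ast\), which must have zero mass outside \(F^\ast\); integrating the dissipation bound up to \(\tau_\delta\) then gives the estimate, and your explicit two-case split over \(z_S\) just spells out what the paper compresses into one sentence.
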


\begin{proof}
Compactness gives the minimum. If the minimum were zero, there would exist a
stationary point below \(\ell_{\rm comp}\) with
\(D_{F^\ast}\geq\delta\). Such a point would necessarily represent the
optimal output \(z^\ast\), contradicting the fact that every representation
of \(z^\ast\) has zero mass outside \(F^\ast\). Hence
\(\mu(c,\delta)>0\). Integrating
\(\frac{d}{dt}\mathcal L(p(t),E)\leq-2\mu(c,\delta)\)
until the first hitting time gives
\eqref{eq:outside-face-hitting-time}.
\end{proof}

A lower bound on \(\mu(c,\delta)\) makes the entry-time estimate explicit.
Appendices~\ref{app:subsection-1-1}--\ref{app:subsection-1-2} and \ref{sec:sm-face-selection-detail} give geometric bounds and
initialization results.

\section{Expert adaptation: transient concentration and interior convergence}
\label{sec:coupled-main}

Slow expert adaptation preserves fixed-expert trajectories on finite
horizons. For every fixed positive rate, however, the common-target joint
square-loss flow has an interior long-time limit.

\subsection{Finite-horizon approximation by fixed-expert dynamics}

Let \(p_0(t)\) denote the fixed-expert solution of \eqref{eq:pflow} with
expert matrix \(E^0\), and let \((p_\eta(t),E_\eta(t))\) denote the joint-flow
solution of \eqref{eq:jointgradientflow} from the same router state and
\(E_\eta(0)=E^0\).

\begin{theorem}[Finite-horizon slow-expert approximation]
\label{thm:fastslowbridge}
Fix \(T<\infty\), finite \(E^0\), and an interior initial router state. There
exist \(\eta_T>0\) and \(C_T<\infty\), depending on the initial state, target,
and \(T\), such that for every \(0\leq\eta\leq\eta_T\),
\begin{equation}
\label{eq:fastslowbound}
 \sup_{0\leq t\leq T}
 \left(
 \|p_\eta(t)-p_0(t)\|
 +\|E_\eta(t)-E^0\|_{\mathrm F}
 \right)
 \leq C_T\eta .
\end{equation}
\end{theorem}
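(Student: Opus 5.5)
The plan is a Grönwall argument on a compact, forward-invariant region over the finite horizon \([0,T]\). I would work in logit coordinates rather than on the simplex. The joint flow then has two parts: \(\dot q=-J(p)E(E^\top p-y)\) with \(p=\operatorname{softmax}(q)\), and the expert equation in \eqref{eq:jointgradientflow}. The initial router state is interior, so it has a finite logit vector \(q(0)\). The map \(q\mapsto p\) is globally Lipschitz with constant at most \(1\), since \(\|J(p)\|_{\rm op}\leq1\). So a bound on logit deviations transfers directly to \(\|p_\eta-p_0\|\).

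\textbf{Step 1 (a priori bounds).} I first confine the experts to a fixed ball. For any \(\eta\geq0\), \(p\in\Delta^{n-1}\) gives \(\|\dot E_i\|\leq\eta\|R\|\) and \(\|R\|\leq\|E\|_{\rm F}+\|y\|\). Grönwall then gives \(\|E_\eta(t)\|_{\rm F}\leq(\|E^0\|_{\rm F}+\|y\|)e^{\sqrt n\,\eta t}\) on \([0,T]\), bounded uniformly for \(\eta\leq\eta_T:=1\). Alternatively one can use the dissipation \(\frac{d}{dt}\mathcal H\leq0\) along the joint flow, which bounds \(\|R\|\) by \(\sqrt{\mathcal L(p(0),E^0)}\) for all time. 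This gives the cleaner estimate \(\|E_\eta(t)-E^0\|_{\rm F}\leq \sqrt n\,\eta t\sqrt{\mathcal L(p(0),E^0)}\), which already proves the expert half of \eqref{eq:fastslowbound}. Both vector fields are polynomial in \((p,E)\). On the compact set \(\Delta^{n-1}\times\{\|E\|_{\rm F}\leq B\}\), the router field \(F(p,E)=-M(p)E(E^\top p-y)\) is therefore Lipschitz in \((p,E)\) with some constant \(L_B\). Global existence on \([0,T]\) follows because the simplex is invariant and \(E\) stays bounded.

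\textbf{Step 2 (router comparison).} Set \(e(t)=p_\eta(t)-p_0(t)\). Then
\[
\dot e=F(p_\eta,E_\eta)-F(p_0,E^0),\qquad \|\dot e\|\leq L_B\bigl(\|e\|+\|E_\eta-E^0\|_{\rm F}\bigr).
\]
Here I work directly on the simplex and not in logits, since \(F\) is smooth on the compact set. Inserting the bound \(\|E_\eta-E^0\|_{\rm F}\leq c\,\eta t\) from Step 1 and applying Grönwall with \(e(0)=0\) gives \(\|e(t)\|\leq c L_B\eta\,T e^{L_BT}\). Adding the two estimates gives \(C_T=cT(1+L_Be^{L_BT})\).

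\textbf{Main obstacle.} The only delicate point is making the Lipschitz constant uniform in \(\eta\). This requires showing that \(E_\eta\) stays in a ball that does not grow with \(\eta\). I would use the loss-dissipation bound on \(\|R\|\), which holds for any \(\eta\). With it, the restriction \(\eta\leq\eta_T\) serves only to keep the constants explicit, and the dependence on the initial state and target enters through \(\mathcal L(p(0),E^0)\) and \(B\).
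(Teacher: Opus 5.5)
Your proposal is correct and follows essentially the same route as the paper. Joint-loss dissipation gives \(\|R_\eta(t)\|\leq\|R_\eta(0)\|\), hence \(\|E_\eta(t)-E^0\|_{\rm F}=O(\eta t)\) and a common expert ball for \(\eta\leq1\); a uniform Lipschitz bound on the polynomial router field and Gr\"onwall's inequality then give the \(O(\eta)\) routing error. One harmless slip: with forcing \(L_Bc\eta s\), Gr\"onwall yields \(\|e(t)\|\leq c\eta\bigl(e^{L_Bt}-1-L_Bt\bigr)/L_B\leq\tfrac12 cL_B\eta t^2e^{L_Bt}\), not \(cL_B\eta Te^{L_BT}\) (the latter holds once you take \(L_B\geq1\), which you may always do), and the logit-coordinate framing in your first paragraph is never used.
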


\begin{proof}
Write \(R_\eta=E_\eta^\top p_\eta-y\) and
\(R_0=E^{0\top}p_0(0)-y\). Differentiating the joint loss gives
\[
 \dot{\mathcal L}=-2\|J(p_\eta)E_\eta R_\eta\|^2
 -2\eta\|p_\eta\|^2\|R_\eta\|^2\leq0.
\]
Consequently, \(\|\dot E_\eta\|_{\rm F}\leq\eta\|R_0\|\), so
\(\sup_{t\leq T}\|E_\eta(t)-E^0\|_{\rm F}\leq\eta T\|R_0\|\).
For \(0\leq\eta\leq1\), these matrices remain in a common bounded ball.
The polynomial router field
\(f(p,E)=-J(p)^2E(E^\top p-y)\) has finite Lipschitz constants
\(L_T,K_T\) there, uniformly for \(p\) in the simplex. These bounds also
prevent finite-time escape. With
\(d(t)=\|p_\eta(t)-p_0(t)\|\), the integral equations give
\[
 d(t)\leq\int_0^t\bigl(L_Td(s)+K_T\eta s\|R_0\|\bigr)\,ds
 \leq \tfrac12 K_T\eta\|R_0\|t^2e^{L_Tt},
\]
where the final inequality is Gronwall's estimate. Combining the bounds
proves \eqref{eq:fastslowbound}.
\end{proof}

Appendix~\ref{app:subsection-1-3} gives explicit constants for this finite-horizon bound.

\subsection{Interior convergence of the joint square-loss flow}

The square-loss dissipation mechanism in value--softmax flow
\cite{varre2026gradient} gives exponential residual decay. We use this decay
to bound expert motion and total logit motion, obtaining convergence and
explicit positive routing bounds for arbitrary finite initialization.

\begin{theorem}[Interior convergence of the joint square-loss flow]
\label{thm:jointgradientinteriorfreeze}
Let \((q(t),E(t))\) be the maximal solution of the joint gradient flow whose
probability-space form is \eqref{eq:jointgradientflow}, with \(\eta>0\),
finite \(q(0)\), and finite \(E(0)\). Set
\(p(t)=\operatorname{Softmax}(q(t))\) and
\(R(t)=E(t)^\top p(t)-y\), and define
\[
 \rho_0:=\|R(0)\|,\qquad
 B_0:=\|E(0)\|_{\mathrm{op}}+n\rho_0,\qquad
 C_0:=\frac{nB_0\rho_0}{\eta}.
\]
Then the solution exists for all \(t\geq0\) and
\begin{equation}
\label{eq:jointgradientresidualrate}
 \|R(t)\|^2\leq \|R(0)\|^2e^{-2\eta t/n},
 \qquad
 \int_t^\infty\|R(\tau)\|\,d\tau
 \leq \frac{n\rho_0}{\eta}e^{-\eta t/n}.
\end{equation}
Moreover, \(q(t)\to q_\infty\in\R^n\) and
\(E(t)\to E_\infty\), and hence
\begin{equation}
\label{eq:jointgradienttails}
 \|E(t)-E_\infty\|_{\mathrm F}
 \leq n\rho_0e^{-\eta t/n},\qquad
 \|q(t)-q_\infty\|
 \leq C_0e^{-\eta t/n}.
\end{equation}
For every expert \(i\) and every \(t\geq0\),
\begin{equation}
\label{eq:jointgradientfloor}
 p_i(t)\geq
 \frac{p_i(0)}
 {p_i(0)+e^{2C_0}(1-p_i(0))}>0 .
\end{equation}
In particular,
\[
 p(t)\to p_\infty=\operatorname{Softmax}(q_\infty)
 \in\operatorname{int}(\Delta^{n-1}),
 \qquad E(t)^\top p(t)\to y .
\]
Thus, for every finite initialization and \(n\geq2\), the reduced joint
square-loss flow converges to a strictly interior routing vector.
\end{theorem}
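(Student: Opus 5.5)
The plan is to run a Lyapunov argument on the residual. Compute \(\dot R\), bound its decay by the expert-update term alone, and then integrate the residual bound to control both the expert motion and the logit motion.

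\textbf{Residual decay.} Differentiate \(R=E^\top p-y\) along the flow:
\[
\dot R=\dot E^\top p+E^\top\dot p=-\eta\|p\|^2R-E^\top J(p)^2ER .
\]
Since \(J(p)^2\) is positive semidefinite,
\[
\tfrac{d}{dt}\|R\|^2\leq-2\eta\|p\|^2\|R\|^2 .
\]
On the simplex, Cauchy--Schwarz gives \(\|p\|^2\geq 1/n\). Hence \(\|R(t)\|^2\leq\rho_0^2e^{-2\eta t/n}\), so \(\|R(t)\|\leq\rho_0e^{-\eta t/n}\). Integrating this gives the tail bound in \eqref{eq:jointgradientresidualrate}. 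The estimate holds on the maximal interval of existence.

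\textbf{Expert motion and global existence.} Each expert obeys \(\|\dot E_i\|\leq\eta p_i\|R\|\). Summing over \(i\) gives \(\|\dot E\|_{\rm F}\leq\eta\|R\|\). Integrating,
\[
\int_t^\infty\|\dot E\|_{\rm F}\leq n\rho_0e^{-\eta t/n}.
\]
Therefore \(E(t)\) is Cauchy and converges to some \(E_\infty\) with the stated tail. It also stays bounded: \(\|E(t)\|_{\rm op}\leq\|E(0)\|_{\rm op}+n\rho_0=B_0\).

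For the logits, \(\|\dot q\|=\|J(p)ER\|\). Since \(\|J(p)\|_{\rm op}\leq1\) (a crude bound suffices), we get \(\|\dot q\|\leq B_0\|R\|\). Thus \(\int_t^\infty\|\dot q\|\leq nB_0\rho_0e^{-\eta t/n}/\eta=C_0e^{-\eta t/n}\), which gives \(q(t)\to q_\infty\) with the stated tail.

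Because \(q\) and \(E\) stay bounded on every finite interval and the vector field is smooth in \((q,E)\), there is no finite-time blow-up. The maximal solution is therefore global. I would present this step first, or argue on \([0,T_{\max})\) and conclude \(T_{\max}=\infty\).

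\textbf{Probability floor.} Total logit variation satisfies \(\int_0^\infty\|\dot q\|\leq C_0\). Hence every coordinate obeys \(|q_k(t)-q_k(0)|\leq C_0\), so each pairwise difference \(q_k-q_i\) changes by at most \(2C_0\). Writing
\[
p_i(t)=\Bigl(1+\sum_{k\neq i}e^{q_k(t)-q_i(t)}\Bigr)^{-1}
\]
and bounding \(\sum_{k\neq i}e^{q_k-q_i}\leq e^{2C_0}(1-p_i(0))/p_i(0)\) yields \eqref{eq:jointgradientfloor}.

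\textbf{Conclusion and main obstacle.} Interior convergence \(p\to\operatorname{Softmax}(q_\infty)\) then follows from continuity of softmax, and \(R\to0\) gives \(E^\top p\to y\). The delicate point is the constant bookkeeping. One must confirm that \(\|J(p)\|_{\rm op}\leq1\) and that the \(\|\dot E\|_{\rm F}\) bound match the stated \(B_0\) and \(C_0\), whose factor \(n\) is looser than needed. I expect the crude bounds above to suffice. Otherwise, summing \(\|\dot E_i\|\) coordinatewise gives the \(n\rho_0\) form directly.
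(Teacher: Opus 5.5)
Your proposal is correct and follows essentially the same route as the paper: the Lyapunov bound $\tfrac{d}{dt}\|R\|^2\leq-2\eta\|p\|^2\|R\|^2\leq-\tfrac{2\eta}{n}\|R\|^2$, integrating the residual to bound expert motion (giving $\|E(t)\|_{\rm op}\leq B_0$) and logit motion (via $\|J(p)\|_{\rm op}\leq1$), ruling out finite-time escape, and bounding softmax ratios with $\|q(t)-q(0)\|\leq C_0$. Your closing worry about constants is unnecessary: the factor $n$ in $B_0$ and $C_0$ is not slack but exactly $\eta\int_0^\infty\|R\|\,d\tau\leq n\rho_0$, and your estimates already reproduce the stated constants.
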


\begin{proof}
For the scalar energy
\(\mathcal H(t):=\|R(t)\|^2/2\),
we have
\[
  \dot{\mathcal H}
  =-\|J(p)ER\|^2-\eta\|p\|^2\|R\|^2
  \leq-\frac{2\eta}{n}\mathcal H,
\]
where we used
\(\|p\|^2\geq1/n\) on the simplex. Therefore, the residual decays
exponentially on every interval of existence. Since
\(\dot E=-\eta pR^\top\) and \(\|p\|\leq1\),
\[
 \|E(t)\|_{\rm op}
 \leq\|E(0)\|_{\rm op}
 +\eta\int_0^t\|R(\tau)\|\,d\tau
 \leq B_0 .
\]
The uniform bound on \(E(t)\), together with
\(\|J(p)\|_{\rm op}\leq1\), bounds the logit velocity
\(\dot q=-J(p)ER\), ruling out finite-time escape and proving global
existence. Integrating the residual estimate yields
\eqref{eq:jointgradientresidualrate} and
\[
 \int_t^\infty\|\dot E(\tau)\|_{\rm F}\,d\tau
 \leq n\rho_0e^{-\eta t/n},
 \qquad
 \int_t^\infty\|\dot q(\tau)\|\,d\tau
 \leq C_0e^{-\eta t/n},
\]
which proves the convergence of \(E(t)\), \(q(t)\), and
\eqref{eq:jointgradienttails}. The bound
\(\|q(t)-q(0)\|\leq C_0\), inserted into the softmax ratios, gives
\eqref{eq:jointgradientfloor}. The remaining conclusions follow from
\(R(t)\to0\).
\end{proof}

Appendix~\ref{app:subsection-1-4} extends the argument to residual losses with matching Polyak--\L{}ojasiewicz lower and gradient upper bounds.

\subsection{Transient face concentration under slow expert adaptation}

\begin{corollary}[Transient face concentration with an interior limit]
\label{cor:transient-face-fixed-rate-interior}
Let \(E^0,y\), and the common interior initial router state satisfy the
hypotheses of \Cref{thm:generalfaceselection}, with proper selected index set
\(F^\ast\). Use \(D_{F^\ast}\) from that theorem and let \(p_\eta\) denote
the joint flow from \(E^0\). For every \(\delta\in(0,1)\), there exist
\(T_\delta<\infty\) and \(\eta_\delta>0\) such that, for every
\(\eta\in[0,\eta_\delta]\),
\begin{equation}
\label{eq:transient-face-approach}
 D_{F^\ast}(p_\eta(T_\delta))<\delta .
\end{equation}
For every fixed \(\eta>0\), there exists
\(p_{\eta,\infty}\in\operatorname{int}(\Delta^{n-1})\) such that
\(p_\eta(t)\to p_{\eta,\infty}\) and
\begin{equation}
\label{eq:fixed-rate-interior-off-face}
 \lim_{t\to\infty}D_{F^\ast}(p_\eta(t))
 =\sum_{i\notin F^\ast}(p_{\eta,\infty})_i>0 .
\end{equation}
Thus arbitrarily slow expert learning preserves the finite-time approach to
the selected simplex face, whereas every fixed positive expert-update rate
produces a strictly interior long-time limit in this model.
\end{corollary}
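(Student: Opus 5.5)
The plan is to combine \Cref{thm:generalfaceselection} with \Cref{thm:fastslowbridge} for the finite-time claim, and to invoke \Cref{thm:jointgradientinteriorfreeze} for the fixed-rate claim. The first part is handled in two steps.

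First, apply \Cref{thm:generalfaceselection} to the fixed-expert trajectory \(p_0\), which coincides with \(p_\eta\) at \(\eta=0\). Its hypotheses hold by assumption, and \(F^\ast\) is proper, so \(D_{F^\ast}(p_0(t))\to0\). Fix \(\delta\in(0,1)\) and choose \(T_\delta<\infty\) with \(D_{F^\ast}(p_0(T_\delta))<\delta/2\).

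Second, apply \Cref{thm:fastslowbridge} with horizon \(T=T_\delta\). This gives \(\eta_{T_\delta}>0\) and \(C_{T_\delta}<\infty\) with \(\|p_\eta(T_\delta)-p_0(T_\delta)\|\leq C_{T_\delta}\eta\) for all \(0\leq\eta\leq\eta_{T_\delta}\). The functional \(D_{F^\ast}\) is linear, and
\[
|D_{F^\ast}(p)-D_{F^\ast}(p')|\leq\|p-p'\|_1\leq\sqrt n\,\|p-p'\|,
\]
so
\[
D_{F^\ast}(p_\eta(T_\delta))<\delta/2+\sqrt n\,C_{T_\delta}\eta .
\]
Taking \(\eta_\delta:=\min\{\eta_{T_\delta},\,\delta/(2\sqrt n\,C_{T_\delta}+1)\}\) yields \eqref{eq:transient-face-approach} for every \(\eta\in[0,\eta_\delta]\). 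If \(C_{T_\delta}=0\) the bound holds trivially.

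For the second claim, fix \(\eta>0\). The interior initial router state corresponds to finite logits \(q(0)\) (for instance, \(q_i(0)=\log p_i(0)\)), and \(E(0)=E^0\) is finite. Hence \Cref{thm:jointgradientinteriorfreeze} applies and gives \(p_\eta(t)\to p_{\eta,\infty}=\operatorname{Softmax}(q_\infty)\in\operatorname{int}(\Delta^{n-1})\). By continuity of the linear map \(D_{F^\ast}\), we get
\[
\lim_{t\to\infty}D_{F^\ast}(p_\eta(t))=\sum_{i\notin F^\ast}(p_{\eta,\infty})_i .
\]
This sum is positive because \(F^\ast\) is proper, so the complement is nonempty, and every coordinate of an interior point is positive. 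This proves \eqref{eq:fixed-rate-interior-off-face}.

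No step is a real obstacle. The only care needed is bookkeeping: \(T_\delta\) must be fixed before invoking \Cref{thm:fastslowbridge}, since \(\eta_T\) and \(C_T\) depend on \(T\). One also needs the identification of the joint flow at \(\eta=0\) with the fixed-expert flow, which holds because the \(\dot E\) equation in \eqref{eq:jointgradientflow} then vanishes.
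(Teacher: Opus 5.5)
Your proposal is correct and follows the paper's own proof essentially step for step: you choose \(T_\delta\) from fixed-expert face selection with \(D_{F^\ast}(p_0(T_\delta))<\delta/2\), then use the \(O(\eta)\) finite-horizon bound with the \(\ell^1\)-Lipschitz property of \(D_{F^\ast}\), and finally invoke the interior-convergence theorem for each fixed \(\eta>0\). Your explicit \(\sqrt n\) conversion from the Euclidean bound of \Cref{thm:fastslowbridge} to the \(\ell^1\) continuity estimate is a small bookkeeping detail that the paper leaves implicit.
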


\begin{proof}
Choose \(T_\delta\) such that the fixed-expert solution satisfies
\(D_{F^\ast}(p_0(T_\delta))<\delta/2\).
The finite-horizon estimate
\eqref{eq:fastslowbound} and the continuity bound
\(|D_{F^\ast}(p)-D_{F^\ast}(\tilde p)|\leq\|p-\tilde p\|_1\)
imply
\eqref{eq:transient-face-approach} for sufficiently small \(\eta\). For
fixed \(\eta>0\), \Cref{thm:jointgradientinteriorfreeze} gives convergence to
an interior limiting routing vector. Since \(F^\ast\) is a proper subset, the
complementary routing mass remains strictly positive, yielding
\eqref{eq:fixed-rate-interior-off-face}.
\end{proof}

The interior mass floor need not be uniform as \(\eta\downarrow0\).
Recovery to a prescribed threshold requires sufficient normal motion,
which we study next.

\section{Recovery-time scaling near simplex boundaries}
\label{sec:boundary-recovery}

Near a simplex face, complementary-expert covariance and relative gradient advantage determine the normal velocity. We separate these contributions and relate the resulting recovery coefficient to the time needed to reach a threshold.

\subsection{Normal-velocity decomposition}

\begin{proposition}[Exact normal-velocity decomposition]
\label{prop:generalfaceboundarymobility}
Let \(p(t)\in\operatorname{int}(\Delta^{n-1})\) solve
\(\dot p=-J(p)^2g(t)\), with \(n\geq2\),
where \(g(t)\in\R^n\) is locally integrable along the trajectory. Fix an expert
\(j\), write \(x=p_j\), and let \(w=(w_k)_{k\neq j}\) be the conditional
probability vector on the face \(p_j=0\), with
\[
 w_k:=\frac{p_k}{1-x},\qquad
 \bar g_{-j}:=\sum_{k\neq j}w_kg_k,\qquad
 \Delta_j:=\bar g_{-j}-g_j .
\]
A positive \(\Delta_j\) favors the suppressed coordinate relative to the
conditional mean gradient of its complementary experts. Define
\begin{equation*}
 \Lambda_j:=\sum_{k\neq j}w_k^2(g_k-\bar g_{-j}),
 \qquad
 \Gamma_j:=\Lambda_j+x(1+\|w\|_2^2)\Delta_j,
\end{equation*}
where \(\Lambda_j=\operatorname{Cov}_{k\sim w}(w_k,g_k)\) is the weighted
covariance for an index \(k\) sampled according to \(w_k\), and
\(\Gamma_j\) is the effective recovery coefficient. Positive covariance contributes to recovery even when \(\Delta_j=0\). The following identities
hold almost everywhere:
\begin{equation}
\label{eq:generalfaceexactnormal}
\begin{aligned}
 \dot x&=x(1-x)^2\bigl[\Lambda_j+x(1+\|w\|_2^2)\Delta_j\bigr]
 =x(1-x)^2\Gamma_j,\\
 \frac{d}{dt}\log\frac{x}{1-x}&=(1-x)\Gamma_j .
\end{aligned}
\end{equation}
\end{proposition}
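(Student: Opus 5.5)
The plan is a direct computation of the $j$th coordinate of $J(p)^2g$, factored in two stages through the intermediate vector $s:=J(p)g$. The structural fact is that $J(p)u$ has coordinates $p_i(u_i-p^\top u)$. So $s_i=p_i(g_i-\bar g)$ with $\bar g:=p^\top g$, and then $\dot x=-(J(p)s)_j=-x\,(s_j-p^\top s)$. Since $g$ is only locally integrable, I will treat $p(t)$ as an absolutely continuous (Carathéodory) solution. Every identity below is then a pointwise algebraic identity in $(p,g)$, valid at each time where $\dot p$ exists and equals $-J(p)^2g$, that is, almost everywhere. The second line of \eqref{eq:generalfaceexactnormal} follows from the first by the chain rule, because $x\in(0,1)$ in the interior, and $\log\frac{x}{1-x}$ is absolutely continuous with derivative $\dot x/(x(1-x))$.

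The key step is to rewrite everything in terms of $x$, $w$, $\bar g_{-j}$ and $\Delta_j$. First, $\bar g=xg_j+(1-x)\bar g_{-j}=\bar g_{-j}-x\Delta_j$. Hence
\[
g_j-\bar g=-(1-x)\Delta_j,\qquad g_k-\bar g=(g_k-\bar g_{-j})+x\Delta_j\quad(k\neq j).
\]
Using $p_k=(1-x)w_k$, this gives
\[
s_j=-x(1-x)\Delta_j,\qquad s_k=(1-x)w_k\bigl[(g_k-\bar g_{-j})+x\Delta_j\bigr].
\]
Next I compute $p^\top s=xs_j+(1-x)\sum_{k\neq j}w_ks_k$. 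Since $\sum_{k\neq j}w_k^2(g_k-\bar g_{-j})=\Lambda_j$ and $\sum_{k\neq j}w_k^2=\|w\|_2^2$, this yields
\[
p^\top s=-x^2(1-x)\Delta_j+(1-x)^2\bigl(\Lambda_j+x\|w\|_2^2\Delta_j\bigr).
\]

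Subtracting, the two $\Delta_j$ terms from $s_j$ and $xs_j$ combine as $-x(1-x)\Delta_j+x^2(1-x)\Delta_j=-x(1-x)^2\Delta_j$. This gives
\[
s_j-p^\top s=-(1-x)^2\bigl[\Lambda_j+x(1+\|w\|_2^2)\Delta_j\bigr],
\]
and multiplying by $-x$ gives $\dot x=x(1-x)^2\Gamma_j$. This bookkeeping, in particular checking that the $(1-x)$ factors combine into the common $(1-x)^2$, is the only step where I expect care to be needed. Finally, for the covariance interpretation, $\sum_{k\neq j}w_k(g_k-\bar g_{-j})=0$. Hence $\Lambda_j=\sum_k w_k\,(w_k-\textstyle\sum_l w_l^2)(g_k-\bar g_{-j})$, which is exactly $\operatorname{Cov}_{k\sim w}(w_k,g_k)$.
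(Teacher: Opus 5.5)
Your proposal is correct and takes essentially the same route as the paper. Both arguments factor through $s=J(p)g$ and substitute $p_k=(1-x)w_k$; the paper differentiates the log-odds first, $\frac{d}{dt}\log\frac{x}{1-x}=\frac{\sum_{k\neq j}p_ks_k}{1-x}-s_j$, and multiplies by $x(1-x)$, whereas you compute $\dot x=-x(s_j-p^\top s)$ first and divide, which is the same identity since $p^\top s=xs_j+\sum_{k\neq j}p_ks_k$. Your algebra checks out line by line, and your explicit verification that $\Lambda_j$ is the $w$-weighted covariance, which the paper only asserts, is a useful addition.
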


\begin{proof}
Let \(s=J(p)g\). Differentiating the singleton log-odds and substituting the
conditional probabilities gives
\[
 \frac{d}{dt}\log\frac{x}{1-x}
 =\frac{\sum_{k\neq j}p_ks_k}{1-x}-s_j
 =(1-x)\bigl[\Lambda_j+x(1+\|w\|_2^2)\Delta_j\bigr].
\]
Multiplication by \(x(1-x)\) proves the normal-velocity identity.
\end{proof}

The covariance term is positive when larger complementary weights \(w_k\) tend to accompany larger probability-gradient values \(g_k\). It can therefore increase \(p_j\) even when the mean relative advantage \(\Delta_j\) is zero. When the two terms oppose each other, their sum \(\Gamma_j\) determines whether \(p_j\) increases.

\subsection{Boundary degeneracy and recovery-time scaling}

The factor \(x\) in \eqref{eq:generalfaceexactnormal} expresses softmax
boundary degeneracy. The normal velocity is \(\Theta(x^{1+\alpha})\) when
\(\Gamma_j\asymp x^\alpha\), including linear motion when
\(\Gamma_j\asymp1\). The recovery-time bounds below compare the coefficient
with powers of \(x\) throughout the path to the threshold.

\begin{theorem}[Recovery time from the order of normal degeneracy]
\label{thm:mobility-exponent}
Retain the notation and hypotheses of
\Cref{prop:generalfaceboundarymobility}, with \(x(0)=\epsilon\) and
\(0<\epsilon<\theta<1\). Define
\(T_\theta:=\inf\{t\geq0:x(t)=\theta\}\), with \(\inf\varnothing=\infty\).
Assume the solution exists and remains in the simplex interior on
\([0,T_\theta)\), with global existence when \(T_\theta=\infty\).
For \(\alpha\geq0\), put
\[
 H_{1+\alpha}(a,b):=\int_a^b\frac{du}{u^{1+\alpha}(1-u)^2}.
\]
If there are constants \(0<c\leq c'\) with
\[
 cx(t)^\alpha\leq\Gamma_j(t)\leq c'x(t)^\alpha,
 \qquad 0\leq t<T_\theta,
\]
then \(T_\theta<\infty\) and
\begin{equation}
\label{eq:mobility-exponent-bounds}
 \frac{H_{1+\alpha}(\epsilon,\theta)}{c'}
 \leq T_\theta\leq
 \frac{H_{1+\alpha}(\epsilon,\theta)}{c}.
\end{equation}
For a family indexed by \(\epsilon\), with fixed \(n,\theta,\alpha\)
and constants \(c,c'\) independent of \(\epsilon\), this gives
\(T_\theta=\Theta(\log(1/\epsilon))\) when \(\alpha=0\) and
\(T_\theta=\Theta(\epsilon^{-\alpha})\) when \(\alpha>0\).
\end{theorem}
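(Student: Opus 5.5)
The plan is to integrate the exact scalar identity $\dot x=x(1-x)^2\Gamma_j$ from \Cref{prop:generalfaceboundarymobility} along the path, using the two-sided comparison to sandwich the velocity. On $[0,T_\theta)$ the hypothesis gives
\[
c\,x^{1+\alpha}(1-x)^2\leq\dot x\leq c'\,x^{1+\alpha}(1-x)^2
\]
almost everywhere. Since $x$ stays in $(0,1)$ on this interval, the lower bound makes $x$ strictly increasing. The identity holds almost everywhere and $x$ is absolutely continuous, because $g$ is locally integrable and $J(p)^2$ is bounded. Hence the change of variables
\[
\Phi(t):=H_{1+\alpha}(\epsilon,x(t))
\]
is legitimate: $\Phi$ is absolutely continuous with $\dot\Phi=\dot x/\bigl(x^{1+\alpha}(1-x)^2\bigr)\in[c,c']$ a.e.

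Integrating gives
\[
ct\leq H_{1+\alpha}(\epsilon,x(t))\leq c't,\qquad 0\leq t<T_\theta .
\]
\textbf{Finiteness of $T_\theta$.} Suppose $T_\theta=\infty$; global existence is assumed in this case. Then $x(t)<\theta$ for all $t$ by continuity and $x(0)<\theta$. This gives $H_{1+\alpha}(\epsilon,x(t))\leq H_{1+\alpha}(\epsilon,\theta)<\infty$, because the integrand is bounded on $[\epsilon,\theta]$. That contradicts $ct\to\infty$, so $T_\theta<\infty$.

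\textbf{The two-sided bound.} Letting $t\uparrow T_\theta$ and using continuity, $x(T_\theta^-)=\theta$, and $H$ is continuous in its upper limit. This yields $cT_\theta\leq H_{1+\alpha}(\epsilon,\theta)\leq c'T_\theta$, which is \eqref{eq:mobility-exponent-bounds}. The one delicate point is that the comparison is only assumed on the half-open interval. This is harmless because the integrated inequalities pass to the limit. The positivity $x>0$ and $x<1$ on the interval is part of the interior assumption.

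\textbf{Asymptotics.} For fixed $\theta<1$ we have $(1-u)^{-2}\in[1,(1-\theta)^{-2}]$ on $[\epsilon,\theta]$, so $H_{1+\alpha}(\epsilon,\theta)\asymp\int_\epsilon^\theta u^{-1-\alpha}\,du$. For $\alpha=0$ this integral is $\log(\theta/\epsilon)\asymp\log(1/\epsilon)$ as $\epsilon\downarrow0$. For $\alpha>0$ it is $(\epsilon^{-\alpha}-\theta^{-\alpha})/\alpha\asymp\epsilon^{-\alpha}$ once $\epsilon\leq\theta/2$, say.

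Since $c,c'$ are independent of $\epsilon$, the asymptotics give the stated $\Theta$ rates; $n$ plays no role beyond fixing the setting. I expect no real obstacle. The only care needed is the a.e. chain rule and the endpoint limit, both handled by absolute continuity.
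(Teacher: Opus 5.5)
Your proposal is correct and follows the paper's own argument. Like the paper, you sandwich $\dot x$ using the exact identity $\dot x=x(1-x)^2\Gamma_j$, separate variables to obtain $cT_\theta\le H_{1+\alpha}(\epsilon,\theta)\le c'T_\theta$, and derive the rates by bounding $(1-u)^{-2}$ between $1$ and $(1-\theta)^{-2}$; your absolute-continuity chain rule and the contradiction argument for $T_\theta<\infty$ simply make explicit what the paper compresses into ``separation of variables.''
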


\begin{proof}
Substituting the exponent bounds into
\eqref{eq:generalfaceexactnormal} gives
\[
cx^{1+\alpha}(1-x)^2
\leq\dot x
\leq
c'x^{1+\alpha}(1-x)^2,
\]
and separation of variables yields
\eqref{eq:mobility-exponent-bounds}. For \(\alpha>0\), the uniform bounds
\(1\leq(1-u)^{-2}\leq(1-\theta)^{-2}\)
on \([\epsilon,\theta]\) yield
\[
 \frac{\epsilon^{-\alpha}-\theta^{-\alpha}}{\alpha}
 \leq H_{1+\alpha}(\epsilon,\theta)
 \leq
 \frac{\epsilon^{-\alpha}-\theta^{-\alpha}}
 {\alpha(1-\theta)^2}.
\]
This proves the polynomial order for every fixed \(\alpha>0\). For
\(\alpha=0\), \(H_1(\epsilon,\theta)=\log(1/\epsilon)+O_\theta(1)\),
where the subscript allows dependence on the fixed threshold.
\end{proof}

A coefficient bounded away from zero gives logarithmic recovery; linear
vanishing gives inverse-probability recovery. The decomposition reduces the
recovery-time question to the evolution of complementary-expert covariance
and relative gradient advantage along the path. The next corollary gives
sufficient structural conditions for these two regimes.

\begin{corollary}[Logarithmic and inverse-probability recovery criteria]
\label{cor:structural-recovery}
Retain the trajectory and recovery-time hypotheses of
\Cref{thm:mobility-exponent}, with fixed \(n,\theta\). Write \(H_1\) and
\(H_2\) for the corresponding integrals at \(\alpha=0\) and \(\alpha=1\).
All bounds below hold until \(T_\theta\), uniformly over the small-mass family.
\begin{enumerate}
\item If \(0<\lambda_-\leq\Lambda_j\leq\lambda_+\),
\(|\Delta_j|\leq D\), and \(2\theta D<\lambda_-\), then
\[
 \frac{H_1(\epsilon,\theta)}{\lambda_++2\theta D}
 \leq T_\theta\leq
 \frac{H_1(\epsilon,\theta)}{\lambda_--2\theta D},
 \qquad T_\theta=\Theta(\log(1/\epsilon)).
\]
\item If \(|\Lambda_j|\leq Cx\),
\(0<\Delta_-\leq\Delta_j\leq\Delta_+\), and
\(c_-:=(1+1/(n-1))\Delta_- - C>0\), then
\[
 \frac{H_2(\epsilon,\theta)}{C+2\Delta_+}
 \leq T_\theta\leq
 \frac{H_2(\epsilon,\theta)}{c_-},
 \qquad T_\theta=\Theta(\epsilon^{-1}).
\]
\end{enumerate}
In particular, positive bounds
\(\underline\Gamma\leq\Gamma_j\leq\overline\Gamma\) give
\begin{equation*}
 \frac{H_1(\epsilon,\theta)}{\overline\Gamma}
 \leq T_\theta\leq
 \frac{H_1(\epsilon,\theta)}{\underline\Gamma}.
\end{equation*}
When \(\Lambda_j=0\) and
\(0<\underline\Delta\leq\Delta_j\leq\overline\Delta\), the second case
specializes to
\begin{equation*}
 \frac{H_2(\epsilon,\theta)}{2\overline\Delta}
 \leq T_\theta\leq
 \frac{H_2(\epsilon,\theta)}
 {(1+1/(n-1))\underline\Delta}.
\end{equation*}
\end{corollary}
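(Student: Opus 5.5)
The plan is to reduce both cases to two-sided bounds on \(\Gamma_j=\Lambda_j+x(1+\|w\|_2^2)\Delta_j\) of the form required by \Cref{thm:mobility-exponent}, with \(\alpha=0\) in case 1 and \(\alpha=1\) in case 2. All bounds are taken on \([0,T_\theta)\), where \(0<x<\theta\). The only auxiliary fact is the range of \(\|w\|_2^2\). Since \(w\) is a probability vector on \(n-1\) coordinates, Cauchy--Schwarz gives \(1/(n-1)\leq\|w\|_2^2\leq1\). Hence
\[
 1+\tfrac1{n-1}\leq 1+\|w\|_2^2\leq 2 .
\]

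\textbf{Case 1.} Use \(x<\theta\) and \(|\Delta_j|\leq D\) to get \(|x(1+\|w\|_2^2)\Delta_j|\leq 2\theta D\). Therefore
\[
 \lambda_--2\theta D\leq\Gamma_j\leq\lambda_++2\theta D .
\]
The lower constant is positive by hypothesis. Applying \Cref{thm:mobility-exponent} with \(\alpha=0\), \(c=\lambda_--2\theta D\) and \(c'=\lambda_++2\theta D\) gives finiteness of \(T_\theta\) and the displayed \(H_1\) bounds. The order \(\Theta(\log(1/\epsilon))\) follows because \(H_1(\epsilon,\theta)=\log(1/\epsilon)+O_\theta(1)\), as already noted in that proof.

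\textbf{Case 2.} Here \(\Delta_j>0\), so the second term is bracketed by
\[
 x\Bigl(1+\tfrac1{n-1}\Bigr)\Delta_-\leq x(1+\|w\|_2^2)\Delta_j\leq 2x\Delta_+ .
\]
Adding \(-Cx\leq\Lambda_j\leq Cx\) gives
\[
 c_-x\leq\Gamma_j\leq(C+2\Delta_+)x .
\]
Then \Cref{thm:mobility-exponent} with \(\alpha=1\) yields the \(H_2\) bounds and \(T_\theta=\Theta(\epsilon^{-1})\), since its proof bounds \(H_2\) between multiples of \(\epsilon^{-1}-\theta^{-1}\).

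\textbf{Particular statements.} The bounds \(\underline\Gamma\leq\Gamma_j\leq\overline\Gamma\) feed directly into the theorem with \(\alpha=0\). The case \(\Lambda_j=0\) is case 2 with \(C=0\): the upper constant becomes \(2\overline\Delta\) and the lower constant becomes \((1+1/(n-1))\underline\Delta\).

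The only potential obstacle is that the constants must be independent of \(\epsilon\), so that the \(\Theta\) statements hold. This holds because \(\lambda_\pm\), \(D\), \(C\), \(\Delta_\pm\), \(n\) and \(\theta\) are fixed by hypothesis uniformly over the family. The inequality \(x<\theta\) on \([0,T_\theta)\) follows from continuity of \(x\) together with \(x(0)=\epsilon<\theta\) and the definition of \(T_\theta\) as a first hitting time.
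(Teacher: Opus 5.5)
Your proof is correct and follows the paper's own argument. You use \(1/(n-1)\leq\|w\|_2^2\leq1\) to bracket \(\Gamma_j\) between \(\lambda_--2\theta D\) and \(\lambda_++2\theta D\) in the first case and between \(c_-x\) and \((C+2\Delta_+)x\) in the second, then invoke \Cref{thm:mobility-exponent} with \(\alpha=0\) and \(\alpha=1\), which is exactly the paper's route.
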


\begin{proof}
Since \(\frac1{n-1}\leq\|w\|_2^2\leq1\),
the first case bounds \(\Gamma_j\) between
\(\lambda_--2\theta D\) and \(\lambda_++2\theta D\). The second case bounds
\(\Gamma_j\) between \(c_-x\) and \((C+2\Delta_+)x\). Applying the recovery
estimate
\(\dot x=x(1-x)^2\Gamma_j\)
and integrating gives the claims.
\end{proof}

\subsection{Optimization geometry and recovery scaling}

For two experts, \(\Lambda_j=0\). An advantage bounded above and away from zero yields inverse-probability recovery under squared-Jacobian flow. Comparing three flows under the same advantage signal isolates the effect of optimization geometry.

\begin{corollary}[Two-expert recovery times]
\label{cor:two-expert-recovery-clocks}
For two experts, write \(p(t)\) for the scalar probability of the expert
favored after the target changes, and let \(0<p(0)=\epsilon<\theta<1\).
Write \(g_{\rm fav}\) and \(g_{\rm other}\) for the corresponding components
of \(g\), and put
\(\Delta g=g_{\rm other}-g_{\rm fav}\). In the following three flows,
\(J\) is evaluated at \((p,1-p)\), and \(I\) is the \(2\times2\) identity
matrix:
\[
 \dot p=
 \begin{cases}
 2p^2(1-p)^2\Delta g(t),&M(p)=J(p)^2,\\
 p(1-p)\Delta g(t),&M(p)=J(p),\\
 \frac12\Delta g(t),&M(p)=I-\frac12\mathbf1\mathbf1^\top ,
 \end{cases}
\]
and, for \(0<a,b<1\), define
\[
 I_2(a,b):=\int_a^b\frac{du}{2u^2(1-u)^2},\quad
 I_1(a,b):=\log\frac{b(1-a)}{(1-b)a},\quad
 I_0(a,b):=2(b-a),
\]
with \(A(t)=\int_0^t\Delta g(\tau)\,d\tau\) and \(r=2,1,0\) in the three
cases, respectively. Until first boundary contact, the corresponding flow
satisfies
\begin{equation*}
 I_r(\epsilon,p(t))=A(t).
\end{equation*}
Hence, on any horizon \(T\) preceding boundary contact, recovery by \(T\) is
equivalent to
\[
\sup_{t\leq T}A(t)\geq I_r(\epsilon,\theta).
\]
Let \(T_\theta\) denote the first hitting time. If, for
\(0\leq t<T_\theta\),
\(0<\underline\gamma\leq\Delta g\leq\overline\gamma\), then
\[
 \frac{I_r(\epsilon,\theta)}{\overline\gamma}
 \leq T_\theta\leq
 \frac{I_r(\epsilon,\theta)}{\underline\gamma}.
\]
For fixed \(\theta\),
\[
I_2(\epsilon,\theta)=\frac1{2\epsilon}
+O\!\left(\log\frac1\epsilon\right),\qquad
I_1(\epsilon,\theta)=\log\frac1\epsilon+O_\theta(1),\qquad
I_0(\epsilon,\theta)=2\theta+O(\epsilon).
\]
Thus advantages bounded above and away from zero, uniformly over the initial-mass family, give
\(\Theta(\epsilon^{-1})\), \(\Theta(\log(1/\epsilon))\), and
\(\Theta(1)\) recovery times, respectively.
\end{corollary}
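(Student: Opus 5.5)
The plan is to reduce each of the three flows to a scalar separable ODE in the favored probability, integrate exactly, and then read off the hitting-time bounds and asymptotics.

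\textbf{Reduction to scalar flows.} For two experts write \(p=(p,1-p)\). Then \(J(p)=p(1-p)\begin{pmatrix}1&-1\\-1&1\end{pmatrix}\), and the matrix \(\begin{pmatrix}1&-1\\-1&1\end{pmatrix}\) squares to twice itself, so \(J(p)^2=2p^2(1-p)^2\begin{pmatrix}1&-1\\-1&1\end{pmatrix}\). Applying \(-M(p)g\) and taking the favored coordinate yields the factor \(g_{\rm other}-g_{\rm fav}=\Delta g\). This produces the three scalar equations, including \(\tfrac12\Delta g\) for the Euclidean projection \(I-\tfrac12\mathbf1\mathbf1^\top\). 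One can also check the first case against \Cref{prop:generalfaceboundarymobility}: with \(n=2\) we have \(\Lambda_j=0\) and \(\|w\|^2=1\), so \(\dot x=2x^2(1-x)^2\Delta_j\).

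\textbf{Exact integration.} Each equation has the form \(\dot p=\phi_r(p)\Delta g(t)\) with \(\phi_r>0\) on \((0,1)\). I will define \(I_r(\epsilon,\cdot)\) as the antiderivative of \(1/\phi_r\) vanishing at \(\epsilon\). For the three cases this gives \(\int du/(2u^2(1-u)^2)\), the logit difference, and \(2(b-a)\). The chain rule gives \(\frac{d}{dt}I_r(\epsilon,p(t))=\Delta g(t)\) almost everywhere, which is valid by absolute continuity since \(\Delta g\) is locally integrable. Integrating from \(0\) gives \(I_r(\epsilon,p(t))=A(t)\) until boundary contact. Since \(I_r(\epsilon,\cdot)\) is strictly increasing, \(p(t)\geq\theta\) is equivalent to \(A(t)\geq I_r(\epsilon,\theta)\). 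Continuity of \(p\) and the intermediate value theorem then show that hitting \(\theta\) by time \(T\) is equivalent to \(\sup_{t\leq T}A(t)\geq I_r(\epsilon,\theta)\). In the Euclidean case, boundary contact can occur, which is why the statement restricts to horizons preceding it.

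\textbf{Time bounds.} Under \(\underline\gamma\leq\Delta g\leq\overline\gamma\) on \([0,T_\theta)\), the function \(A\) is increasing with \(\underline\gamma t\leq A(t)\leq\overline\gamma t\). Evaluating the identity at \(T_\theta\) gives \(A(T_\theta)=I_r(\epsilon,\theta)\), and this yields both bounds. Finiteness of \(T_\theta\) follows because \(A(t)\to\infty\) would otherwise violate \(p<\theta\). For the Euclidean flow, \(p\) increases monotonically and so cannot reach \(0\); it could reach \(1\) only after passing \(\theta\).

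\textbf{Asymptotics.} These are the only step requiring care. Use the partial fractions
\[
\frac1{u^2(1-u)^2}=\frac1{u^2}+\frac2u+\frac2{1-u}+\frac1{(1-u)^2}.
\]
Integrating gives \(I_2(\epsilon,\theta)=\tfrac1{2\epsilon}-\tfrac1{2\theta}+\log(\theta/\epsilon)+O_\theta(1)\), which is \(\tfrac1{2\epsilon}+O(\log(1/\epsilon))\). Next, \(I_1=\log(1/\epsilon)+\log(\theta/(1-\theta))+\log(1-\epsilon)\). Finally, \(I_0=2\theta-2\epsilon\). Dividing by the fixed constants \(\underline\gamma,\overline\gamma\) gives the three \(\Theta\)-orders.
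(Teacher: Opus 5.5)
Your proposal is correct and follows the paper's proof: the chain rule gives \(\frac{d}{dt}I_r(\epsilon,p(t))=\Delta g(t)\), and integration, monotonicity of \(I_r(\epsilon,\cdot)\), and the endpoint expansions yield the equivalence, the hitting-time bounds, and the three orders. You also supply details the paper leaves implicit: the two-expert form \(J(p)=p(1-p)\begin{pmatrix}1&-1\\-1&1\end{pmatrix}\), the partial-fraction evaluation of \(I_2\), and the monotonicity argument that rules out boundary contact before \(T_\theta\).
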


\begin{proof}
The chain rule gives \(\frac{d}{dt}I_r(\epsilon,p(t))=\Delta g(t)\).
Integration, monotonicity, and the endpoint expansions give the claims.
\end{proof}

\section{Input-conditioned routing and discrete recovery}
\label{sec:neural-bridge}

For a neural router, each input's probabilities depend on shared parameters
and changing representations. We characterize initial separation under an
affine router, then derive an exact log-odds criterion for discrete updates.

\subsection{First-order separation under a shared affine router}

\begin{proposition}[First-order separation of affine routing distributions]
\label{prop:coupledbreakfreezefirstorder}
Let \(K\) groups have weights \(\pi_k>0\) with \(\sum_k \pi_k=1\), targets
\(y_k\in\R^d\), and fixed feature vectors \(h_k\in\R^{d_h}\),
indexed by \(k=1,\ldots,K\). Consider shared parameters
\(W\in\R^{n\times d_h}\) and \(b\in\R^n\) in the affine router
\[
 q^{(k)}=Wh_k+b,\qquad p^{(k)}=\operatorname{Softmax}(q^{(k)}),
\]
trained by gradient flow on
\[
 \Phi(W,b,E)
 :=\frac12\sum_{k=1}^K \pi_k\|E^\top p^{(k)}-y_k\|^2.
\]
The experts may be fixed or may follow their joint gradient flow with
expert-rate multiplier \(\eta_E\geq0\), with \(\eta_E=0\) corresponding to fixed experts. Suppose that at a state
\((W_0,b_0,E_0)\) all routing distributions coincide at \(p_0\). Define
\begin{equation*}
 \mathcal A_0
 :=\sum_{a=1}^K \pi_a
 M(p_0)E_0(E_0^\top p_0-y_a)h_a^\top\in\R^{n\times d_h}.
\end{equation*}
Then, for every pair of groups \(k,\ell\),
\begin{equation}
\label{eq:featureconditionedvelocitydifference}
 \dot p^{(k)}(0)-\dot p^{(\ell)}(0)
 =-\mathcal A_0(h_k-h_\ell).
\end{equation}
Consequently, if
\[
 \chi_{\rm feat}:=\max_{k,\ell}
 \|\mathcal A_0(h_k-h_\ell)\|>0,
\]
at least two group routing distributions differ for all sufficiently small
positive times, with
\[
 \max_{k,\ell}\|p^{(k)}(t)-p^{(\ell)}(t)\|
 =\chi_{\rm feat}t+o(t)\qquad(t\downarrow0).
\]
Here \(o(t)\) denotes a remainder whose ratio to \(t\) tends to zero.
If all features coincide, or all feature contrasts lie in
the nullspace \(\ker(\mathcal A_0)\), first-order separation is impossible.
With equal weights and one-hot features, the criterion reduces, up to a
common time factor, to the independent-group case.
\end{proposition}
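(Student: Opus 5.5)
The plan is to compute the router parameter velocities at the coinciding state, push them through the softmax Jacobian, and then read off separation from a first-order Taylor expansion.

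\emph{Router velocity.} Write \(g_a:=E_0(E_0^\top p_0-y_a)\) for the probability gradient of group \(a\) at the common routing \(p_0\). The logit gradient of group \(a\) is then \(J(p_0)g_a\). By the chain rule through \(q^{(a)}=Wh_a+b\), gradient flow on \(\Phi\) gives
\[
\dot W(0)=-\sum_a\pi_a J(p_0)g_ah_a^\top,\qquad \dot b(0)=-\sum_a\pi_a J(p_0)g_a .
\]

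\emph{Logit and probability velocities.} Differentiate \(q^{(k)}=Wh_k+b\). The expert velocity \(\dot E\) does not enter \(q^{(k)}\), so it plays no role regardless of \(\eta_E\). This gives \(\dot q^{(k)}(0)=\dot W(0)h_k+\dot b(0)\). Since all groups share \(p_0\), transporting by \(J(p_0)\) yields
\[
\dot p^{(k)}(0)=J(p_0)\dot q^{(k)}(0)=-\mathcal A_0h_k-J(p_0)^2\sum_a\pi_a g_a ,
\]
using \(M(p_0)=J(p_0)^2\). The bias contribution is common to all \(k\), so it cancels in differences. This proves \eqref{eq:featureconditionedvelocitydifference}.

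\emph{First-order separation.} The joint flow is polynomial in \((W,b,E)\) composed with softmax, hence smooth, so the solution is \(C^1\) near \(t=0\). Each difference \(p^{(k)}(t)-p^{(\ell)}(t)\) vanishes at \(t=0\), so a Taylor expansion gives \(p^{(k)}(t)-p^{(\ell)}(t)=-t\mathcal A_0(h_k-h_\ell)+o(t)\). There are finitely many pairs, so the maximum of the norms is \(t\chi_{\rm feat}+o(t)\), uniformly in the pair. If \(\chi_{\rm feat}>0\), this quantity is positive for small \(t>0\). Conversely, if every contrast lies in \(\ker\mathcal A_0\), all first-order differences vanish, and no separation is visible at order \(t\).

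\emph{One-hot reduction.} Take \(h_k=e_k\) and \(\pi_k=1/K\). Then \(\mathcal A_0(h_k-h_\ell)=\tfrac1K M(p_0)(g_k-g_\ell)\). This is \(1/K\) times the velocity difference \(-M(p_0)(g_k-g_\ell)\) of independent per-group flows \eqref{eq:pflow}, which accounts for the common time factor.

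The main obstacle is bookkeeping rather than analysis: separating the group-dependent term \(\mathcal A_0h_k\) from the shared bias term, and verifying that the expert dynamics contribute nothing to \(\dot p^{(k)}(0)\) beyond the common state. The \(o(t)\) claim needs only \(C^1\) regularity, which smoothness of the vector field supplies.
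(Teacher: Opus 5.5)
Your proposal is correct and matches the paper's proof. The paper writes the shared-point velocity as \(\dot p^{(k)}=-\sum_a\pi_aM(p_0)E_0R_a(1+h_a^\top h_k)\) and subtracts to obtain the velocity difference and its first-order expansion; this formula is exactly your \(-\mathcal A_0h_k-J(p_0)^2\sum_a\pi_ag_a\), with the bias term (the ``1'') and the feature term combined. Your extra details fill in steps the paper leaves implicit: smoothness for the \(o(t)\) remainder, uniformity of the maximum over finitely many pairs, and the explicit \(1/K\) factor in the one-hot case.
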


\begin{proof}
Let \(R_a=E_0^\top p_0-y_a\). At the shared point,
\[
 \dot p^{(k)}
 =-\sum_{a=1}^K \pi_aM(p_0)E_0R_a(1+h_a^\top h_k).
\]
Subtracting the expressions for \(k\) and \(\ell\) yields
\eqref{eq:featureconditionedvelocitydifference} and its first-order
expansion. The instantaneous router velocity depends on the current expert
outputs, while expert velocities influence subsequent evolution.
\end{proof}

The action of \(\mathcal A_0\) on a feature contrast determines initial
routing separation; shared-bias updates cancel from this contrast.
Appendix~\ref{sec:shared-conditioned} gives the shared-bias identity and tests the
affine separation criterion.

\subsection{Log-odds criterion for discrete recovery}

Accumulated relative-logit motion characterizes threshold recovery in both continuous and discrete time. For neural routers, the identity applies separately to each input.

For \(\dot p=-J(p)^2g\), set \(s=J(p)g\). Integrating the normal log-odds
velocity gives
\begin{equation*}
 \log\frac{p_j(T)}{1-p_j(T)}
 =\log\frac{p_j(0)}{1-p_j(0)}
 +\int_0^T a_j(t)\,dt,
 \qquad
 a_j:=\frac{\sum_{k\neq j}p_ks_k}{1-p_j}-s_j .
\end{equation*}
Thus a coordinate starting at \(\epsilon\) reaches \(\theta\) precisely when
the accumulated log-odds change reaches
\(B_{\theta,\epsilon}:=\log\frac{\theta(1-\epsilon)}{(1-\theta)\epsilon}\).
The discrete-time counterpart holds for arbitrary optimizer increments.

\begin{proposition}[Exact discrete-time log-odds identity]
\label{prop:discrete-logodds-diagnostic}
For integer update indices \(m\geq0\), let \(q^m\in\R^n\),
\(p^m=\operatorname{Softmax}(q^m)\), and
\(u^m=q^{m+1}-q^m\), with arbitrary logit increments.
For a fixed coordinate \(j\), set
\[
 w_k^m:=\frac{p_k^m}{1-p_j^m}\quad(k\neq j),\qquad
 c_j^m:=\log\!\left(\sum_{k\neq j}w_k^m e^{u_k^m}\right).
\]
With \(\operatorname{logit}(u):=\log(u/(1-u))\) for \(u\in(0,1)\),
every integer update horizon \(N\geq1\) satisfies
\begin{equation}
\label{eq:discrete-singleton-budget}
 \operatorname{logit}(p_j^N)
 =\operatorname{logit}(p_j^0)
  +\sum_{m=0}^{N-1}(u_j^m-c_j^m).
\end{equation}
Consequently, if \(p_j^0=\epsilon\) and \(0<\epsilon<\theta<1\), recovery by
step \(N\) occurs if and only if
\begin{equation}
\label{eq:discrete-recovery-crossing}
 \max_{1\leq m\leq N}
 \sum_{\ell=0}^{m-1}(u_j^\ell-c_j^\ell)
 \geq B_{\theta,\epsilon}.
\end{equation}
\end{proposition}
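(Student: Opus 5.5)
The plan is to compute the one-step change of the singleton log-odds directly from the softmax formula, then telescope over steps. With \(Z^m:=\sum_k e^{q_k^m}\), softmax gives
\[
 \frac{p_j^m}{1-p_j^m}=\frac{e^{q_j^m}}{\sum_{k\neq j}e^{q_k^m}},
\]
which is well defined and positive because finite logits keep every \(p^m\) in the interior. Apply this at step \(m+1\) with \(q^{m+1}=q^m+u^m\). The numerator becomes \(e^{q_j^m}e^{u_j^m}\). For the denominator, factor out \(\sum_{k\neq j}e^{q_k^m}\):
\[
 \sum_{k\neq j}e^{q_k^m+u_k^m}=\Bigl(\sum_{k\neq j}e^{q_k^m}\Bigr)\sum_{k\neq j}w_k^me^{u_k^m}.
\]
This uses \(w_k^m=p_k^m/(1-p_j^m)=e^{q_k^m}/\sum_{i\neq j}e^{q_i^m}\), since the normalizer \(Z^m\) cancels. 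Taking logarithms then gives the exact one-step identity
\[
 \operatorname{logit}(p_j^{m+1})=\operatorname{logit}(p_j^m)+u_j^m-c_j^m .
\]

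Summing this identity over \(m=0,\ldots,N-1\) telescopes to \eqref{eq:discrete-singleton-budget}. No smallness or optimizer assumption on \(u^m\) is needed, because the computation is an algebraic identity.

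For the crossing statement, recovery by step \(N\) means \(p_j^m\geq\theta\) for some \(1\leq m\leq N\); step \(0\) is excluded because \(p_j^0=\epsilon<\theta\). The logit map is strictly increasing on \((0,1)\), so \(p_j^m\geq\theta\) is equivalent to \(\operatorname{logit}(p_j^m)-\operatorname{logit}(\epsilon)\geq\operatorname{logit}(\theta)-\operatorname{logit}(\epsilon)=B_{\theta,\epsilon}\). By the telescoped identity at horizon \(m\), the left side equals the partial sum \(\sum_{\ell<m}(u_j^\ell-c_j^\ell)\). Taking the maximum over \(m\) gives \eqref{eq:discrete-recovery-crossing}.

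No step is a genuine obstacle. The one point to state carefully is that "recovery" is read as reaching at least \(\theta\) at some step \(m\leq N\), which is the discrete analogue of the first hitting time and is why the maximum appears, since discrete iterates can overshoot \(\theta\).
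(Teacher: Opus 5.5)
Your proposal is correct and follows essentially the same route as the paper. Like the paper, you write the singleton log-odds as \(q_j^m-\log\sum_{k\neq j}e^{q_k^m}\), identify \(c_j^m\) as the one-step change of the complementary log-sum-exp via \(w_k^m=e^{q_k^m}/\sum_{i\neq j}e^{q_i^m}\), telescope over steps, and then use monotonicity of the logit to obtain the crossing criterion with \(B_{\theta,\epsilon}=\operatorname{logit}(\theta)-\operatorname{logit}(\epsilon)\).
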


\begin{proof}
Since \(\operatorname{logit}(p_j^m)=q_j^m-\log\sum_{k\neq j}e^{q_k^m}\)
and
\[
c_j^m=\log\sum_{k\neq j}e^{q_k^{m+1}}
       -\log\sum_{k\neq j}e^{q_k^m},
\]
one update contributes \(u_j^m-c_j^m\). Summation and monotonicity give
\eqref{eq:discrete-singleton-budget}--\eqref{eq:discrete-recovery-crossing}.
\end{proof}

Each increment \(u_j^m-c_j^m\) measures target-logit motion relative to all
complementary logits; the maximum cumulative sum accounts for sign-changing
updates. For the continuous flow, the corresponding rate is
\(a_j=(1-p_j)\Gamma_j\). This relates accumulated log-odds to the recovery
coefficient in \Cref{thm:mobility-exponent}.

\Needspace{6\baselineskip}
\subsection{Affine updates and mean routing probability}

For an affine neural router evaluated on an input \(x\), let
\(h^m(x)\in\R^{d_h}\) be its feature vector,
\(W^m\in\R^{n\times d_h}\) its weight matrix, and
\(b^m\in\R^n\) its bias, so that
\(q^m(x)=W^mh^m(x)+b^m\).
The exact logit increment can be decomposed into parameter updates and
representation changes:
\begin{equation*}
 u^m(x)=
 \underbrace{(W^{m+1}-W^m)h^m(x)+(b^{m+1}-b^m)}_{\text{router-parameter contribution}}
 +\underbrace{W^{m+1}(h^{m+1}(x)-h^m(x))}_{\text{representation contribution}}.
\end{equation*}
Here \(q_j^m(x)\) is a raw softmax logit, whereas
\(\operatorname{logit}(p_j^m(x))
=q_j^m(x)-\log\sum_{k\neq j}e^{q_k^m(x)}\) is the scalar one-versus-rest
log-odds. For a fixed collection \(\mathcal X\) of active evaluation-token
inputs, define
\(A_j^N(x):=\sum_{m=0}^{N-1}(u_j^m(x)-c_j^m(x))\).
The mean routing probability is then exactly
\begin{equation*}
 \bar p_j^N
 =
 \frac1{|\mathcal X|}
 \sum_{x\in\mathcal X}
 \sigma\!\left(
 \operatorname{logit}(p_j^0(x))+A_j^N(x)
 \right),
 \qquad
 \sigma(z):=\frac1{1+e^{-z}} .
\end{equation*}
The mean routing probability follows by accumulating each input's log-odds
increments, converting to probabilities, and averaging.
For any comparator \(c\neq j\), the pairwise quantity \(q_j-q_c\) compares the
target with one competing expert, whereas \(c_j^m\) aggregates the motion of
all complementary logits in the exact recovery identity. \Cref{sec:experiments} uses these quantities to relate a router-gradient intervention to input-wise log-odds and the mean routing probability.

\section{Experiments}
\label{sec:experiments}

Reduced-flow integrations compare the derived time scales with numerical
hitting times. Paired Transformer-MoE interventions then test whether
removing a class-conditional router-gradient contrast changes recovery
under matched training. We measure both confirmed recovery of the overall
mean probability and accumulated log-odds on fixed tokens.

\subsection{Experimental setup}

Reduced-flow experiments compare trajectories with theoretical
bounds and independent quadrature calculations, using step refinement to
assess numerical accuracy. Neural comparisons use matched initial parameters,
optimizer states, and minibatch orders; random seeds are treated as
independent units for statistical inference. Layer--initial-mass combinations
and repeated evaluations remain grouped within each seed. Appendices~\ref{sec:sm-face-selection-results}--\ref{sec:sm-statistical-checks} provide numerical refinement, training and inference procedures, and additional input-conditioning experiments.

\subsection{Face selection and expert adaptation}

We enumerate stationary supports in four fixed-expert geometries and
compare their loss barriers with the trajectories in
\Cref{tab:face-selection-evidence}. The representative interior trajectories
enter the strict sublevel and approach the predicted face. A cube trajectory
initialized at a suboptimal boundary equilibrium remains stationary,
illustrating why face invariance alone does not determine selection.

\begin{table}[t]
\caption{Fixed-expert face selection.
Barrier gap is \(\ell_{\rm comp}-r_\ast^2\); entry time is the first sample
with \(\mathcal L<\ell_{\rm comp}\). Outside-face mass \(D_{F^\ast}(p)\)
and projection error \(\|E^\top p-z^\ast\|\) are measured at \(T=2000\).
A dash denotes no entry for the boundary control.}
\label{tab:face-selection-evidence}
\centering
\small
\begin{tabular}{@{}lrrrr@{}}
\toprule
Geometry & Barrier gap & Entry time & \shortstack{Outside mass\\\((\times10^{-3})\)} & \shortstack{Projection error\\\((\times10^{-3})\)}\\
\midrule
Triangle & 0.1225 & 5.38 & 1.62 & 1.64\\
Pentagon edge & 0.3360 & 18.88 & 1.08 & 1.22\\
Redundant hexagon face & 0.0400 & 91.78 & 1.53 & 1.59\\
Cube bottom facet & 0.4880 & 26.60 & 2.51 & 2.51\\
Cube boundary control & 0.4880 & --- & 320 & 482\\
\bottomrule
\end{tabular}
\end{table}

\begin{figure}[t]
\centering
\includegraphics[width=\linewidth]{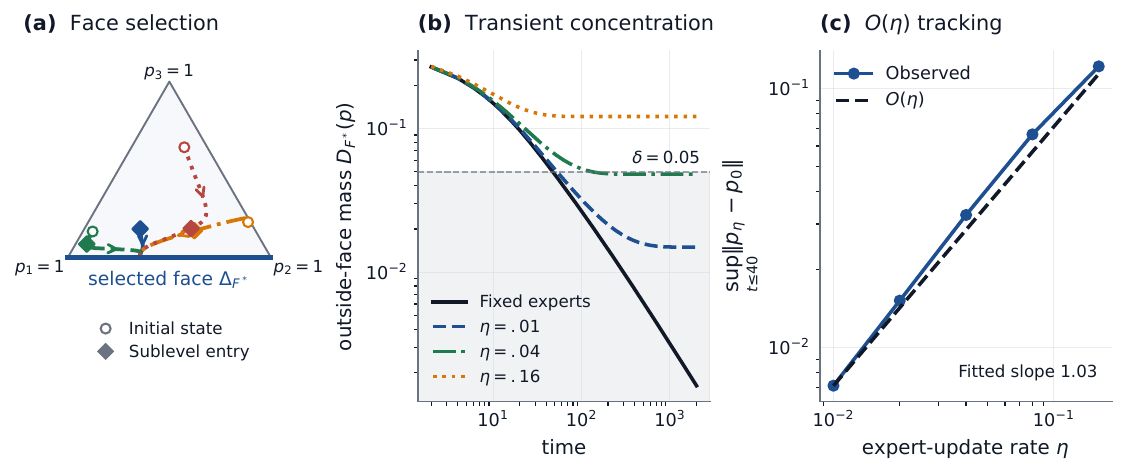}
\caption{Face selection and expert-adaptation time scales.
\textbf{(a)} Triangle trajectories approach \(\Delta_{F^\ast}\); circles mark
initial states and diamonds mark entry below the stationary-loss barrier.
\textbf{(b)} Outside-face mass under fixed and adapting experts; shading
marks \(D_{F^\ast}<0.05\).
\textbf{(c)} Finite-horizon routing error follows the \(O(\eta)\) prediction
of \Cref{thm:fastslowbridge}.}
\label{fig:reduced-flow-overview}
\end{figure}

\Cref{fig:reduced-flow-overview} illustrates the two adaptation time scales.
The finite-horizon routing error scales approximately linearly with
\(\eta\), while positive-rate trajectories retain positive outside-face
mass. This agrees with the approximation and interior-convergence results
of \Cref{thm:fastslowbridge,thm:jointgradientinteriorfreeze}.
Appendices~\ref{sec:sm-face-selection-detail}--\ref{sec:sm-time-scales} and \ref{sec:sm-experiment-sources} give the configurations,
initialization sweep, and numerical refinement.

\subsection{Recovery-time scaling}

We test the covariance and advantage mechanisms in
\Cref{cor:structural-recovery}, then vary the vanishing order of
\(\Gamma_j\) to examine \Cref{thm:mobility-exponent}.

\begin{figure}[t]
\centering
\includegraphics[width=\linewidth]{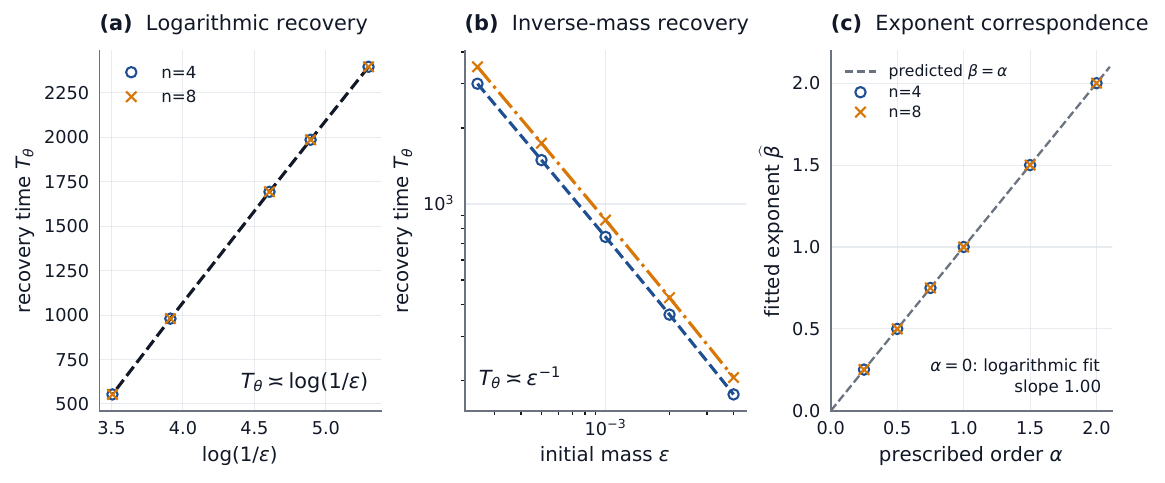}
\caption{Recovery-time scaling in constructed fields.
\textbf{(a)} Fixed positive covariance and zero mean advantage give
logarithmic recovery.
\textbf{(b)} Zero covariance and uniform complementary weights give
inverse-probability recovery under a fixed positive advantage.
Lines show theoretical hitting times and symbols show numerical events
for \(n=4,8\), with \(\theta=0.05\); the curves in (a) coincide.
\textbf{(c)} Fitted exponents agree with prescribed orders in
\(\Gamma_j=x^\alpha\); \(\alpha=0\) uses a separate logarithmic fit.}
\label{fig:multiexpert-recovery-main}
\end{figure}

Numerical hitting times in \Cref{fig:multiexpert-recovery-main}(a,b) agree
with the theoretical integrals to relative error below \(4\times10^{-7}\).
The two fields realize logarithmic and inverse-probability recovery under
the same squared-Jacobian geometry, showing how covariance and advantage
produce different orders of normal motion.

Prescribing \(\Gamma_j=x^\alpha\) extends the comparison to a range of
vanishing orders. The fitted exponents in panel (c) agree with the imposed
orders; at \(\alpha=0\), we fit recovery time against \(\log(1/\epsilon)\).
These constructions test the recovery-time classification across coefficient
orders within squared-Jacobian flow.
Appendices~\ref{sec:sm-recovery-scaling}--\ref{sec:sm-experiment-sources} give the fields, their regularity,
fitting windows, and numerical accuracy checks.

\subsection{Router-gradient interventions}
\label{sec:controlled-scale}

We test whether removing a class-conditional router-gradient contrast
delays recovery in paired Transformer-MoE training.

\paragraph{Matched training and intervention}
We use a six-layer Transformer-MoE with eight experts per layer and top-2
routing, trained on AG News text \cite{zhang2015character} with an
autoregressive next-token objective and class-balanced sampling. AdamW is
used without auxiliary load balancing or router noise. Appendix~\ref{sec:sm-experimental-configuration} gives the model configuration.

In each middle layer, the target expert \(j\) has the smallest mean
pre-dispatch probability among experts with mean at least \(0.05\); the
comparator \(c\) has the largest remaining mean. We select the input class
on which the target's mean probability is largest. After controlled
suppression, a scalar router-bias shift calibrates its overall mean to
\(\epsilon\in\{0.005,0.01,0.02,0.04\}\)
(Appendix~\ref{app:subsection-4-2}). Selection, calibration, and recovery
measurement share a fixed 512-sequence validation sample; optimizer
updates use training data.

Write \(G_i^{\rm sel}\) for the selected-class gradient contribution for
router-parameter row \(i\), including its bias. Both branches start from the same
calibrated parameters and optimizer state and use identical minibatches. We
compare unmodified training (control) with a branch that orthogonally projects the
class-conditional row gradients onto the subspace
\(G_j^{\rm sel}=G_c^{\rm sel}\):
\[
 G_j^{\rm sel},G_c^{\rm sel}
 \leftarrow
 \tfrac12(G_j^{\rm sel}+G_c^{\rm sel}),
 \tfrac12(G_j^{\rm sel}+G_c^{\rm sel}).
\]
Projection precedes global norm clipping and AdamW. The paired response
therefore includes subsequent optimizer and representation dynamics.

\paragraph{Confirmed recovery and inference}
Each branch runs for \(\tau=1{,}024\) steps, with evaluation every 64 steps.
Confirmed recovery occurs at the second of two consecutive evaluations with
overall mean target probability at least \(0.05\). We average confirmation
times truncated at \(\tau\) over the 16 layer--initial-mass combinations within
each seed. Averaging over seeds gives the restricted mean time to confirmed
recovery (RMST) \cite{royston2013restricted};
\(\Delta\mu_\tau\) denotes its projection-minus-control difference. We chose RMST as a post hoc summary of the recovery trajectories. Inference uses
16 paired seeds without token dropping and eight with capacity factor \(1.25\);
95\% confidence intervals (CIs) use percentile-bootstrap resamples of paired
seeds with their complete collections of layer--initial-mass combinations.

Projection delays confirmed recovery under both dispatch policies.
The restricted mean increases by \(44.25\) steps without dropping
(95\% CI \([31.25,57.50]\)) and \(59.00\) steps with capacity constraints
(\([39.50,81.00]\)); both paired-seed intervals lie above zero
(\Cref{tab:mechanism-validation-results}).

\begin{table}[t]
\caption{Effect of router-gradient projection on restricted mean time to confirmed recovery through 1,024 steps.
Confirmation requires mean target probability \(\geq0.05\) at two consecutive
64-step evaluations. Differences are projection minus control, with 95\%
paired-seed bootstrap CIs. The no-drop and capacity settings use 16 and eight
paired seeds; sign-test \(p\)-values are exact and two-sided, excluding ties.}
\label{tab:mechanism-validation-results}
\centering
\small
\setlength{\tabcolsep}{3pt}
\begin{tabular}{@{}lrrlr@{}}
\toprule
Dispatch & Control & Projection & \shortstack[l]{RMST difference\\(95\% CI)} & \shortstack{Sign-test\\\(p\)}\\
\midrule
No drop & 796.25 & 840.50 & $+44.25\;[31.25,57.50]$ & $9.77\times10^{-4}$\\
Capacity \(1.25\) & 810.50 & 869.50 & $+59.00\;[39.50,81.00]$ & $7.81\times10^{-3}$\\
\bottomrule
\end{tabular}
\end{table}

\begin{figure}[t]
\centering
\includegraphics[width=\linewidth]{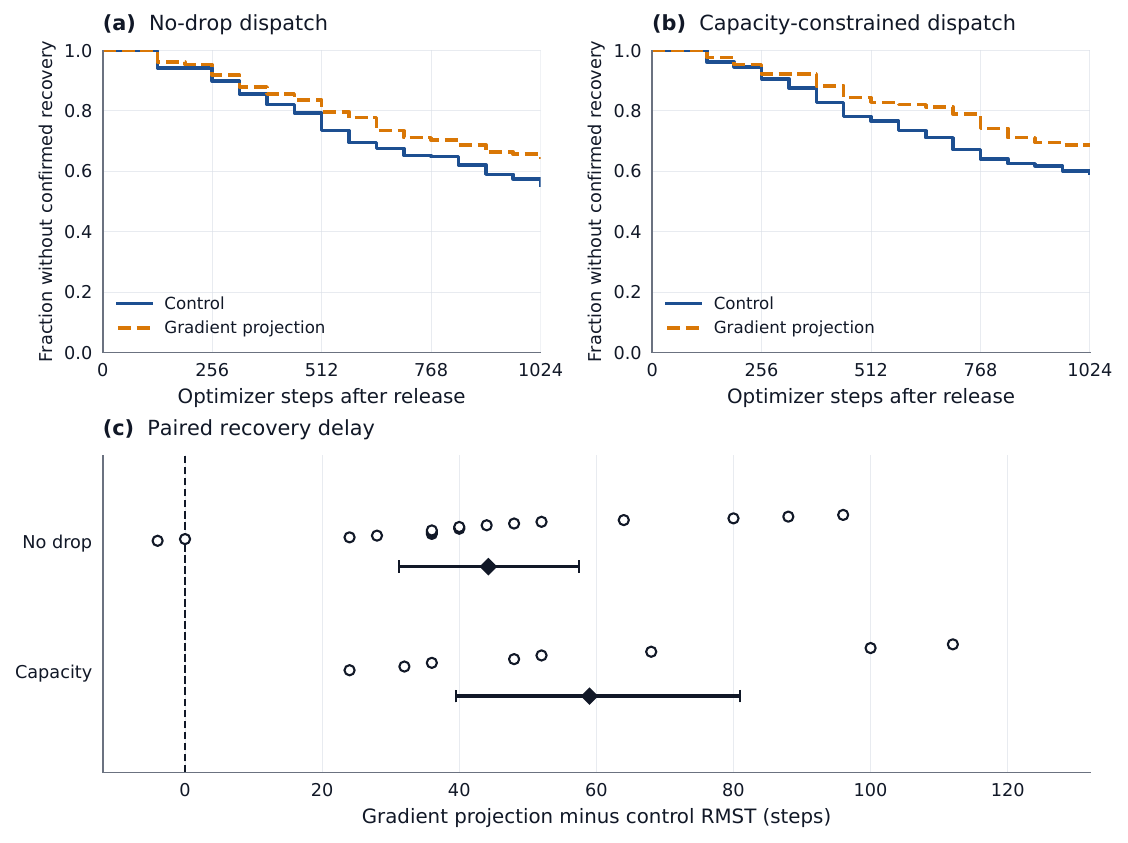}
\caption{Router-gradient projection delays confirmed recovery.
\textbf{(a), (b)} Fractions without confirmed recovery under no-drop and
capacity-constrained dispatch (factor \(1.25\)), using 16 and eight paired
seeds, respectively. Curves weight seeds and their layer--initial-mass combinations
equally; their areas through 1,024 steps define RMST.
\textbf{(c)} Circles show paired-seed RMST differences (projection minus
control); diamonds and bars give means and 95\% paired-seed bootstrap CIs.}
\label{fig:recovery-interventions-overview}
\end{figure}

By the observation horizon, projection reduces the confirmed fraction by
\(8.59\) percentage points under both dispatch policies
(\Cref{fig:recovery-interventions-overview}). Post hoc analyses
at three alternative probability thresholds preserve the direction, with
all pointwise 95\% paired-seed intervals above zero. Descriptive RMST
contrasts are also positive at each of the four initial probabilities in
both dispatch settings.

\paragraph{Fixed-token log-odds}
We measure accumulated log-odds on fixed, initially subthreshold tokens
from the selected class, using eight paired seeds per dispatch setting.
The no-drop trajectories are a subset of the RMST sample; capacity token
measurements use separate executions with the same seed values as the
RMST analysis. Initial states and token identities match between branches.
Define
\[
 A_j^N(x)=\operatorname{logit}(p_j^N(x))
 -\operatorname{logit}(p_j^0(x)),
 \qquad
 B(x)=\operatorname{logit}(0.05)
 -\operatorname{logit}(p_j^0(x)).
\]
A token meets its threshold exactly when \(A_j^N(x)\geq B(x)\).
We pool retained tokens across layers within each initial mass, then weight
masses and seeds equally; inference resamples whole paired seeds. These token-level measurements use the selected class at a single
evaluation; confirmed recovery uses the overall sample at two consecutive
evaluations.

Projection reduces both accumulated log-odds and the fraction of tokens
meeting the threshold at step 1,024 (\Cref{tab:logodds-intervention}).
Together with the RMST contrasts, these measurements link a specified
gradient intervention to slower overall recovery and smaller token-level
log-odds gains. The discrete identity in
\Cref{prop:discrete-logodds-diagnostic} expresses those gains in the
coordinates that determine individual threshold crossing.

\begin{table}[t]
\caption{Log-odds response of initially subthreshold tokens at step 1,024.
Differences are projection minus control, with 95\% paired-seed bootstrap
CIs from eight seeds per setting. Threshold attainment is evaluated for each token before averaging;
pp denotes percentage points.}
\label{tab:logodds-intervention}
\centering
\small
\begin{tabular}{@{}L{.23\linewidth}L{.35\linewidth}L{.35\linewidth}@{}}
\toprule
Dispatch & \shortstack[l]{Cumulative log-odds\\difference (95\% CI)} & \shortstack[l]{Difference in fraction\\meeting threshold (pp; 95\% CI)}\\
\midrule
No drop & $-0.194\;[-0.252,-0.133]$ & $-1.06\;[-1.64,-0.54]$\\
Capacity \(1.25\) & $-0.228\;[-0.304,-0.140]$ & $-2.28\;[-3.71,-0.82]$\\
\bottomrule
\end{tabular}
\end{table}

Appendices~\ref{app:subsection-4-2} and \ref{sec:sm-rmst}--\ref{sm:sec:logodds-trajectories} give the intervention
procedure, recovery summaries, and fixed-token trajectories.

\FloatBarrier

\section{Discussion and conclusions}
\label{sec:scope}

The normal-velocity decomposition isolates two effects: covariance among
complementary experts and the suppressed route's relative gradient advantage. Their combined coefficient distinguishes recovery regimes within
the same squared-Jacobian geometry. Positive two-sided power bounds along
recovery paths, uniform over initial masses, give logarithmic or power-law
recovery times; constructed fields reproduce these orders.

Face selection and expert adaptation determine the setting in which
recovery takes place. A stationary-loss barrier identifies the supporting
face for fixed experts. In the common-target square-loss model with freely
trainable outputs, slow adaptation approximates fixed-expert trajectories
on finite horizons, while every fixed positive rate gives an interior
limit. Explicit tails and probability floors quantify the interior
convergence for each fixed rate.

In paired Transformer-MoE training, removing a class-conditional
router-gradient contrast delays confirmed recovery and reduces accumulated
log-odds on fixed tokens under both dispatch policies. These intervention
effects connect a change in training gradients to the motion needed for
recovery. The discrete log-odds identity relates the token-level response
to threshold crossing under the actual optimizer updates.

Estimating recovery coefficients in naturally evolving systems and testing
their predictions on new trajectories would extend the continuous analysis.
Independent evaluation inputs and norm-matched gradient controls would
further assess how the neural response transfers across inputs and depends
on the chosen gradient direction.

\FloatBarrier
\bibliographystyle{abbrvnat}
\bibliography{references}

@article{shazeer2017outrageously,
  title={Outrageously large neural networks: The sparsely-gated mixture-of-experts layer},
  author={Shazeer, Noam and Mirhoseini, Azalia and Maziarz, Krzysztof and Davis, Andy and Le, Quoc and Hinton, Geoffrey and Dean, Jeff},
  journal={arXiv preprint arXiv:1701.06538},
  year={2017}
}

@article{fedus2022switch,
  title={Switch transformers: Scaling to trillion parameter models with simple and efficient sparsity},
  author={Fedus, William and Zoph, Barret and Shazeer, Noam},
  journal={Journal of Machine Learning Research},
  volume={23},
  number={120},
  pages={1--39},
  year={2022}
}

@inproceedings{lewis2021base,
  title={Base layers: Simplifying training of large, sparse models},
  author={Lewis, Mike and Bhosale, Shruti and Dettmers, Tim and Goyal, Naman and Zettlemoyer, Luke},
  booktitle={International Conference on Machine Learning},
  pages={6265--6274},
  year={2021},
  organization={PMLR}
}

@article{zhou2022mixture,
  title={Mixture-of-experts with expert choice routing},
  author={Zhou, Yanqi and Lei, Tao and Liu, Hanxiao and Du, Nan and Huang, Yanping and Zhao, Vincent and Dai, Andrew M and Le, Quoc V and Laudon, James and others},
  journal={Advances in Neural Information Processing Systems},
  volume={35},
  pages={7103--7114},
  year={2022}
}

@article{zoph2022st,
  title={St-moe: Designing stable and transferable sparse expert models},
  author={Zoph, Barret and Bello, Irwan and Kumar, Sameer and Du, Nan and Huang, Yanping and Dean, Jeff and Shazeer, Noam and Fedus, William},
  journal={arXiv preprint arXiv:2202.08906},
  year={2022}
}

@inproceedings{puigcerver2024sparse,
  title={From sparse to soft mixtures of experts},
  author={Puigcerver, Joan and Riquelme Ruiz, Carlos and Mustafa, Basil and Houlsby, Neil},
  booktitle={International Conference on Learning Representations},
  volume={2024},
  pages={28435--28445},
  year={2024}
}

@article{wang2024auxiliary,
  title={Auxiliary-loss-free load balancing strategy for mixture-of-experts},
  author={Wang, Lean and Gao, Huazuo and Zhao, Chenggang and Sun, Xu and Dai, Damai},
  journal={arXiv preprint arXiv:2408.15664},
  year={2024}
}

@article{han2025theoretical,
  title={A Theoretical Framework for Auxiliary-Loss-Free Load Balancing of Sparse Mixture-of-Experts in Large-Scale AI Models},
  author={Han, XY and Zhong, Yuan},
  journal={arXiv preprint arXiv:2512.03915},
  year={2025}
}

@article{xue2024openmoe,
  title={Openmoe: An early effort on open mixture-of-experts language models},
  author={Xue, Fuzhao and Zheng, Zian and Fu, Yao and Ni, Jinjie and Zheng, Zangwei and Zhou, Wangchunshu and You, Yang},
  journal={arXiv preprint arXiv:2402.01739},
  year={2024}
}

@article{nguyen2024temperature,
  title={Is temperature sample efficient for softmax gaussian mixture of experts?},
  author={Nguyen, Huy and Akbarian, Pedram and Ho, Nhat},
  journal={arXiv preprint arXiv:2401.13875},
  year={2024}
}

@article{nguyen2024least,
  title={On least square estimation in softmax gating mixture of experts},
  author={Nguyen, Huy and Ho, Nhat and Rinaldo, Alessandro},
  journal={arXiv preprint arXiv:2402.02952},
  year={2024}
}

@article{kawata2025mixture,
  title={Mixture of experts provably detect and learn the latent cluster structure in gradient-based learning},
  author={Kawata, Ryotaro and Matsutani, Kohsei and Kinoshita, Yuri and Nishikawa, Naoki and Suzuki, Taiji},
  journal={arXiv preprint arXiv:2506.01656},
  year={2025}
}

@inproceedings{dikkala2023benefits,
  title={On the benefits of learning to route in mixture-of-experts models},
  author={Dikkala, Nishanth and Ghosh, Nikhil and Meka, Raghu and Panigrahy, Rina and Vyas, Nikhil and Wang, Xin},
  booktitle={Proceedings of the 2023 Conference on Empirical Methods in Natural Language Processing},
  pages={9376--9396},
  year={2023}
}

@article{liao2025guided,
  title={Guided by the Experts: Provable Feature Learning Dynamic of Soft-Routed Mixture-of-Experts},
  author={Liao, Fangshuo and Kyrillidis, Anastasios},
  journal={arXiv preprint arXiv:2510.07205},
  year={2025}
}

@article{varre2026gradient,
  title={Gradient Flow Polarizes Softmax Outputs towards Low-Entropy Solutions},
  author={Varre, Aditya and Rofin, Mark and Flammarion, Nicolas},
  journal={arXiv preprint arXiv:2603.06248},
  year={2026}
}

@article{kiselev2026minimal,
  title={A Minimal Bifurcation Model of Load Imbalance in a Softmax Mixture-of-Experts Router},
  author={Kiselev, OM},
  journal={arXiv preprint arXiv:2605.29121},
  year={2026}
}

@article{wang2026myth,
  title={The myth of expert specialization in moes: Why routing reflects geometry, not necessarily domain expertise},
  author={Wang, Xi and Hayou, Soufiane and Nalisnick, Eric},
  journal={arXiv preprint arXiv:2604.09780},
  year={2026}
}

@article{ahrac2026routers,
  title={Routers Learn the Geometry of Their Experts: Geometric Coupling in Sparse Mixture-of-Experts},
  author={Ahrac, Sagi and Hochwald, Noya and Geva, Mor},
  journal={arXiv preprint arXiv:2605.12476},
  year={2026}
}

@article{guo2026advancing,
  title={Advancing expert specialization for better moe},
  author={Guo, Hongcan and Lu, Haolang and Nan, Guoshun and Chu, Bolun and Zhuang, Jialin and Yang, Yuan and Che, Wenhao and Cao, Xinye and Leng, Sicong and Cui, Qimei and others},
  journal={Advances in Neural Information Processing Systems},
  volume={38},
  pages={48767--48809},
  year={2026}
}

@article{sha2026sparse,
  title={Sparse Mixture-of-Experts Routing in Visual Diffusion Transformers: Diagnosis, Boundary Calibration and Evolutionary Roadmap from Routing Collapse to Selective Deadlock},
  author={Sha, Haiying},
  journal={arXiv preprint arXiv:2605.19378},
  year={2026}
}

@inproceedings{cheng2026emoe,
  title={EMoE: Eigenbasis-guided routing for mixture-of-experts},
  author={Cheng, Anzhe and Duan, Shukai and Li, Shixuan and Yin, Chenzhong and Cheng, Mingxi and Nazarian, Shahin and Thompson, Paul and Bogdan, Paul},
  booktitle={ICASSP 2026-2026 IEEE International Conference on Acoustics, Speech and Signal Processing (ICASSP)},
  pages={1081--1085},
  year={2026},
  organization={IEEE}
}

@article{gu2026path,
  title={Path-Constrained Mixture-of-Experts},
  author={Gu, Zijin and Likhomanenko, Tatiana and Thilak, Vimal and Ramapuram, Jason and Jaitly, Navdeep},
  journal={arXiv preprint arXiv:2603.18297},
  year={2026}
}

@article{chen2026concentration,
  title={From Concentration to Differentiation and Back: Routing Effective Rank in MoE Reasoning Cohorts},
  author={Chen, Kang and Zhao, Sihan and Cao, Yixin and Jiang, Yu-Gang},
  journal={arXiv preprint arXiv:2609.06403},
  year={2026}
}

@article{yoon2026experts,
  title={When Are Experts Misrouted? Counterfactual Routing Analysis in Mixture-of-Experts Language Models},
  author={Yoon, Youngsik and Wang, Siwei and Chen, Wei and Ok, Jungseul},
  journal={arXiv preprint arXiv:2605.07260},
  year={2026}
}

@article{jacobs1991adaptive,
  title={Adaptive mixtures of local experts},
  author={Jacobs, Robert A and Jordan, Michael I and Nowlan, Steven J and Hinton, Geoffrey E},
  journal={Neural computation},
  volume={3},
  number={1},
  pages={79--87},
  year={1991},
  publisher={MIT Press}
}

@article{jordan1994hierarchical,
  title={Hierarchical mixtures of experts and the EM algorithm},
  author={Jordan, Michael I and Jacobs, Robert A},
  journal={Neural computation},
  volume={6},
  number={2},
  pages={181--214},
  year={1994},
  publisher={MIT Press}
}

@article{lepikhin2020gshard,
  title={Gshard: Scaling giant models with conditional computation and automatic sharding},
  author={Lepikhin, Dmitry and Lee, HyoukJoong and Xu, Yuanzhong and Chen, Dehao and Firat, Orhan and Huang, Yanping and Krikun, Maxim and Shazeer, Noam and Chen, Zhifeng},
  journal={arXiv preprint arXiv:2006.16668},
  year={2020}
}

@inproceedings{du2022glam,
  title={Glam: Efficient scaling of language models with mixture-of-experts},
  author={Du, Nan and Huang, Yanping and Dai, Andrew M and Tong, Simon and Lepikhin, Dmitry and Xu, Yuanzhong and Krikun, Maxim and Zhou, Yanqi and Yu, Adams Wei and Firat, Orhan and others},
  booktitle={International conference on machine learning},
  pages={5547--5569},
  year={2022},
  organization={PMLR}
}

@article{hazimeh2021dselect,
  title={Dselect-k: Differentiable selection in the mixture of experts with applications to multi-task learning},
  author={Hazimeh, Hussein and Zhao, Zhe and Chowdhery, Aakanksha and Sathiamoorthy, Maheswaran and Chen, Yihua and Mazumder, Rahul and Hong, Lichan and Chi, Ed},
  journal={Advances in Neural Information Processing Systems},
  volume={34},
  pages={29335--29347},
  year={2021}
}

@article{chen1995optimal,
  title={Optimal rate of convergence for finite mixture models},
  author={Chen, Jiahua},
  journal={The Annals of Statistics},
  pages={221--233},
  year={1995},
  publisher={JSTOR}
}

@article{allman2009identifiability,
  title={Identifiability of parameters in latent structure models with many observed variables},
  author={Allman, Elizabeth S and Matias, Catherine and Rhodes, John A},
  year={2009}
}

@book{hofbauer1998evolutionary,
  title={Evolutionary games and population dynamics},
  author={Hofbauer, Josef and Sigmund, Karl and others},
  volume={1},
  year={1998},
  publisher={Cambridge university press Cambridge}
}

@book{sandholm2010population,
  title={Population games and evolutionary dynamics},
  author={Sandholm, William H},
  volume={25},
  year={2010},
  publisher={MIT press Cambridge}
}

@book{rockafellar1997convex,
  title={Convex analysis},
  author={Rockafellar, R Tyrrell},
  volume={28},
  year={1997},
  publisher={Princeton university press}
}

@article{beck2003mirror,
  title={Mirror descent and nonlinear projected subgradient methods for convex optimization},
  author={Beck, Amir and Teboulle, Marc},
  journal={Operations Research Letters},
  volume={31},
  number={3},
  pages={167--175},
  year={2003},
  publisher={Elsevier}
}

@article{zhang2015character,
  title={Character-level convolutional networks for text classification},
  author={Zhang, Xiang and Zhao, Junbo and LeCun, Yann},
  journal={Advances in neural information processing systems},
  volume={28},
  year={2015}
}

@inproceedings{anguita2013public,
  title={A public domain dataset for human activity recognition using smartphones.},
  author={Anguita, Davide and Ghio, Alessandro and Oneto, Luca and Parra, Xavier and Reyes-Ortiz, Jorge Luis and others},
  booktitle={Esann},
  volume={3},
  number={1},
  pages={3--4},
  year={2013}
}

@article{polyak1964gradient,
  title={Gradient methods for solving equations and inequalities},
  author={Polyak, Boris T},
  journal={USSR Computational Mathematics and Mathematical Physics},
  volume={4},
  number={6},
  pages={17--32},
  year={1964},
  publisher={Elsevier}
}

@article{wright1993identifiable,
  title={Identifiable surfaces in constrained optimization},
  author={Wright, Stephen J},
  journal={SIAM Journal on Control and Optimization},
  volume={31},
  number={4},
  pages={1063--1079},
  year={1993},
  publisher={SIAM}
}

@article{royston2013restricted,
  title={Restricted mean survival time: an alternative to the hazard ratio for the design and analysis of randomized trials with a time-to-event outcome},
  author={Royston, Patrick and Parmar, Mahesh KB},
  journal={BMC medical research methodology},
  volume={13},
  number={1},
  pages={152},
  year={2013},
  publisher={Springer}
}

@misc{uci_handwritten_digits,
  author = {Alpaydin, E. and Kaynak, C.},
  title = {Optical Recognition of Handwritten Digits},
  year = {1998},
  howpublished = {UCI Machine Learning Repository},
  url = {https://uci.edu}
}

@article{merity2016pointer,
  title={Pointer sentinel mixture models},
  author={Merity, Stephen and Xiong, Caiming and Bradbury, James and Socher, Richard},
  journal={arXiv preprint arXiv:1609.07843},
  year={2016}
}

@article{chi2022representation,
  title={On the representation collapse of sparse mixture of experts},
  author={Chi, Zewen and Dong, Li and Huang, Shaohan and Dai, Damai and Ma, Shuming and Patra, Barun and Singhal, Saksham and Bajaj, Payal and Song, Xia and Mao, Xian-Ling and others},
  journal={Advances in Neural Information Processing Systems},
  volume={35},
  pages={34600--34613},
  year={2022}
}

@article{chen2022towards,
  title={Towards understanding the mixture-of-experts layer in deep learning},
  author={Chen, Zixiang and Deng, Yihe and Wu, Yue and Gu, Quanquan and Li, Yuanzhi},
  journal={Advances in neural information processing systems},
  volume={35},
  pages={23049--23062},
  year={2022}
}

@inproceedings{mei2020global,
  title={On the global convergence rates of softmax policy gradient methods},
  author={Mei, Jincheng and Xiao, Chenjun and Szepesvari, Csaba and Schuurmans, Dale},
  booktitle={International conference on machine learning},
  pages={6820--6829},
  year={2020},
  organization={PMLR}
}

@inproceedings{zhurethinking,
  title={Rethinking Convergence in MoE Training: The Role of Routing Sparsity},
  author={Zhu, Weihao and Shi, Long and Wei, Kang and Wang, Zhe and Zhou, Yipeng and Zhang, Haixia},
  booktitle={Forty-third International Conference on Machine Learning}
}

@inproceedings{wang2025remoe,
  title={Remoe: Fully differentiable mixture-of-experts with relu routing},
  author={Wang, Ziteng and Zhu, Jun and Chen, Jianfei},
  booktitle={International Conference on Learning Representations},
  volume={2025},
  pages={59486--59507},
  year={2025}
}

\clearpage
\appendix
\numberwithin{figure}{section}
\numberwithin{table}{section}
\section*{Appendices}
\addcontentsline{toc}{section}{Appendices}

\paragraph{Contents}
These appendices contain auxiliary theory, additional numerical results, and details of the input-conditioned routing and recovery experiments.
\begin{center}
\small
\begin{tabular}{@{}L{.12\linewidth}L{.73\linewidth}r@{}}
\toprule
Appendix & Contents & Page\\
\midrule
\ref{sec:sm-theory} & Further theoretical results & \pageref{sec:sm-theory}\\
\ref{sec:sm-face-selection-results} & Additional numerical results & \pageref{sec:sm-face-selection-results}\\
\ref{sec:sm-conditioned} & Input-conditioned routing & \pageref{sec:sm-conditioned}\\
\ref{sec:sm-recovery-interventions} & Recovery interventions & \pageref{sec:sm-recovery-interventions}\\
\ref{sec:sm-statistical-checks} & Recovery analysis & \pageref{sec:sm-statistical-checks}\\
\bottomrule
\end{tabular}
\end{center}

\section{Further theoretical results}
\label{sec:sm-theory}
We use the notation and norm conventions of the main text. Here
\(p\in\Delta^{n-1}\) is the routing vector for \(n\) experts,
\(E\in\R^{n\times d}\) has expert outputs as rows,
\(y\in\R^d\) is the target, \(q\in\R^n\) is the logit vector, and
\(\eta\) is the expert-to-router learning-rate ratio. In particular,
\[
 \mathcal L(p,E)=\|E^\top p-y\|^2,
 \qquad J(p)=\operatorname{diag}(p)-pp^\top,
 \qquad M(p)=J(p)^2.
\]
As in the main text, these flows optimize the half-loss
\(\mathcal H:=\mathcal L/2\).
The fixed-expert router flow and joint expert equation are
\begin{align}
 \dot p&=-M(p)E(E^\top p-y), \label{sm:eq:fixed-flow}\\
 \dot E_i&=-\eta p_i(E^\top p-y). \notag
\end{align}
We write \(g=E(E^\top p-y)\) and \(s=J(p)g\) for the probability
and logit gradients of \(\mathcal H\).

\subsection{Convergence to an edge in the three-expert model}
\label{app:subsection-1-1}
Consider the three-expert triangle
\[
 E_1=(0,0),\qquad E_2=(1,0),\qquad E_3=(0,1),
 \qquad y=(x,-h),\quad 0<x<1,\ h>0.
\]

\begin{theorem}[Convergence to an edge with three experts]
\label{sm:thm:canonical-edge}
Let \(p(t)\) solve \eqref{sm:eq:fixed-flow} from the simplex interior. Define
\[
 p_{23}:=\left(0,\frac{1+x+h}{2},\frac{1-x-h}{2}\right)
\]
when \(x+h<1\), and omit this point otherwise. Let \(e_i\) denote the
\(i\)th simplex vertex. If \(\mathcal L(v,E)>\mathcal L(p(0),E)\)
for every
\(v\in\{e_1,e_2,e_3\}\cup
 (\{p_{23}\}\text{ if }x+h<1)\), then
\[
 p(t)\to(1-x,x,0).
\]
In particular, \(p_3(t)\to0\); hence for every \(\delta>0\),
\(p_3(t)<\delta\) for all sufficiently large \(t\).
\end{theorem}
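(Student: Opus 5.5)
The plan is to derive this as a direct instance of \Cref{thm:generalfaceselection} with \(t_{\rm ent}=0\). To do so, we compute the projection \(z^\ast\), its minimal face \(G^\ast\), the representation set \(\mathcal P^\ast\), and the full list of stationary supports \(\mathfrak S_{\rm stat}\) for this triangle geometry. We then check that the hypothesis on the initial loss is exactly \(\mathcal L(p(0),E)<\ell_{\rm comp}\).

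\emph{Step 1: the projection and the selected face.} Since \(y=(x,-h)\) with \(0<x<1\) and \(h>0\), the point \((x,0)\) lies in the relative interior of the edge \([E_1,E_2]\). The residual \(y-(x,0)=(0,-h)\) is an outward normal to that edge. It has nonpositive inner product with \(E_i-(x,0)\) for all \(i\), since the third coordinate \(E_3\) has second component \(1>0\). Hence \(z^\ast=(x,0)\), and \(G^\ast\) is the edge \([E_1,E_2]\), so \(F^\ast=\{1,2\}\). Because \(G^\ast\) is a proper face, every representation of \(z^\ast\) has \(p_3=0\). Because \(E_1,E_2\) are distinct, the representation is unique: \(\mathcal P^\ast=\{(1-x,x,0)\}\).

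\emph{Step 2: enumerate \(\mathfrak S_{\rm stat}\) and \(\mathfrak S_{\rm comp}\).} We go through the seven nonempty supports.
\begin{itemize}
\item Singletons \(\{i\}\) are always stationary, with \(z_{\{i\}}=E_i\neq z^\ast\) and \(\ell_{\{i\}}=\mathcal L(e_i,E)\).
\item For \(\{1,2\}\) and \(\{1,2,3\}\), the projection equals \(z^\ast\), so these are excluded from \(\mathfrak S_{\rm comp}\).
\item For \(\{1,3\}\), the projection of \(y\) onto the segment \([(0,0),(0,1)]\) is the endpoint \(E_1\), because the second coordinate \(-h<0\). It is therefore not represented in \(\Delta_{\{1,3\}}^\circ\), and \(\{1,3\}\notin\mathfrak S_{\rm stat}\).
\item For \(\{2,3\}\), parametrize the segment as \((1-s,s)\). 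Minimizing \((1-s-x)^2+(s+h)^2\) gives \(s=(1-x-h)/2\). This lies in \((0,1)\) iff \(x+h<1\); the upper bound is automatic since \(x+h>-1\). In that case the representing vector is exactly \(p_{23}\), with \(\ell_{\{2,3\}}=\mathcal L(p_{23},E)\) and \(z_{\{2,3\}}\neq z^\ast\). Otherwise \(\{2,3\}\notin\mathfrak S_{\rm stat}\).
\end{itemize}
Therefore \(\ell_{\rm comp}\) is the minimum of \(\mathcal L\) over \(\{e_1,e_2,e_3\}\), together with \(p_{23}\) when \(x+h<1\). The stated hypothesis is precisely \eqref{eq:faceenergyentry} with \(t_{\rm ent}=0\).

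\emph{Step 3: conclude.} \Cref{thm:generalfaceselection} gives \(\operatorname{dist}(p(t),\mathcal P^\ast)\to0\). Since \(\mathcal P^\ast\) is a singleton, this means \(p(t)\to(1-x,x,0)\), and hence \(p_3(t)=D_{F^\ast}(p(t))\to0\). The \(\delta\)-statement is just the definition of this limit.

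The only step requiring care is the exhaustive support enumeration in Step 2, in particular certifying that \(\{1,3\}\) contributes no stationary point. This follows because the stationarity condition \(g_1=g_3\) with both coordinates positive would force an interior-of-edge projection, which fails when \(h>0\). The remaining steps are routine substitutions.
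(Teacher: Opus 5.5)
Your proof is correct and is essentially the paper's argument. The paper computes \(g=(0,p_2-x,p_3+h)\) and solves \(g_i=g_k\) on each support, which gives the same equilibrium set: the vertices, \(p_{23}\) when \(x+h<1\), and \((1-x,x,0)\), with none in the open simplex or the relative interior of edge \(\{1,3\}\). It then applies LaSalle's principle and the strict loss inequalities directly. You instead route this through \Cref{thm:generalfaceselection}, computing the restricted-hull projections \(z_S\) and identifying the hypothesis as \(\mathcal L(p(0),E)<\ell_{\rm comp}\) with \(t_{\rm ent}=0\). This is equivalent, since that theorem's proof is the same LaSalle-plus-stationary-support argument.
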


\begin{proof}
Here \(E^\top p=(p_2,p_3)\) and
\(g=(0,p_2-x,p_3+h)\). An interior stationary point would require all three
components of \(g\) to agree, while \(p_3+h>0\). Every vertex is
stationary. In the relative interior of edge \(\{2,3\}\), the condition
\(g_2=g_3\) gives \(p_{23}\), which exists there exactly when \(x+h<1\).
On edge \(\{1,3\}\), equality \(g_1=g_3\) would require
\(p_3=-h\), so its relative interior contains no stationary point.
On edge \(\{1,2\}\), equality \(g_1=g_2\) gives \((1-x,x,0)\).
Loss dissipation and
LaSalle's invariance principle place the \(\omega\)-limit set in this stationary
set, and the strict loss inequalities select
\((1-x,x,0)\) as its sole member.
\end{proof}

\subsection{Separation of stationary outputs}
\label{app:subsection-1-2}
For fixed \(E,y\), let
\(z^\ast=\operatorname{Proj}_{\operatorname{conv}\{E_i\}}y\) and
\(r_\ast=\|z^\ast-y\|\). Let \(\ell_{\rm comp}\) be the lowest suboptimal stationary loss defined
in the main text.

\begin{proposition}[Separation of stationary outputs]
Define
\[
 \mathcal Z_{\rm stat}
 :=\{E^\top p:p\in\Delta^{n-1},\;
        J(p)E(E^\top p-y)=0\}.
\]
This set is finite. If it contains an output different from \(z^\ast\), set
\[
 \sigma_\ast:=\min_{z\in\mathcal Z_{\rm stat},\,z\neq z^\ast}
 \|z-z^\ast\|.
\]
Then
\[
 r_\ast^2<r_\ast^2+\sigma_\ast^2\leq\ell_{\rm comp}.
\]
Thus loss below \(r_\ast^2+\sigma_\ast^2\) satisfies the barrier condition in the main face-selection theorem. Computing \(\sigma_\ast\) requires solving a global polynomial problem.
\end{proposition}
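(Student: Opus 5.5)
Three claims need proof: \(\mathcal Z_{\rm stat}\) is finite, \(r_\ast^2<r_\ast^2+\sigma_\ast^2\), and \(r_\ast^2+\sigma_\ast^2\leq\ell_{\rm comp}\).

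\emph{Finiteness.} First identify \(\mathcal Z_{\rm stat}\) with the finite set \(\{z_S:S\in\mathfrak S_{\rm stat}\}\). Let \(p\) satisfy \(J(p)E(E^\top p-y)=0\) and put \(S=\supp(p)\). Then \(p_i(g_i-p^\top g)=0\) for every \(i\), so \(g_i\) is constant on \(S\). As in the proof of \Cref{thm:generalfaceselection}, this is the first-order optimality condition for the convex function \(p\mapsto\|E^\top p-y\|^2\) restricted to \(\Delta_S\). Indeed, for any \(p'\in\Delta_S\) we have \(\langle g,p'-p\rangle=0\), so \(\langle E^\top p-y,\,E^\top p'-E^\top p\rangle=0\). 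Hence \(E^\top p\) is the projection of \(y\) onto \(\mathcal C_S\), that is, \(E^\top p=z_S\) with \(p\in\Delta_S^\circ\), and so \(S\in\mathfrak S_{\rm stat}\).

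Conversely, if \(S\in\mathfrak S_{\rm stat}\), there is \(p\in\Delta_S^\circ\) with \(E^\top p=z_S\). The projection characterization \(\langle z_S-y,E_i-z_S\rangle\geq0\) for \(i\in S\), combined with \(\sum_{i\in S}p_i\langle z_S-y,E_i-z_S\rangle=0\) and \(p_i>0\), forces equality for each \(i\in S\). So \(g_i\) is constant on \(S\), and \(p\) is stationary. Since there are at most \(2^n-1\) supports, \(\mathcal Z_{\rm stat}\) is finite.

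\emph{Key estimate.} For any \(z\in\mathcal C(E)\), the projection inequality \(\langle y-z^\ast,z-z^\ast\rangle\leq0\) gives
\[
\|z-y\|^2=\|z-z^\ast\|^2+r_\ast^2-2\langle y-z^\ast,z-z^\ast\rangle\geq r_\ast^2+\|z-z^\ast\|^2,
\]
the same inequality used in \Cref{cor:exposed-face-mass}.

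Take \(S\in\mathfrak S_{\rm comp}\). Then \(z_S\in\mathcal Z_{\rm stat}\) by the identification above, and \(z_S\neq z^\ast\), so \(\|z_S-z^\ast\|\geq\sigma_\ast\). Therefore \(\ell_S=\|z_S-y\|^2\geq r_\ast^2+\sigma_\ast^2\), and minimizing over \(S\in\mathfrak S_{\rm comp}\) gives \(r_\ast^2+\sigma_\ast^2\leq\ell_{\rm comp}\). The strict inequality \(r_\ast^2<r_\ast^2+\sigma_\ast^2\) holds because \(\sigma_\ast>0\): it is a minimum over a finite nonempty set of positive distances.

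If \(\mathfrak S_{\rm comp}\) were empty, \(\ell_{\rm comp}=+\infty\) and the bound is trivial. That case cannot arise here anyway, since a stationary output \(z\neq z^\ast\) lies in \(\{z_S\}\) with \(z_S\neq z^\ast\).

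The main obstacle is the finiteness identification, specifically the converse direction. It requires care to show that every \(i\in S\), and not merely some, attains equality in the projection inequality. This follows from the positivity of all \(p_i\) on \(S\).
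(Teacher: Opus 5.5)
Your proof is correct and follows the paper's route. You identify each stationary output with the restricted-hull projection \(z_S\), which gives finiteness, then apply the projection inequality at \(z^\ast\) and minimize over the suboptimal stationary supports. Your explicit converse, that every \(z_S\) with \(S\in\mathfrak S_{\rm stat}\) is realized by a stationary point, supplies the inclusion \(\{z_S:S\in\mathfrak S_{\rm comp}\}\subseteq\mathcal Z_{\rm stat}\), which the paper's final minimization step uses without comment.
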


\begin{proof}
On each stationary support, \(E^\top p\) is the unique projection of \(y\)
onto the convex hull of the expert outputs in that support, so only finitely many stationary outputs
occur. For any suboptimal one, the projection inequality at \(z^\ast\) gives
\[
 \|z-y\|^2-r_\ast^2
 \geq\|z-z^\ast\|^2\geq\sigma_\ast^2.
\]
Minimizing over suboptimal stationary outputs proves the result.
\end{proof}

\paragraph{Computing the geometric bounds}
Support enumeration considers up to \(2^n-1\) nonempty restricted hulls to
identify the stationary outputs and their loss barrier. The hitting-time
constant \(\mu(c,\delta)\) in \Cref{prop:outside-face-hitting} is a global
minimum of the squared logit-gradient norm over the compact loss sublevel
with outside-face mass at least \(\delta\). Once an exposing direction is
available, \Cref{cor:exposed-face-mass} instead converts a measured loss excess directly
into an outside-face mass bound. \Cref{sec:sm-face-selection-detail} evaluates stationary-loss
separation across fixed-expert geometries and initializations.

\subsection{Explicit finite-horizon stability constants}
\label{app:subsection-1-3}
\begin{lemma}[Uniform bounds for slow expert motion]
\label{sm:lem:explicit-stability}
Let \(T>0\) be a fixed horizon and \(E^0\) the common initial expert
matrix for the fixed and joint flows, with common routing state \(p(0)\).
Set \(\rho=\|E^{0\top}p(0)-y\|\) and \(Y=\|y\|\).
For the joint square-loss flow with \(0\leq\eta\leq1\), set
\[
 K=\|E^0\|_{\rm F}+T\rho,\qquad
 L=6K(K+Y)+K^2,\qquad K_E=2K+Y.
\]
The fixed-expert solution \(p_0\) and the joint solution satisfy, for
\(0\leq t\leq T\),
\[
 \|E_\eta(t)-E^0\|_{\rm F}\leq\eta\rho t,
 \qquad
 \|p_\eta(t)-p_0(t)\|\leq\eta K_E\rho\,\Psi_L(t),
\]
where
\[
 \Psi_L(t)=\begin{cases}
 (e^{Lt}-1-Lt)/L^2,&L>0,\\
 t^2/2,&L=0.
 \end{cases}
\]
In particular, \(C_T=\rho T+K_E\rho\Psi_L(T)\) is an admissible constant
in the main text's finite-horizon bound. Its dependence on \(T\) is
explicit through \(\Psi_L(T)\) and the expert-norm bound \(K\).
\end{lemma}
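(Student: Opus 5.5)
We follow the proof of \Cref{thm:fastslowbridge} and track each constant. The first step is the expert bound. Along the joint flow the loss is nonincreasing, since
\(\dot{\mathcal L}=-2\|J(p_\eta)E_\eta R_\eta\|^2-2\eta\|p_\eta\|^2\|R_\eta\|^2\leq0\).
Hence \(\|R_\eta(t)\|\leq\rho\). Because \(\|\dot E_\eta\|_{\rm F}=\eta\|p_\eta\|\|R_\eta\|\leq\eta\rho\), integration gives \(\|E_\eta(t)-E^0\|_{\rm F}\leq\eta\rho t\). For \(\eta\leq1\) and \(t\leq T\) this yields \(\|E_\eta(t)\|_{\rm F}\leq K\). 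The fixed matrix \(E^0\) satisfies the same bound. Global existence on \([0,T]\) is automatic, since the logit velocity is bounded by \(\|E\|\,\|R\|\leq K\rho\).

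The second step derives Lipschitz constants for \(f(p,E)=-J(p)^2E(E^\top p-y)\) on \(\Delta^{n-1}\times\{\|E\|_{\rm F}\leq K\}\). Use \(\|J(p)\|_{\rm op}\leq1\), so \(\|J(p)^2\|\leq1\). Use also the entrywise Lipschitz bound \(\|J(p)-J(\tilde p)\|\leq3\|p-\tilde p\|\), obtained from \(\operatorname{diag}\) contributing 1 and \(pp^\top\) contributing 2. This gives \(\|J(p)^2-J(\tilde p)^2\|\leq6\|p-\tilde p\|\). The residual satisfies \(\|E^\top p-y\|\leq K+Y\). Splitting the difference in \(p\) gives
\[
\|f(p,E)-f(\tilde p,E)\|\leq\bigl(6K(K+Y)+K^2\bigr)\|p-\tilde p\|=L\|p-\tilde p\|.
\]
Splitting the difference in \(E\) gives
\[
\|f(p,E)-f(p,\tilde E)\|\leq\|\tilde E-E\|\,\|E^\top p-y\|+\|\tilde E\|\,\|(E-\tilde E)^\top p\|\leq(2K+Y)\|E-\tilde E\|_{\rm F}=K_E\|E-\tilde E\|_{\rm F}.
\]

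The third step is Gronwall with a linear forcing term. Set \(d(t)=\|p_\eta(t)-p_0(t)\|\). Then \(d(t)\leq\int_0^t\bigl(Ld(s)+K_E\eta\rho s\bigr)ds\). The comparison solution of \(\phi'=L\phi+K_E\eta\rho t\) with \(\phi(0)=0\) is \(\phi=K_E\eta\rho\,\Psi_L(t)\). Indeed, \(\Psi_L''=L\Psi_L'+1\) with \(\Psi_L(0)=\Psi_L'(0)=0\), equivalently \(\Psi_L'=L\Psi_L+t\). The integral (Bihari–Gronwall) comparison therefore gives \(d\leq\phi\). Summing the two bounds at \(t=T\) produces \(C_T=\rho T+K_E\rho\Psi_L(T)\).

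The main obstacle is getting the Lipschitz constants to match \(L\) and \(K_E\) exactly. In particular, the factor \(6\) requires bounding \(\|J(p)-J(\tilde p)\|_{\rm op}\leq3\|p-\tilde p\|\) on the simplex. This uses \(\|pp^\top-\tilde p\tilde p^\top\|\leq(\|p\|+\|\tilde p\|)\|p-\tilde p\|\leq2\|p-\tilde p\|\), followed by \(J^2-\tilde J^2=J(J-\tilde J)+(J-\tilde J)\tilde J\). The \(K^2\) term comes from the \(E E^\top\) dependence of the residual difference, which is taken at the same \(E\).
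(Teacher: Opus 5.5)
Your proposal is correct and follows essentially the same route as the paper's proof: loss dissipation gives $\|R_\eta\|\leq\rho$ and hence the expert drift and the bound $K$; the operator-norm estimates $\|J\|\leq1$ and $\|J(p)-J(\tilde p)\|\leq3\|p-\tilde p\|$ give the Lipschitz constants $L$ and $K_E$; and linear Gronwall with forcing $\eta K_E\rho s$ produces $\Psi_L$. Your ODE comparison via $\Psi_L'=L\Psi_L+t$ is equivalent to the paper's explicit convolution $\int_0^t e^{L(t-s)}s\,ds$, and the ordinary linear Gronwall inequality suffices here, so the Bihari form is not needed.
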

\begin{proof}
For the residual \(R_\eta(t)=E_\eta(t)^\top p_\eta(t)-y\),
joint loss dissipation implies \(\|R_\eta(t)\|\leq\rho\). Since
\(\|p_\eta\|\leq1\), integrating
\(\|\dot E_\eta\|_{\rm F}=\eta\|p_\eta\|\|R_\eta\|\)
gives the first bound and \(\|E_\eta\|_{\rm F}\leq K\).
For simplex points, \(\|J(p)\|\leq1\) and
\(\|J(p)-J(\widetilde p)\|\leq3\|p-\widetilde p\|\), hence
\(\|J(p)^2-J(\widetilde p)^2\|\leq6\|p-\widetilde p\|\).
For \(f(p,E)=-J(p)^2E(E^\top p-y)\), these inequalities give
\[
 \|f(p,E)-f(\widetilde p,\widetilde E)\|
 \leq L\|p-\widetilde p\|+K_E\|E-\widetilde E\|_{\rm F}
\]
when both expert norms are at most \(K\). Here the two terms in \(L\)
bound the changes in \(J(p)^2\) and in the residual, respectively;
\(K_E\) bounds the two occurrences of \(E\). Bounded expert and logit
velocities give continuation on \([0,T]\). With
\(d(t)=\|p_\eta(t)-p_0(t)\|\), the probability error obeys
\[
 d(t)\leq\int_0^t\bigl(Ld(s)+\eta K_E\rho s\bigr)\,ds
 \leq\eta K_E\rho\int_0^t e^{L(t-s)}s\,ds.
\]
Evaluating the integral gives \(\Psi_L(t)\), including its continuous
extension at \(L=0\).
\end{proof}

\subsection{Interior convergence for PL residual losses}
\label{app:subsection-1-4}
The interior-convergence argument extends to residual losses with lower and upper gradient bounds. The lower bound is the Polyak--\L{}ojasiewicz (PL) inequality \cite{polyak1964gradient} for a nonnegative loss with minimum value zero.
\begin{theorem}[Interior convergence for a PL residual-loss class]
\label{sm:thm:pl-interior}
Let \(\phi:\R^d\to[0,\infty)\) have locally Lipschitz gradient and suppose
that, for \(\mu,L>0\),
\[
 2\mu\phi(r)\leq\|\nabla\phi(r)\|^2\leq2L\phi(r),
 \qquad r\in\R^d.
\]
For \(R=E^\top p-y\), consider
\[
 \dot q=-J(p)E\nabla\phi(R),\qquad
 \dot E=-\eta p\nabla\phi(R)^\top,
 \qquad p=\operatorname{Softmax}(q),
\]
with \(\eta>0\) and finite \(q(0),E(0)\). Taking
\(\phi(r)=\|r\|^2/2\) recovers the main half-loss flow, with
\(\mu=L=1\). Define
\[
 \Phi_0:=\phi(R(0)),\quad
 G_0:=\frac{n\sqrt{2L\Phi_0}}{\eta\mu},\quad
 B_\phi:=\|E(0)\|_{\rm op}+\eta G_0,\quad
 C_\phi:=B_\phi G_0.
\]
Then the solution is global,
\[
 \phi(R(t))\leq\Phi_0e^{-2\eta\mu t/n},
\]
and \(E(t)\to E_\infty\), \(q(t)\to q_\infty\), with
\[
 \|E(t)-E_\infty\|_{\rm F}\leq\eta G_0e^{-\eta\mu t/n},
 \qquad
 \|q(t)-q_\infty\|\leq C_\phi e^{-\eta\mu t/n}.
\]
Moreover,
\[
 p_i(t)\geq
 \frac{p_i(0)}{p_i(0)+e^{2C_\phi}(1-p_i(0))}>0
\]
for every expert and every time, so the router converges to an interior point.
\end{theorem}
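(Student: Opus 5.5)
The argument follows the proof of \Cref{thm:jointgradientinteriorfreeze}, with the scalar energy \(\mathcal H(t):=\phi(R(t))\) in place of \(\|R\|^2/2\).

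\textbf{Energy decay.} Along the flow, \(\dot R=\dot E^\top p+E^\top\dot p\). Since \(\dot p=J(p)\dot q\), the chain rule gives
\[
 \dot{\mathcal H}=\langle\nabla\phi(R),\dot E^\top p\rangle+\langle\nabla\phi(R),E^\top J(p)\dot q\rangle
 =-\eta\|p\|^2\|\nabla\phi(R)\|^2-\|J(p)E\nabla\phi(R)\|^2 .
\]
Using \(\|p\|^2\geq1/n\) and the PL lower bound \(\|\nabla\phi\|^2\geq2\mu\phi\), we get \(\dot{\mathcal H}\leq-(2\eta\mu/n)\mathcal H\). Hence \(\phi(R(t))\leq\Phi_0e^{-2\eta\mu t/n}\) on every interval of existence.

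\textbf{Integrated gradient bound.} The upper bound \(\|\nabla\phi\|\leq\sqrt{2L\phi}\) gives
\[
 \|\nabla\phi(R(t))\|\leq\sqrt{2L\Phi_0}\,e^{-\eta\mu t/n},
 \qquad
 \int_t^\infty\|\nabla\phi(R)\|\,d\tau\leq\frac{n\sqrt{2L\Phi_0}}{\eta\mu}e^{-\eta\mu t/n}=G_0e^{-\eta\mu t/n}.
\]

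\textbf{Expert motion.} Since \(\|\dot E\|_{\rm F}=\eta\|p\|\|\nabla\phi\|\leq\eta\|\nabla\phi\|\), we have \(\|E(t)\|_{\rm op}\leq\|E(0)\|_{\rm op}+\eta G_0=B_\phi\). The same integral is Cauchy, so \(E(t)\to E_\infty\) with \(\|E(t)-E_\infty\|_{\rm F}\leq\eta G_0e^{-\eta\mu t/n}\).

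\textbf{Logit motion and global existence.} Using \(\|J(p)\|_{\rm op}\leq1\), we get \(\|\dot q\|\leq B_\phi\|\nabla\phi(R)\|\). Thus \(\int_t^\infty\|\dot q\|\leq C_\phi e^{-\eta\mu t/n}\), which gives \(q(t)\to q_\infty\) and the stated tail bound. Global existence then follows. The vector field is locally Lipschitz, because \(\nabla\phi\) is locally Lipschitz and softmax is smooth. The bounds above are a priori on any maximal interval: \(E\) stays in a ball, \(q\) moves at most \(C_\phi\), and \(R\) is bounded. So no blow-up can occur, and the maximal solution is global. This step is the only one needing care, since the estimates must be established before globality is known. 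The cleanest way is to derive them on \([0,t_{\max})\) and conclude \(t_{\max}=\infty\) by the standard continuation criterion.

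\textbf{Probability floor.} We have \(\|q(t)-q(0)\|_\infty\leq\|q(t)-q(0)\|\leq C_\phi\). Hence each log-ratio \(q_i-q_k\) changes by at most \(2C_\phi\), so \(p_k(t)/p_i(t)\leq e^{2C_\phi}p_k(0)/p_i(0)\). Summing over \(k\neq i\) gives
\[
 \frac{1-p_i(t)}{p_i(t)}\leq e^{2C_\phi}\frac{1-p_i(0)}{p_i(0)} ,
\]
which rearranges to the stated floor. The limit \(p_\infty=\operatorname{Softmax}(q_\infty)\) is interior because \(q_\infty\) is finite.

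\textbf{Quadratic case.} For \(\phi=\|r\|^2/2\) we have \(\mu=L=1\), and the constants reduce to those of \Cref{thm:jointgradientinteriorfreeze} up to the \(B_0\) versus \(B_\phi\) bookkeeping.
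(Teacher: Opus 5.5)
Your proposal is correct and follows the paper's proof essentially step for step. The dissipation identity with \(\|p\|^2\geq1/n\) and the PL lower bound gives the exponential decay, the gradient upper bound gives \(\int_t^\infty\|\nabla\phi(R)\|\,d\tau\leq G_0e^{-\eta\mu t/n}\), integrating the expert and logit equations gives the tails and rules out finite-time blow-up, and the \(2C_\phi\) bound on logit differences gives the probability floor. One small correction to your final remark: for \(\phi(r)=\|r\|^2/2\) one has \(\sqrt{2L\Phi_0}=\|R(0)\|=\rho_0\), so \(G_0=n\rho_0/\eta\), \(B_\phi=B_0\), and \(C_\phi=C_0\) exactly, with no bookkeeping discrepancy.
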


\begin{proof}
Writing \(D=\nabla\phi(R)\), gradient-flow dissipation and
\(\|p\|^2\geq1/n\) give
\[
 \frac{d}{dt}\phi(R)
 =-\|J(p)ED\|^2-\eta\|p\|^2\|D\|^2
 \leq-\frac{2\eta\mu}{n}\phi(R).
\]
The gradient upper bound therefore yields
\[
 \int_t^\infty\|D(\tau)\|\,d\tau
 \leq G_0e^{-\eta\mu t/n}.
\]
Integrating the expert and logit equations gives the stated tails and precludes
finite-time divergence of the logits. Finally \(\|q(t)-q(0)\|\leq C_\phi\); bounding every logit
difference by \(2C_\phi\) gives the lower bound on the probabilities and interior limit.
\end{proof}

For \(\phi(r)=\|r\|^2/2\), these constants reduce to
\(G_0=n\|R(0)\|/\eta\),
\(B_\phi=\|E(0)\|_{\rm op}+n\|R(0)\|\), and
\(C_\phi=nB_\phi\|R(0)\|/\eta\), as in the main square-loss theorem.

\subsection{A log-ratio criterion for interior convergence}
\label{app:subsection-1-5}
\begin{proposition}[Interior convergence from accumulated logit-gradient differences]
\label{sm:prop:finite-score-freeze}
Let \((p(t),E(t))\) be any global solution of
\(\dot p=-J(p)^2E(t)(E(t)^\top p-y)\)
that remains in \(\operatorname{int}(\Delta^{n-1})\) for every finite time.
Define \(g(t)=E(t)(E(t)^\top p(t)-y)\), \(s(t)=J(p(t))g(t)\), and
\(A_{ij}(t):=\int_0^t(s_i(\tau)-s_j(\tau))\,d\tau\).
If \(A_{ij}(t)\) has a finite limit, then \(p_i(t)/p_j(t)\) converges to a
finite positive limit. If \(\sup_{t\geq0}|A_{in}(t)|<\infty\) for every \(i<n\),
then
\[
 \inf_{t\geq0}\min_i p_i(t)>0.
\]
If every \(A_{in}(t)\) converges, then
\(p(t)\to p_\infty\in\operatorname{int}(\Delta^{n-1})\).
In particular, bounded experts and
\(\int_0^\infty\|E(t)^\top p(t)-y\|\,dt<\infty\) imply this interior
convergence.
\end{proposition}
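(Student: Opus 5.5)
The argument rests on the softmax log-ratio identity \(\frac{d}{dt}\log(p_i/p_j)=-(s_i-s_j)\), which holds for all finite times because the trajectory stays in the interior. Integrating it gives
\[
 \log\frac{p_i(t)}{p_j(t)}=\log\frac{p_i(0)}{p_j(0)}-A_{ij}(t).
\]
The first claim is then immediate. If \(A_{ij}(t)\to A_{ij}^\infty\in\R\), the ratio converges to \(\frac{p_i(0)}{p_j(0)}e^{-A_{ij}^\infty}\), which is finite and positive.

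For the uniform floor, put \(K:=\max_{i<n}\sup_t|A_{in}(t)|<\infty\) and \(a:=\max_{i}|\log(p_i(0)/p_n(0))|\). Since \(A_{ij}=A_{in}-A_{jn}\), every pairwise log-ratio is bounded by \(2a+2K\) in absolute value for all \(t\). Because \(p\) lies on the simplex, some coordinate satisfies \(p_k(t)\geq 1/n\). Hence every coordinate obeys
\[
 p_i(t)\geq \tfrac1n e^{-(2a+2K)},
\]
which is a positive constant independent of \(t\).

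Now suppose every \(A_{in}\) converges. Each ratio \(r_i(t)=p_i/p_n\) converges to some \(r_i^\infty\in(0,\infty)\), with \(r_n\equiv1\). Since \(p_n=1/\sum_k r_k\), we get \(p_i(t)\to r_i^\infty/\sum_k r_k^\infty\), and each limit is strictly positive. Equivalently, the centered logits \(q_i-q_n=\log r_i\) converge, and \(p_\infty\) is the softmax of their limit.

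For the final sufficient condition, assume \(\sup_t\|E(t)\|_{\rm op}\leq B\) and \(\int_0^\infty\|R\|\,dt<\infty\), where \(R=E^\top p-y\). Then \(\|g\|\leq B\|R\|\). We also have \(\|J(p)\|_{\rm op}\leq 1\) on the simplex. To justify this, note that \(J\) is positive semidefinite and its eigenvalues are at most \(\max_i p_i\leq1\) by a Gershgorin or variance argument. It follows that \(|s_i-s_j|\leq 2\|s\|\leq 2B\|R\|\), which is integrable on \([0,\infty)\). Each \(A_{in}(t)\) therefore converges absolutely, and the previous step applies. The main technical point is this integrability bound, and in particular the uniform estimate on \(\|J(p)\|_{\rm op}\) over the closed simplex. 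Everything else is algebra on the log-ratio identity.
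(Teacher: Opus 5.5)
Your proof is correct and follows essentially the same route as the paper. You integrate the softmax log-ratio identity, bound or pass to the limit in the ratios $p_i/p_n$ and normalize, and use $|s_i-s_j|\leq 2\|E\|_{\rm op}\|E^\top p-y\|$ for absolute integrability, while making explicit the floor $\tfrac1n e^{-(2a+2K)}$ and the bound $\|J(p)\|_{\rm op}\leq 1$ (via $0\leq v^\top J(p)v\leq \max_i p_i\|v\|^2$) that the paper leaves implicit.
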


\begin{proof}
The softmax log-ratio identity gives
\(\frac{p_i(t)}{p_j(t)}=\frac{p_i(0)}{p_j(0)}e^{-A_{ij}(t)}\).
Uniformly bounded reference differences keep all ratios \(p_i/p_n\) between
positive constants, so normalization yields the uniform positive lower bound. If
they converge, set
\(r_i=\lim_{t\to\infty}p_i(t)/p_n(t)\in(0,\infty)\). Then
\[
 p_n(t)\to\frac{1}{1+\sum_{i<n}r_i},\qquad
 p_i(t)\to\frac{r_i}{1+\sum_{k<n}r_k}.
\]
Finally,
\(|s_i-s_j|\leq2\|E\|_{\rm op}\|E^\top p-y\|\), so the last hypotheses
make every logit-gradient difference absolutely integrable.
\end{proof}

\subsection{Cumulative log-odds thresholds for recovery}
\label{app:subsection-1-6}
\begin{corollary}[Cumulative log-odds threshold for coupled recovery]
\label{sm:cor:coupled-recovery-budget}
For the coupled-flow log-odds identity in the main text, write
\(a_j(t)=d\operatorname{logit}(p_j(t))/dt\). Fix a recovery threshold
\(\theta\in(0,1)\), let \(p_j(0)=\epsilon\in(0,\theta)\), and set
\(T_\theta=\inf\{t\geq0:p_j(t)\geq\theta\}\). If
\(T_\theta<\infty\) and \(p_j(t)\in(0,1)\) on
\([0,T_\theta]\), then
\begin{equation*}
 \int_0^{T_\theta}a_j(t)\,dt
 =B_{\theta,\epsilon}
 :=\log\frac{\theta(1-\epsilon)}{(1-\theta)\epsilon}
 =\log\frac1\epsilon+O_\theta(1).
\end{equation*}
If \(a_j(t)\leq\bar a\) on \([0,T_\theta]\) for some \(\bar a>0\), then
\(T_\theta\geq B_{\theta,\epsilon}/\bar a\). Moreover, for any horizon
\(T\),
\[
 \int_0^T a_j(t)_+\,dt<B_{\theta,\epsilon},
 \qquad a_j(t)_+:=\max\{a_j(t),0\},
\]
implies \(p_j(t)<\theta\) for every \(t\in[0,T]\).
\end{corollary}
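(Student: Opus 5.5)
The argument integrates the singleton log-odds along the trajectory. The starting point is the normal log-odds identity from \Cref{prop:generalfaceboundarymobility}, or its coupled-flow analogue in which \(g(t)=E(t)(E(t)^\top p(t)-y)\) is simply a time-dependent gradient. That identity gives \(a_j(t)=\frac{d}{dt}\operatorname{logit}(p_j(t))\), which is locally integrable on \([0,T_\theta]\) whenever \(p_j\in(0,1)\) there. Note that only the right-hand side \(a_j\) enters, not the specific form \((1-x)\Gamma_j\), so the argument applies verbatim to the coupled flow.

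First, the fundamental theorem of calculus gives
\[
\int_0^{T_\theta}a_j\,dt=\operatorname{logit}(p_j(T_\theta))-\operatorname{logit}(\epsilon).
\]
Since \(p_j\) is continuous and \(p_j(0)=\epsilon<\theta\), the hitting time satisfies \(p_j(T_\theta)=\theta\). The right-hand side therefore equals \(\log\frac{\theta}{1-\theta}-\log\frac{\epsilon}{1-\epsilon}=B_{\theta,\epsilon}\). The expansion follows from \(\log(1-\epsilon)=O(\epsilon)\), with the \(\theta\)-terms absorbed into \(O_\theta(1)\).

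Second, for the time lower bound, suppose \(a_j\leq\bar a\). Then \(B_{\theta,\epsilon}=\int_0^{T_\theta}a_j\,dt\leq\bar a\,T_\theta\). Since \(B_{\theta,\epsilon}>0\) and \(\bar a>0\), dividing gives \(T_\theta\geq B_{\theta,\epsilon}/\bar a\).

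Third, the contrapositive budget statement. Fix \(t\in[0,T]\) at which \(p_j\) is still in \((0,1)\). Then
\[
\operatorname{logit}(p_j(t))-\operatorname{logit}(\epsilon)=\int_0^t a_j\leq\int_0^T(a_j)_+<B_{\theta,\epsilon},
\]
so \(\operatorname{logit}(p_j(t))<\operatorname{logit}(\theta)\). Because the logit is strictly increasing, this gives \(p_j(t)<\theta\).

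The one point needing care is that the log-odds identity needs \(p_j(t)\in(0,1)\), which the standing hypothesis of \Cref{sm:prop:finite-score-freeze} supplies by keeping the trajectory interior. I would guard this with a first-time argument. Suppose \(p_j(t)\geq\theta\) for some \(t\leq T\), and take the first such time \(t_1\). Then \(p_j\in(0,\theta]\) on \([0,t_1]\), the identity applies there, and it yields \(\operatorname{logit}(p_j(t_1))<\operatorname{logit}(\theta)\). This contradicts \(p_j(t_1)=\theta\), so no such \(t\) exists.
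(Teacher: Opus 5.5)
Your proposal is correct and follows the paper's proof step for step: the log-odds identity at the first hitting time, where continuity forces \(p_j(T_\theta)=\theta\), gives \(B_{\theta,\epsilon}\); integrating \(a_j\leq\bar a\) gives \(B_{\theta,\epsilon}\leq\bar a T_\theta\); and the budget claim is the contrapositive from applying the same identity up to the first time \(p_j\) reaches \(\theta\). One small correction is that the interiority you need in the last step comes from the logits staying finite along the flow, which is implicit in \(a_j\) being defined on \([0,T]\), not from the hypotheses of \Cref{sm:prop:finite-score-freeze}, which this corollary does not assume.
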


\begin{proof}
At the first hitting time, continuity and the main text's exact log-odds
identity give
\(\int_0^{T_\theta}a_j(t)\,dt=B_{\theta,\epsilon}\).
The upper bound on \(a_j\) yields
\(B_{\theta,\epsilon}\leq\bar aT_\theta\). If recovery occurred by time
\(T\), the same identity at its first hitting time would imply
\(B_{\theta,\epsilon}\leq\int_0^T a_j(t)_+\,dt\); contraposition proves the
last claim.
\end{proof}

\subsection{First-order group separation with independent logits}
\label{app:subsection-1-7}
\begin{proposition}[First-order separation with independent logits]
\label{sm:prop:independent-escape}
Let \(k=1,\ldots,K\) index groups with targets \(y_k\in\R^d\).
Give each group its own logits \(q^{(k)}\) and routing probabilities
\(p^{(k)}=\operatorname{Softmax}(q^{(k)})\), and fix the experts.
Under the half-loss gradient flow for each group,
\(\dot p^{(k)}=-M(p^{(k)})E(E^\top p^{(k)}-y_k)\).
At a shared point \(p^{(1)}=\cdots=p^{(K)}=p_0\), define
\(\chi(p_0,E):=\max_{k,\ell}\|M(p_0)E(y_k-y_\ell)\|\).
The routing velocities differ across groups exactly when
\(\chi(p_0,E)>0\), in which case
\[
 \max_{k,\ell}\|p^{(k)}(t)-p^{(\ell)}(t)\|
 =\chi(p_0,E)t+o(t).
\]
If the groupwise routing distributions converge and their limiting outputs
have different minimal hull faces, those distributions are distinct.
Objective weights rescale time, while the
groupwise vector fields determine persistent between-group routing variation.
\end{proposition}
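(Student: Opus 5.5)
The plan has three parts, matching the three claims of the proposition: a first-order Taylor expansion for the separation rate, an exact characterization of when the velocities coincide, and a continuity argument for distinct limits. Each group has its own logits, and the experts are fixed. So each \(p^{(k)}\) solves the autonomous fixed-expert flow \eqref{eq:pflow} with target \(y_k\), and its vector field \(f_k(p)=-M(p)E(E^\top p-y_k)\) is polynomial, hence smooth. From the common point \(p_0\) the initial velocities are \(f_k(p_0)\). Their differences are
\[
 f_k(p_0)-f_\ell(p_0)=M(p_0)E(y_k-y_\ell),
\]
because the term \(M(p_0)EE^\top p_0\) is the same for every group and cancels. The velocities therefore differ for some pair exactly when \(\chi(p_0,E)>0\), which is the ``exactly when'' clause.

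For the separation rate I will use a first-order expansion. Smoothness gives \(p^{(k)}(t)=p_0+tf_k(p_0)+o(t)\) on a short interval; the interval and the remainder can be taken uniform in \(k\) because there are finitely many groups. Subtracting, \(p^{(k)}(t)-p^{(\ell)}(t)=tM(p_0)E(y_k-y_\ell)+o(t)\). Taking norms and then the maximum over the finitely many pairs, the maximum of the norms equals \(t\) times the maximum of the leading norms up to \(o(t)\). This gives \(\chi t+o(t)\). The argument is the same as in \Cref{prop:coupledbreakfreezefirstorder}, minus the shared-parameter coupling, so no feature term \(1+h_a^\top h_k\) appears.

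For the persistence statement, suppose \(p^{(k)}(t)\to p^{(k)}_\infty\) and the limiting outputs \(E^\top p^{(k)}_\infty\) lie in different minimal hull faces \(G_k\neq G_\ell\). If the limiting distributions were equal, their outputs would be equal, and a point has a unique minimal face containing it, which is a contradiction. I will phrase the face via supports: the minimal face of the output is determined by the output point itself, so identical \(p\) gives identical faces.

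The time-rescaling remark follows by observing that multiplying \(\mathcal H\) by a weight \(\pi_k>0\) multiplies \(f_k\) by \(\pi_k\), which is the substitution \(t\mapsto\pi_kt\). The main obstacle is modest: making the \(o(t)\) uniform across pairs and checking that the max-of-norms expansion holds. Both follow from finiteness of the index set and \(C^1\) dependence on time. I will state that the remainder is bounded by \(t^2\sup\|Df_k\|\|f_k\|/2\) on the compact simplex, which makes it explicit.
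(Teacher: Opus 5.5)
Your proposal is correct and follows essentially the same route as the paper: the cancellation \(f_k(p_0)-f_\ell(p_0)=M(p_0)E(y_k-y_\ell)\) gives both the criterion and the first-order expansion, equal limits would force equal outputs and hence equal minimal faces, and a positive objective weight rescales time. The only difference is that you supply details the paper leaves implicit: a remainder that is uniform over the finitely many pairs, the elementary max-of-norms step, and an explicit \(O(t^2)\) bound.
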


\begin{proof}
At a shared point,
\(\dot p^{(k)}(0)-\dot p^{(\ell)}(0)=M(p_0)E(y_k-y_\ell)\),
which proves the separation criterion and first-order expansion. Outputs with
different minimal hull faces are distinct, so they cannot be represented by
the same routing distribution. Multiplying a group objective by
a positive weight \(w_k\) rescales time and leaves its phase portrait unchanged.
\end{proof}

\Needspace{6\baselineskip}
\section{Additional numerical results}
\label{sec:sm-face-selection-results}
These studies compare selection, adaptation, and recovery with their
continuous predictions. Solver refinement and independent quadrature
assess numerical accuracy.

\subsection{Face selection}
\label{sec:sm-face-selection-detail}
\Cref{tab:face-selection-evidence} reports the representative fixed-expert trajectories.
The initialization sweep below examines sublevel entry and face selection
across the triangle simplex.

\begin{figure}[t]
\centering
\includegraphics[width=.54\linewidth]{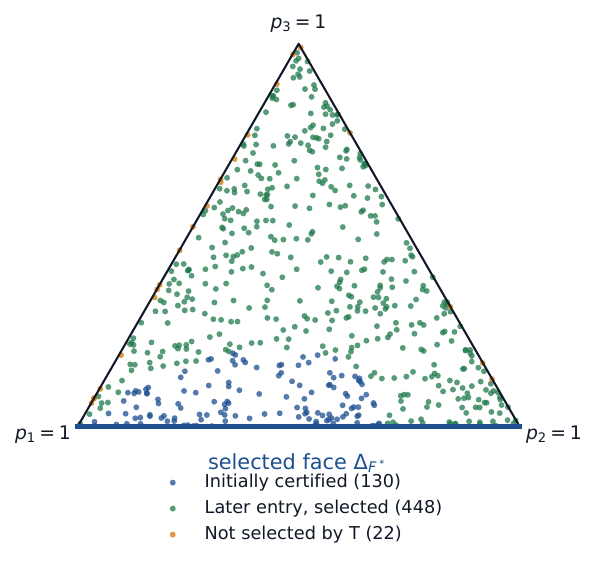}
\caption{Face selection across triangle initializations.
Colors distinguish initial loss below the stationary-loss barrier,
later sublevel entry with face selection, and trajectories that have
not selected the face by \(T=2000\).}
\label{sm:fig:full-face-sweep}
\end{figure}

For a projection in the relative interior of the full hull, take
\(E_1=(0,0)\), \(E_2=(1,0)\), \(E_3=(0,1)\), and
\(y=(0.30,0.25)\). The unique minimizing route is
\(p^\ast=(0.45,0.30,0.25)\). Initialization at \((0.55,0.24,0.21)\)
has loss \(0.0052\), below the lowest suboptimal stationary loss \(0.0625\).
DOP853 integrations to \(t=2000\), with relative tolerances \(10^{-10}\)
and \(10^{-12}\), give terminal probability errors at most
\(2.55\times10^{-12}\). This illustrates the full-hull case of the
main face-selection theorem.

\paragraph{Geometric configurations}
The four configurations use the following expert orderings, targets, and
initial probabilities:
\begin{enumerate}
\item Triangle: expert rows \((0,0),(1,0),(0,1)\),
\(y=(0.35,-0.20)\), and \(p(0)=(1/3,1/3,1/3)\).
\item Pentagon: expert rows \((-1,0),(1,0),(1.3,1),(0,1.5),(-1.2,1)\),
\(y=(0.25,-0.50)\), and \(p(0)=(0.04,0.05,0.25,0.38,0.28)\).
\item Redundant hexagon face: expert rows
\((-1,0),(0,0),(1,0),(1.2,0.9),(0,1.5),(-1.2,0.9)\),
\(y=(0.20,-0.40)\), and \(p(0)=(0.03,0.03,0.04,0.24,0.38,0.28)\).
\item Cube: the first four rows are
\((-1,-1,0),(1,-1,0),(1,1,0),(-1,1,0)\); the next four have the same
first two coordinates and third coordinate one. Here \(y=(0,0,-0.40)\)
and \(p(0)=(0.025,0.025,0.025,0.025,0.20,0.22,0.25,0.23)\).
The boundary control has support \(\{1,2,7\}\) and weights
\((0.50,0.18,0.32)\) on those experts.
\end{enumerate}
For each of four fixed-expert geometries, we enumerate the \(2^n-1\) nonempty supports, identify the stationary supports, and compute \(\ell_{\rm comp}\) from their restricted-hull projections. The classical fourth-order Runge--Kutta method (RK4) with
time step \(\Delta t=0.02\) runs to \(t=2000\), including a control initialized at a
suboptimal stationary point. \Cref{tab:face-selection-evidence} gives entry times and terminal errors.

We draw 600 triangle initializations from a Dirichlet distribution with
concentration parameter \(0.7\) in each coordinate. Under step halving,
all face-selection classifications agree: 130 start below the loss
barrier, 448 enter later and select the face, and 22 have not selected the
face by \(T=2000\). Of the last group, seven have entered the sublevel set
and 15 have not. The maximum terminal routing discrepancy is below
\(10^{-11}\).

\subsection{Expert-adaptation time scales}
\label{sec:sm-time-scales}
Joint-flow simulations compare residual decay and interior stabilization
with the fixed-horizon approximation by fixed-expert routing.

\paragraph{Joint training and PL residual losses}
The joint square-loss study uses three configurations \((n,d,\eta)\):
\((3,2,0.1)\), \((5,4,0.5)\), and \((8,6,1)\).
Each configuration uses five Gaussian initializations and three RK4 step
sizes over \(0\leq t\leq60\). All trajectories satisfy the residual and probability bounds. The largest coarse-to-fine terminal routing error is below
\(3\times10^{-5}\).

The nonquadratic experiment uses
\(\phi(r)=\frac12\|r\|^2+0.35\sum_k(1-\cos r_k)\).
This loss is \(0.65\)-strongly convex and \(1.35\)-smooth, so the PL bounds
hold with \((\mu,L)=(0.65,1.35)\). Using the same configurations and
integration scheme as above, all finest-step trajectories satisfy the
loss and probability bounds; the coarse-to-fine terminal routing error
is again below \(3\times10^{-5}\).

The time-scale comparison in \Cref{fig:reduced-flow-overview}(b,c) uses the triangle
configuration above, RK4 step \(0.02\), and
\(\eta\in\{0.01,0.02,0.04,0.08,0.16\}\). The maximum routing error on
\([0,40]\) has fitted log--log slope \(1.03\), consistent with the
fixed-horizon approximation.

\begin{figure}[t]
\centering
\includegraphics[width=\linewidth]{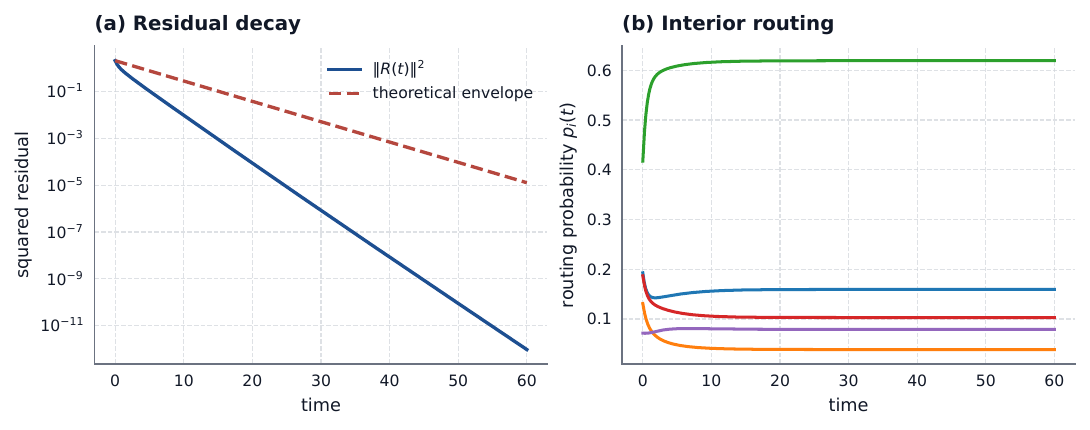}
\caption{Residual decay and interior routing under joint training.
\textbf{(a)} Squared residual norm and its exponential envelope.
\textbf{(b)} Expert routing probabilities along the same trajectory.}
\label{sm:fig:exact-gradient-convergence}
\end{figure}

\subsection{Recovery-time scaling}
\label{sec:sm-recovery-scaling}
We vary the effective recovery coefficient within squared-Jacobian flow. Comparing numerical threshold events with
the resulting hitting-time integrals tests the conditional recovery-time
classification.

Two constructed vector fields realize the structural recovery regimes. For
the field with nonzero covariance \(\Lambda_j\), let \(w\)
denote the conditional weights on the face \(p_j=0\), with all sums
below over \(k\neq j\), and set
\[
 \nu(w):=\sum_k w_k^3-\left(\sum_k w_k^2\right)^2,\qquad
 g_j=0,\qquad
 g_k=10^{-3}\frac{w_k-\|w\|_2^2}{\nu(w)}.
\]
On the domain \(\nu(w)>0\), this gives \(\Delta_j=0\) and
\(\Lambda_j=10^{-3}\) identically. Here \(\nu(w)\) is the variance of
\(w_k\) under \(k\sim w\); the vector field can become singular as this
variance vanishes. The initial
conditional weights are proportional to the entries of
\((1^8,2^8,\ldots,(n-1)^8)\);
\(\epsilon\in\{0.005,0.0075,0.01,0.02,0.03\}\), \(\theta=0.05\), and
classical RK4 uses \(\Delta t=0.25\). For the zero-covariance field, the complementary
weights are uniform, \(g_j=-1\), \(g_k=0\) for \(k\neq j\), so
\(\Lambda_j=0\), \(\Delta_j=1\), and the complementary weights remain uniform.
Here
\(\epsilon\in\{0.00025,0.0005,0.001,0.002,0.004\}\) and
\(\Delta t=0.1\), with the same threshold. The predicted hitting times are
\(H_1(\epsilon,\theta)/10^{-3}\) and
\(H_2(\epsilon,\theta)/(1+1/(n-1))\), respectively.

All 20 trajectories recover, with maximum relative hitting-time error
below \(4\times10^{-7}\). The covariance-driven times fit
\(\log(1/\epsilon)\), while the zero-covariance log--log slope is
\(-1.02\).

\paragraph{Constructed fields with different orders of vanishing}
A field with a prescribed recovery coefficient illustrates the theorem for
\(\alpha\in\{0,0.25,0.5,0.75,1\}\). Let \(x=p_j\), initialize
the \(n-1\) complementary weights uniformly, and write
\(\kappa_n=1+1/(n-1)\). Choose
\[
 g_j=-\frac{x^{\alpha-1}}{\kappa_n},\qquad g_k=0\quad(k\neq j).
\]
The complementary weights remain uniform, so \(\Lambda_j=0\),
\(\Delta_j=x^{\alpha-1}/\kappa_n\), and \(\Gamma_j=x^\alpha\) exactly.
Thus the predicted time is \(H_{1+\alpha}(\epsilon,\theta)\).
The field is smooth on every recovery interval
\(0<\epsilon\leq x\leq\theta<1\) and satisfies the coefficient
comparison there with constants equal to one. For \(\alpha<1\),
\(g_j\) diverges as \(x\downarrow0\). The preceding field with nonuniform complementary weights realizes logarithmic recovery through a fixed positive covariance.

For positive \(\alpha\leq1\), we fit the negative slope of
\(\log T_\theta\) against \(\log\epsilon\) on
\([10^{-12},10^{-6}]\) and \([10^{-12},10^{-9}]\). Narrowing the window
gives fitted exponents \(0.251\), \(0.500\), \(0.750\), and \(1.000\)
for the four positive orders, with agreement across expert counts.
At \(\alpha=0\), fitting \(T_\theta\) against \(\log(1/\epsilon)\)
gives slope \(1.00\). \Cref{sm:fig:reduced-extensions} shows the fitting-window
effect.

For \(\alpha>1\), the same construction has bounded probability
gradients, with \(g_j\to0\) as \(x\downarrow0\). We test
\(\alpha=1.5,2\) at \(n=4,8\) over initial masses from
\(10^{-9}\) to \(10^{-5}\), using the transformed-time scheme below.
The fitted exponents agree with the imposed orders to within \(10^{-5}\),
and the maximum relative quadrature error is below \(2\times10^{-12}\).

\paragraph{Optimization geometry and changing advantages}
The two-expert comparison evaluates the
separated-variable formulas in the two-expert recovery corollary for
\(\epsilon\in[10^{-4},10^{-1}]\) and
\(\theta=0.25\). The \(J^2\) slope is \(-0.996\). For \(\epsilon=10^{-4}\),
let \(T_{J^2}\) and \(T_{\rm proj}\) denote the corresponding hitting times;
then \(\epsilon T_{J^2}=0.5006\) and
\(T_{\rm proj}=0.4998\).

Starting from \(\epsilon=0.08\) with threshold \(\theta=0.30\), classical
RK4 with \(\Delta t=0.002\) directly integrates the \(J^2\), \(J\), and
Euclidean projected probability ODEs under an oscillation with positive
drift, a zero-mean oscillation, and a subthreshold oscillation. The instantaneous advantage changes sign over each observation horizon. Recovery occurs in four of
the nine combinations. All nine outcomes agree with whether the accumulated
advantage \(\sup_t A(t)\) reaches the flow-specific threshold over the observation horizon. The
numerical error in \(I_r(\epsilon,p(t))=A(t)\) stays below
\(10^{-13}\). For the positive-drift advantage, the sign changes before
recovery under all three flows.

\subsection{Numerical accuracy and refinement}
\label{sec:sm-experiment-sources}
\paragraph{Normal-velocity identity}
Direct evaluation of \(-J(p)^2g\) checks the normal-velocity identity
at 256 random simplex states per expert count, with \(n=4,8\),
Dirichlet-distributed probabilities and Gaussian gradients. The identity
holds to floating-point precision.

\paragraph{Quadrature and event-time agreement}
The integration uses \(n=4,8\) and \(\theta=0.05\),
with initial masses \(\epsilon\in\{10^{-12},10^{-11},\ldots,10^{-3}\}\).
We evolve all logits using transformed time \(s\), where \(ds=x^\alpha dt\).
We also rescale physical time as \(\tau=\epsilon^\alpha t\), giving
\[
 \frac{dq}{ds}=\frac{e_j-p}{\kappa_n},\qquad
 \frac{d\tau}{ds}=\left(\frac{\epsilon}{x}\right)^\alpha.
\]
The explicit eighth-order Runge--Kutta method DOP853 uses relative tolerance \(10^{-11}\), absolute tolerance \(10^{-13}\),
and maximum steps \(0.5\) and \(0.25\) in \(s\). Event detection at
\(p_j=\theta\) is compared with independent quadrature of
\[
 \epsilon^\alpha H_{1+\alpha}(\epsilon,\theta)
 =\int_0^{\log(\theta/\epsilon)}
 \frac{e^{-\alpha z}}{(1-\epsilon e^z)^2}\,dz.
\]
Across 200 trajectories, both quadrature differences and changes under
step halving remain below \(10^{-12}\) in relative terms.

\begin{figure}[t]
\centering
\includegraphics[width=\linewidth]{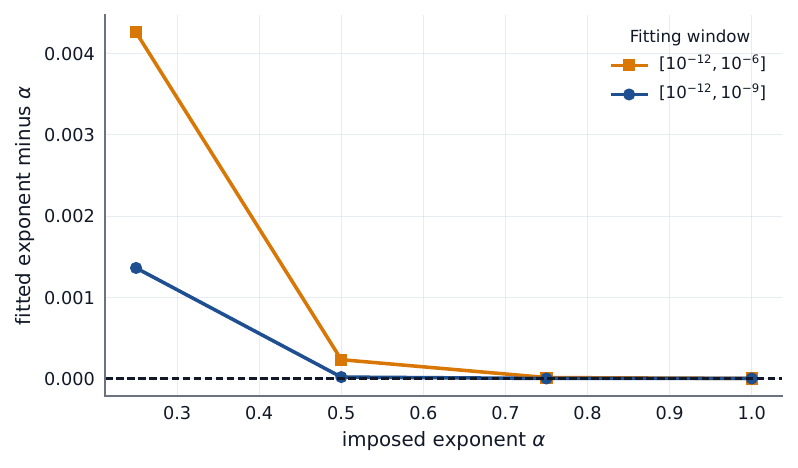}
\caption{Fitting-window effects on recovery exponents.
Fitted exponent errors decrease as the upper end of the initial-mass
window approaches zero. Results use \(n=4\); \(n=8\) agrees at plotting
precision.}
\label{sm:fig:reduced-extensions}
\end{figure}

\begin{figure}[t]
\centering
\includegraphics[width=\linewidth]{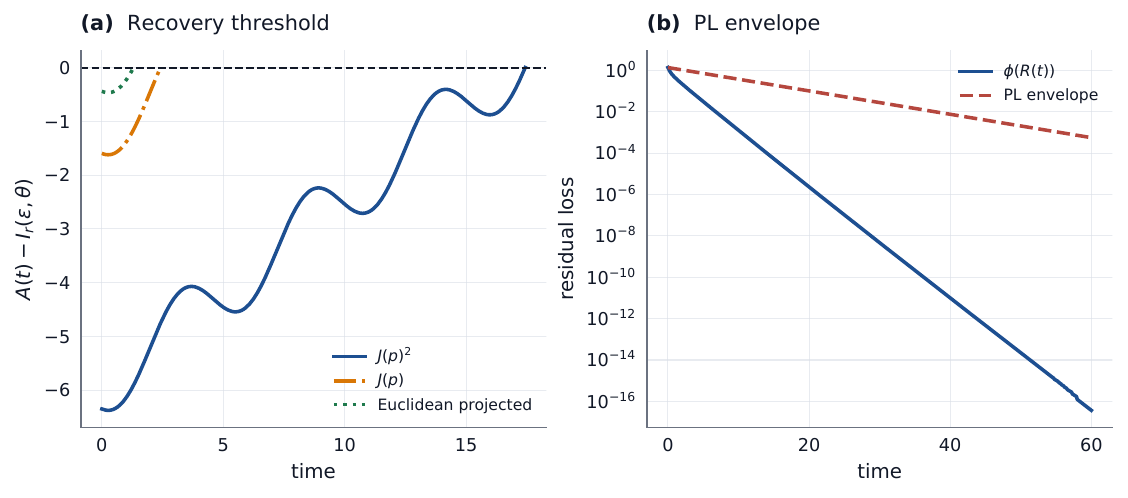}
\caption{Recovery with changing advantages and nonquadratic joint training.
\textbf{(a)} Accumulated sign-changing advantage relative to the
flow-specific recovery threshold; zero marks threshold attainment.
\textbf{(b)} Nonquadratic PL loss and its exponential envelope.}
\label{sm:fig:assumption-closure}
\end{figure}

\FloatBarrier
\section{Input-conditioned routing}
\label{sec:sm-conditioned}
For \(K\) groups and \(n\) experts, let \(P\in\R^{K\times n}\)
have routing distribution \(P_{k,:}\) in row \(k\). Define the mean between-group \(\ell^1\) deviation and the number of distinct
row-wise maximizers by
\[
\begin{aligned}
 S(P)&:=\frac1K\sum_{k=1}^K\|P_{k,:}-\bar P\|_1,
 &\bar P&:=\frac1K\sum_{k=1}^K P_{k,:},\\
 i_k^\ast&:=\min\arg\max_i P_{k,i},
 &D(P)&:=\left|\{i_k^\ast:1\leq k\leq K\}\right|.
\end{aligned}
\]
Here \(\bar P\) is the mean routing distribution, \(i_k^\ast\) selects
the smallest expert index in a tie, and set bars denote cardinality.
Thus \(S(P)=0\) exactly when all groups have the same routing distribution;
\(D(P)\) counts their distinct dominant experts. These statistics distinguish input-dependent allocation from aggregate load.
\subsection{Affine routing and shared-bias invariance}
\label{sec:shared-conditioned}
\Cref{prop:coupledbreakfreezefirstorder} identifies when feature--residual
coupling separates initially equal routing distributions. We test this
criterion and show that shared bias shifts preserve cross-group log-odds
contrasts.

\begin{remark}[A shared bias preserves cross-group log-odds contrasts]
\label{rem:shared-bias-invariant}
For any groups \(k,\ell\) and experts \(i,j\), a common expert-bias vector
\(v\in\R^n\), applied as \(\widetilde q^{(k)}=q^{(k)}+v\) with
\(\widetilde p^{(k)}=\operatorname{Softmax}(\widetilde q^{(k)})\), leaves
\[
 \log\frac{\widetilde p_i^{(k)}}{\widetilde p_j^{(k)}}
 -\log\frac{\widetilde p_i^{(\ell)}}{\widetilde p_j^{(\ell)}}
 =\log\frac{p_i^{(k)}}{p_j^{(k)}}
 -\log\frac{p_i^{(\ell)}}{p_j^{(\ell)}}
\]
unchanged: both within-group log ratios acquire the same \(v_i-v_j\).
Such a bias can change aggregate load and top-\(k\) assignments without
changing this cross-group contrast. Individual probabilities, log-odds
thresholds, and subsequent dynamics depend on the shifted probabilities.
\end{remark}

\paragraph{Initial routing separation}
The affine-router calculation compares predicted and observed initial
separation rates in 64 random feature--residual configurations and
16 nullspace controls. The median relative prediction error is
\(2\times10^{-5}\); every control has zero separation to numerical
precision (\Cref{sm:fig:feature-slope}).

\begin{figure}[t]
\centering
\includegraphics[width=.80\linewidth]{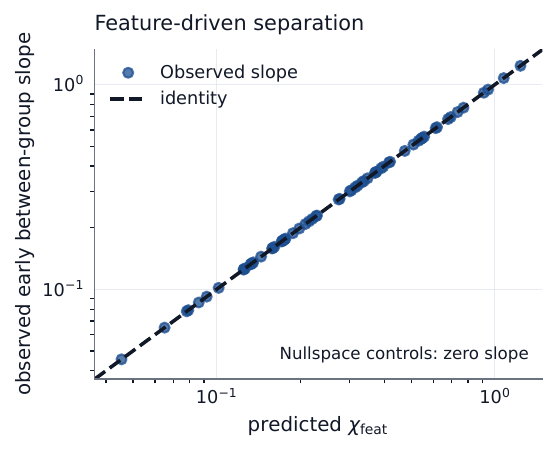}
\caption{Initial routing separation under a shared affine router.
Observed slopes agree with \(\chi_{\rm feat}\) in random feature--residual
configurations; nullspace controls have zero first-order separation.}
\label{sm:fig:feature-slope}
\end{figure}

\subsection{Models and training}
\label{sec:sm-experimental-configuration}
The synthetic routing and synthetic mini-batch multilayer perceptron (MLP)
studies use Adam; the Digits and UCI HAR MLPs and all Transformers use AdamW.
The larger Transformer-MoE has six layers, width 384, six attention heads,
and eight two-layer GELU experts of hidden width 768 per layer, with top-2
routing and approximately \(50\)--\(55\) million parameters. Training updates
the full model in bfloat16 precision. The four-domain and independent
AG News comparisons use approximately \(6.4\times10^7\) and
\(5.9\times10^6\) target tokens per condition, respectively.

The smaller studies use six experts: a one-block Transformer on AG News
and MLPs on Digits and UCI HAR. Complete architectures, training settings,
and random seeds are provided in the accompanying code.

\subsection{Routing structure across datasets}
\label{sec:sm-routing-comparisons}
The synthetic studies compare shared-global and input-conditioned routers,
including balancing, entropy, and capacity variants. In the three
shared-global variants, the group routing distributions coincide, giving
\(S(P)=0\) and \(D(P)=1\). Input conditioning produces distinct
allocations across groups (\Cref{sm:tab:routing-structure}).

\begin{table}[t]
\caption{Routing structure across synthetic and external datasets.
Values are means \(\pm\) standard deviations over ten paired seeds,
except twelve for UCI HAR. Single values denote invariant statistics.
The synthetic shared-global row summarizes the identical routing-structure
statistics of softmax, load-balanced, and top-2 variants.}
\label{sm:tab:routing-structure}
\centering
\small
\setlength{\tabcolsep}{5pt}
\begin{tabular}{@{}L{.48\linewidth}rr@{}}
\toprule
Router & \shortstack{Mean \(\ell^1\)\\deviation} & \shortstack{Dominant\\experts}\\
\midrule
\multicolumn{3}{@{}l}{\textit{Synthetic data}}\\
Shared global & \(0\) & \(1\)\\
Input-conditioned, softmax & \(1.012\pm0.120\) & \(3.1\pm0.5\)\\
Input-conditioned, entropy & \(0.305\pm0.047\) & \(3.2\pm0.4\)\\
Input-conditioned, capacity & \(1.072\pm0.084\) & \(4\)\\
\midrule
\multicolumn{3}{@{}l}{\textit{AG News}}\\
Global top-2 & \(0\) & \(1\)\\
Conditioned top-2 & \(0.266\pm0.058\) & \(2.8\pm0.9\)\\
Conditioned, load balancing & \(0.241\pm0.060\) & \(3.7\pm0.5\)\\
\midrule
\multicolumn{3}{@{}l}{\textit{Digits}}\\
Global top-2 & \(0\) & \(1\)\\
Conditioned top-2 & \(0.804\pm0.114\) & \(4.2\pm0.7\)\\
\midrule
\multicolumn{3}{@{}l}{\textit{UCI HAR}}\\
Global top-2 & \(0\) & \(1\)\\
Conditioned top-2 & \(1.150\pm0.067\) & \(4.3\pm0.5\)\\
\bottomrule
\end{tabular}
\end{table}

The synthetic mini-batch MLP comparison gives paired deviation increases
of \(1.037\) (95\% CI \([0.959,1.109]\)) for softmax and \(1.150\)
(\([1.052,1.246]\)) for top-2 routing, positive in all ten pairs.

The external-data studies use a stratified AG News subset
\cite{zhang2015character}, a stratified Digits split with 1,347 training
and 450 test samples \cite{uci_handwritten_digits}, and the official
subject-disjoint UCI HAR split with 7,352 training and 2,947 test windows
\cite{anguita2013public}. HAR features are standardized using training-set
statistics. Routing deviation was the prespecified outcome. Paired
inference uses seeds as the independent units. The deviation increases
are positive in all ten AG News and Digits pairs and all twelve HAR pairs.

\subsection{Model scale and capacity constraints}
\label{sec:sm-controlled-scale}
Each study uses matched initialization and data order; estimates are
reported separately across datasets and model scales.

\paragraph{Datasets and train--test splits}
The four-domain corpus contains ArXiv abstracts, Project Gutenberg books,
U.S. Government Publishing Office text, and WikiText-103
\cite{merity2016pointer}. Whole documents
are assigned to disjoint train, validation, and shift splits; GPT-2 tokenized
fragments have lengths 256 or 512. Domain labels support balanced sampling and
held-out routing summaries.

\paragraph{Routing differences across scales}
The conditioned-minus-shared routing contrast remains positive in the
small and large four-domain models, with and without capacity constraints,
and in the independent AG News implementation
(\Cref{sm:tab:controlled-scale-effects}). All evaluated middle layers have
positive mean contrasts, and all seed-level contrasts are positive.

\begin{table}[t]
\caption{Mean between-group \(\ell^1\) deviation differences across settings
(input-conditioned minus shared global), averaged over evaluated middle
layers. Intervals are paired-seed bootstrap 95\% CIs. Each study uses eight
seeds except the four-domain capacity comparison, whose four-seed mean is
reported descriptively. Capacity factor is \(1.25\).}
\label{sm:tab:controlled-scale-effects}
\centering
\small
\begin{tabular}{@{}L{.49\linewidth}rr@{}}
\toprule
Study & Mean difference & 95\% CI\\
\midrule
Four-domain small model & 0.444 & [0.404,0.478]\\
Four-domain large model & 0.236 & [0.217,0.255]\\
Four-domain large, capacity & 0.174 & ---\\
AG News, no drop & 0.102 & [0.081,0.128]\\
AG News, capacity & 0.102 & [0.087,0.117]\\
\bottomrule
\end{tabular}
\end{table}

\section{Recovery interventions}
\label{sec:sm-recovery-interventions}
We compare clamp removal with continued suppression, then estimate the
effect of removing a class-conditional router-gradient contrast. Paired
branches share their calibrated state and training sequence, so their
recovery-time difference measures the response to this gradient intervention.
\subsection{Release from routing suppression}
\label{app:subsection-4-1}
Auxiliary AG News experiments compare clamp removal with continued
suppression from the same checkpoint, optimizer state, and minibatch
sequence. Each dispatch setting uses eight paired seeds, four middle
layers, and 1,024 post-release steps. Recovery is confirmed by two
consecutive 64-step evaluations with mean target probability at least
\(0.05\). Clamp removal yields confirmation in 24 of 32 no-drop and
29 of 32 capacity-constrained seed--layer combinations, compared with
none under continued suppression.

\begin{figure}[t]
\centering
\includegraphics[width=\linewidth]{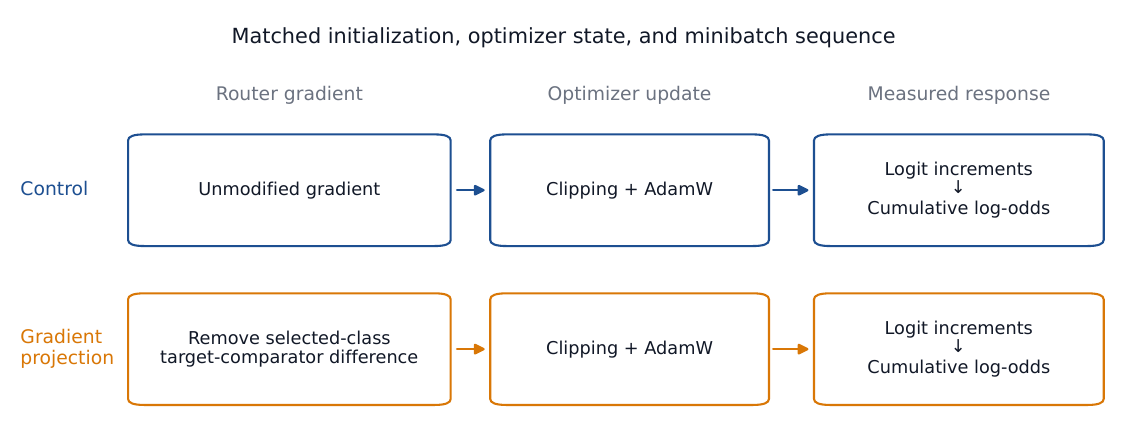}
\caption{Paired training with and without class-conditional gradient projection.
Branches share the calibrated state and minibatches; projection acts before
clipping and AdamW.}
\label{sm:fig:gradient-workflow}
\end{figure}

\subsection{Class-conditional router-gradient intervention}
\label{app:subsection-4-2}
The intervention uses 16 paired seeds without drops and eight at
capacity factor \(1.25\), evaluating the four middle layers. It uses the six-layer model in
\Cref{sec:sm-experimental-configuration}, with noise-free affine logits and
renormalized top-2 dispatch. Training minimizes masked autoregressive
cross-entropy without auxiliary balancing on 119,800 AG News sequences.
AdamW uses batch size four, learning rate \(2\times10^{-4}\), weight
decay \(0.01\), and gradient-norm clipping at one. Targets, comparators,
and selected classes follow the main text's rules after 29,950 training
steps; a 1,024-step suppression interval then produces the common state
for calibration.

\paragraph{Initial-probability calibration}
The clamp is removed for calibration. For each middle layer in network order,
let \(p_{j,a}\) be the target probability at active token \(a\) in the fixed
512-sequence validation sample, containing \(N_{\rm tok}\) active
(nonpadding) tokens. The desired mean is
\(\epsilon\in\{0.005,0.01,0.02,0.04\}\). We shift only that layer's target-router bias
by \(d\), solving
\[
 \frac1{N_{\rm tok}}\sum_{a=1}^{N_{\rm tok}}
 \frac{e^d p_{j,a}}{1-p_{j,a}+e^d p_{j,a}}=\epsilon.
\]
Bisection calibrates the layers in network order, remeasuring probabilities
after each upstream bias shift. The final absolute mean-probability error
is at most \(0.001\) in every middle layer. Calibration changes only
target-router biases. All branches receive the same calibrated
parameters and suppression-stage optimizer state. Selection, calibration,
and recovery measurement use the same validation sample; subsequent
updates use training data with matched class-balanced minibatches.

\paragraph{Gradient intervention}
Projection replaces the selected-class target and comparator gradient
rows by their average, leaving other rows and other classes' gradient contributions unchanged. It acts before global clipping and
AdamW, so the measured response includes ensuing optimizer and
representation dynamics. Capacity-constrained dispatch
drops about 35\% of assignments across the control and projection branches.

\Cref{sec:sm-rmst} defines confirmed recovery and the statistical analysis.

\section{Recovery analysis}
\label{sec:sm-statistical-checks}
Confirmed recovery uses the overall mean routing probability. The
token-level analysis follows fixed inputs from the selected class.
\subsection{Restricted mean time to confirmed recovery}
\label{sec:sm-rmst}
We index seeds by \(s\), the four middle layers by
\(\ell\in\{1,2,3,4\}\), and the initial probabilities
\((0.005,0.01,0.02,0.04)\) by \(m\in\{1,2,3,4\}\), in that order.
For condition \(b\in\{\mathrm{ctrl},\mathrm{proj}\}\) (control or gradient
projection), let the indicator \(I^b_{s,\ell,m,k}\in\{0,1\}\) equal one when the mean target probability is at least \(0.05\) at step \(64k\).
The confirmation time is
\[
 T^{\rm conf,b}_{s,\ell,m}
 =64\inf\{k\geq2:I^b_{s,\ell,m,k-1}=I^b_{s,\ell,m,k}=1\},
\]
with \(\inf\varnothing=\infty\). We observe \(k=1,\ldots,16\), with
right-censoring at \(\tau=1{,}024\) when confirmation is pending. Define
\[
 Y_s^b=\frac1{16}\sum_{\ell,m}\min\{T^{\rm conf,b}_{s,\ell,m},\tau\},
 \qquad \mu_\tau^b=\mathbb E_s[Y_s^b],
 \qquad \Delta\mu_\tau=\mathbb E_s[Y_s^{\rm proj}-Y_s^{\rm ctrl}].
\]
The expectation is over seeds, with equal weight on the 16 fixed
layer--initial-mass combinations. Equivalently, \(\mu_\tau^b\) averages
their seed-marginal survival integrals
\(\int_0^\tau\Pr_s(T^{\rm conf,b}_{s,\ell,m}>t)\,dt\).
Complete follow-up to \(\tau\) makes the empirical integral the mean of
truncated confirmation times. Confirmation exactly at \(\tau\) contributes
\(\tau\) to this mean and counts as an observed event.

Confidence intervals use 100,000 percentile-bootstrap resamples of paired
seeds, retaining all 16 layer--initial-mass combinations within each seed. Exact two-sided
sign tests exclude ties. No-drop and capacity estimates use 16 and eight
paired seeds, respectively.

\subsection{Sensitivity to the probability threshold}
\label{sm:sec:threshold-sensitivity}
We apply the same confirmation rule to the saved trajectories at
\(\theta\in\{0.04,0.05,0.06,0.08\}\). The additional thresholds are a
post hoc sensitivity analysis with pointwise paired-seed bootstrap
intervals. At \(\theta=0.04\), the largest initial mass equals the
threshold, so confirmation measures sustained attainment after release.

\begin{table}[t]
\caption{Confirmed-recovery sensitivity to the probability threshold.
RMST differences are projection minus control in optimizer steps, with
pointwise 95\% paired-seed bootstrap CIs. Percentages give confirmation by
step 1,024. The no-drop and capacity settings use 16 and eight seeds, with equal
weight on seeds and layer--initial-mass combinations within seeds.}
\label{sm:tab:threshold-sensitivity}
\centering
\small
\begin{tabular}{@{}llrrr@{}}
\toprule
Dispatch & \(\theta\) & \shortstack{RMST difference\\(95\% CI)} & \shortstack{Control\\recovery (\%)} & \shortstack{Projection\\recovery (\%)}\\
\midrule
No drop & 0.04 & $56.50\;[46.00,67.25]$ & 71.48 & 59.38\\
 & 0.05 & $44.25\;[31.25,57.50]$ & 44.53 & 35.94\\
 & 0.06 & $25.00\;[15.25,37.50]$ & 22.27 & 18.36\\
 & 0.08 & $10.50\;[4.50,17.50]$ & 6.64 & 5.08\\
\midrule
Capacity \(1.25\) & 0.04 & $75.50\;[55.50,96.50]$ & 65.63 & 53.13\\
 & 0.05 & $59.00\;[39.50,81.00]$ & 40.63 & 32.03\\
 & 0.06 & $30.50\;[9.50,52.00]$ & 23.44 & 17.97\\
 & 0.08 & $15.50\;[4.00,29.00]$ & 10.16 & 6.25\\
\bottomrule
\end{tabular}
\end{table}

All contrasts and pointwise intervals remain above zero
(\Cref{sm:tab:threshold-sensitivity}). At higher thresholds, fewer
trajectories confirm recovery and both restricted means approach the
observation horizon, reducing the contrast within this window.

\begin{figure}[t]
\centering
\includegraphics[width=\linewidth]{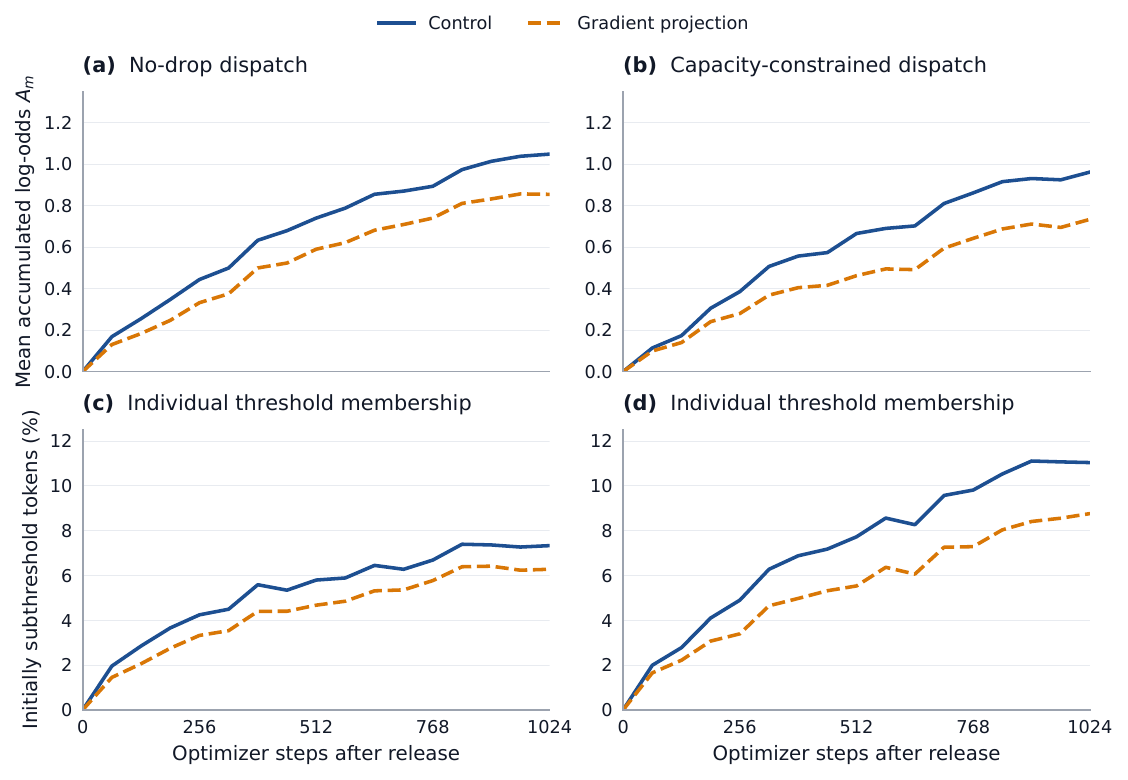}
\caption{Accumulated log-odds and fractions of tokens meeting the threshold.
\textbf{(a), (b)} Mean accumulated log-odds \(A_m\).
\textbf{(c), (d)} Fractions satisfying \(A_m(x)\geq B(x)\) at each
evaluation, computed before averaging. Each setting uses eight paired
seeds, with equal mass and seed weights; capacity factor is \(1.25\).}
\label{sm:fig:logodds-trajectories}
\end{figure}

\subsection{Recovery across initial probabilities}
\label{sec:sm-initial-mass}
\Cref{sm:tab:mechanism-mass-results} reports RMST at each initial
probability using the main analysis definition. All eight mean contrasts
favor earlier recovery under unmodified training.

\begin{table}[t]
\caption{Confirmed recovery by initial probability \(\epsilon\).
Descriptive RMST differences are projection minus control through 1,024
steps at threshold \(0.05\), averaging four layers within each seed and
then seeds equally. Percentages give confirmation by step 1,024.
No-drop and capacity settings use 16 and eight seeds, respectively;
capacity factor is \(1.25\).}
\label{sm:tab:mechanism-mass-results}
\centering
\small
\begin{tabular}{@{}llrrr@{}}
\toprule
\(\epsilon\) & Dispatch & \shortstack{RMST difference\\(steps)} & \shortstack{Control\\recovery (\%)} & \shortstack{Projection\\recovery (\%)}\\
\midrule
0.005 & No drop & $+14$ & 10.9 & 6.3\\
 & Capacity & $+22$ & 12.5 & 3.1\\
0.010 & No drop & $+33$ & 23.4 & 17.2\\
 & Capacity & $+48$ & 21.9 & 12.5\\
0.020 & No drop & $+62$ & 57.8 & 39.1\\
 & Capacity & $+94$ & 46.9 & 37.5\\
0.040 & No drop & $+68$ & 85.9 & 81.3\\
 & Capacity & $+72$ & 81.3 & 75.0\\
\bottomrule
\end{tabular}
\end{table}

\subsection{Token-level log-odds trajectories}
\label{sm:sec:logodds-trajectories}
The trajectories complement \Cref{tab:logodds-intervention} and use eight
paired seeds per dispatch setting. The no-drop measurements use a subset
of the RMST sample; capacity measurements use separate runs with the same
seed values.
We follow the first 128 active selected-class validation tokens in each
middle layer, with fixed identities across times and branches. For a token
\(x\) at evaluation index \(m\), define
\[
 A_m(x)=\operatorname{logit}(p_j^m(x))-\operatorname{logit}(p_j^0(x)),
 \qquad B(x)=\operatorname{logit}(0.05)-\operatorname{logit}(p_j^0(x)).
\]
By the discrete log-odds identity,
\(A_m(x)=\sum_{r<m}(u_j^r(x)-c_j^r(x))\), and
\(p_j^m(x)\geq0.05\) exactly when \(A_m(x)\geq B(x)\).
Here \(m\) indexes 64-step evaluations; \(u_j^m\) and \(c_j^m\) are the
corresponding logit and complementary log-sum-exp increments.

We retain tokens with \(p_j^0(x)<0.05\), determined before either branch
evolves. Within each seed and initial mass, we pool retained tokens
across layers, then weight masses and seeds equally.
\Cref{sm:fig:logodds-trajectories} shows mean accumulated changes and
the fraction of tokens meeting the threshold.

The projection branch has smaller accumulated log-odds gains and a lower
fraction of tokens meeting the threshold. These token-level responses
accompany the confirmed-recovery delay in the main analysis, connecting
the gradient intervention to both recovery measures.

\FloatBarrier

\end{document}